\documentclass[reqno]{amsart}

%

%
%
%

\usepackage[T1]{fontenc}
\usepackage[utf8]{inputenc}
\usepackage{lmodern}
\usepackage{mathrsfs}  
\usepackage{hyperref}
\usepackage{graphicx}
\usepackage[english]{babel}
\usepackage{placeins}
\usepackage{amsmath}
\usepackage{amssymb}
\usepackage{graphicx}
\usepackage{dsfont}
\usepackage{nicefrac}
\usepackage{multirow}
\usepackage{float}
\usepackage{lmodern}
\usepackage{mathrsfs}  
\usepackage{mathtools}
\usepackage{emptypage}
\usepackage{amsfonts}
\usepackage{ltablex}
\usepackage{cleveref}
\usepackage{changes}
\definechangesauthor[name=Diego, color=red]{DM}
\usepackage{comment}
\usepackage[noend]{algorithmic}
\usepackage{algorithm}
\usepackage{setspace}
\usepackage{rotating}
\usepackage{tikz}
\usepackage{lineno}
\usepackage{xcolor}
\usetikzlibrary{decorations.pathreplacing}
\definecolor{color1bg}{HTML}{FA8072}
\definecolor{bblue}{HTML}{00BFFF}


\makeindex
\allowdisplaybreaks
\usetikzlibrary{arrows,calc}
\tikzset{
	>=stealth',
	help lines/.style={dashed, thick},
	axis/.style={<->},
	important line/.style={thick},
	connection/.style={thick, dotted},
}

\usetikzlibrary{shadows}
\tikzset{
	diagonal fill/.style 2 args={fill=#2, path picture={
			\fill[#1, sharp corners] (path picture bounding box.south west) -|
			(path picture bounding box.north east) -- cycle;}},
	reversed diagonal fill/.style 2 args={fill=#2, path picture={
			\fill[#1, sharp corners] (path picture bounding box.north west) |- 
			(path picture bounding box.south east) -- cycle;}}
}


\DeclareMathOperator*{\argminA}{arg\,min}
\DeclareMathOperator*{\argmaxA}{arg\,max}
\DeclareMathOperator*{\arginfA}{arg\,min}

\newtheorem{theorem}{Theorem}
\newtheorem{proposition}{Proposition}
\newtheorem{corollary}{Corollary}
\newtheorem{remark}{Remark}
\newtheorem{definition}{Definition}
\newtheorem{example}{Example}




\title[Learning the hypotheses space from data]{Learning the hypotheses space from data through a U-curve algorithm}

\author{Diego Marcondes, Adilson Simonis and Junior Barrera}

\address{Institute of Mathematics and Statistics\\
	University of S\~ao Paulo\\
	S\~ao Paulo, SP 05508-090, Brazil.
	E-mail: dmarcondes@ime.usp.br}

\begin{document}	
	
	\begin{abstract}
	This paper proposes a data-driven systematic, consistent and non-exhaustive approach to Model Selection, that is an extension of the classical agnostic PAC learning model. In this approach, learning problems are modeled not only by a hypothesis space $\mathcal{H}$, but also by a Learning Space $\mathbb{L}(\mathcal{H})$, a poset of subspaces of $\mathcal{H}$, which covers $\mathcal{H}$ and satisfies a property regarding the VC dimension of related subspaces, that is a suitable algebraic search space for Model Selection algorithms. Our main contributions are a data-driven general learning algorithm to perform implicitly regularized Model Selection on $\mathbb{L}(\mathcal{H})$ and a framework under which one can, theoretically, better estimate a target hypothesis with a given sample size by properly modeling $\mathbb{L}(\mathcal{H})$ and employing high computational power. A remarkable consequence of this approach are conditions under which a non-exhaustive search of $\mathbb{L}(\mathcal{H})$ can return an optimal solution. The results of this paper lead to a practical property of Machine Learning, that the lack of experimental data may be mitigated by a high computational capacity. In a context of continuous popularization of computational power, this property may help understand why Machine Learning has become so important, even where data is expensive and hard to get.\\
		
		\textbf{Keywords:} Model Selection, Regularization, Machine Learning, U-curve Problem, VC Theory
	\end{abstract}

	\maketitle
	
	\section{Introduction}
	\label{Introduction}
	
	The state-of-the-art in Machine Learning is a model $(\mathcal{H},\mathbb{A},\mathcal{D}_{N})$ composed by a set $\mathcal{H}$ of hypotheses $h$, which are functions from $\mathcal{X} \subset \mathbb{R}^{d}, d \geq 1$, to $\mathcal{Y} = \{0,1\}$, called hypotheses space, and a learning algorithm $\mathbb{A}(\mathcal{H},\mathcal{D}_{N})$, which processes a training sample $\mathcal{D}_{N} = \{(X_{1},Y_{1}),\dots,(X_{N},Y_{N})\}$ of a random vector $(X,Y)$, with range $\mathcal{X} \times \mathcal{Y}$ and joint probability distribution $P$, and searches $\mathcal{H}$ seeking to minimize an error measure that assesses how good each $h \in \mathcal{H}$ predicts the values of $Y$ from instances of $X$. 
	
	Let $\ell: \mathcal{Y} \times \mathcal{Y} \mapsto \mathbb{R}_{+}$ be a loss function. The error of a hypothesis $h \in \mathcal{H}$ is the expected value of the local measures $\ell(h(x),y), (x,y) \in \mathcal{X} \times \mathcal{Y}$, and can be of at least two types: the in-sample error $L_{\mathcal{D}_{N}}(h)$, which is the sample mean of $\ell(h(x),y)$ under $\mathcal{D}_{N}$, and the out-of-sample error $L(h)$, the expectation of $\ell(h(X),Y)$ under the joint distribution $P$. In this context, a target hypothesis $h^{\star} \in \mathcal{H}$ is such that its out-of-sample error is minimum in $\mathcal{H}$, i.e., $L(h^{\star}) \leq L(h), \forall h \in \mathcal{H}$. The algorithm returns $\hat{h}(\mathbb{A}) \in \mathcal{H}$ seeking to approximate a target hypothesis $h^{\star} \in \mathcal{H}$. Such learning model has an important parameter that is problem-specific: the hypotheses space $\mathcal{H}$, which has a strong impact on the generalization quality of the model, that is, the correctness of the classification of non observed instances.
	
	The fundamental result in Machine Learning is the Vapnik-Chervonenkis (VC) Theory \cite{vapnik1971uniform,vapnik1974theory,vapnik1974ordered,vapnik1974ordered2,vapnik1998,vapnik2000}, which implies that a hypotheses space $\mathcal{H}$ is PAC learnable \cite{valiant1984} if, and only if, it has finite VC dimension ($d_{VC}(\mathcal{H}) < \infty$) \cite[Theorem~6.7]{shalev2014}. This means that for any joint distribution $P$, $L_{\mathcal{D}_{N}}$ generalizes, that is, $L$ may be uniformly approximated by $L_{\mathcal{D}_{N}}$ in $\mathcal{H}$ with a given precision and great confidence if $N$ is sufficiently large. Therefore, it is possible to learn hypotheses with a finite sample, with precision and confidence dependent on the training sample size and the VC dimension, i.e., complexity, of $\mathcal{H}$. 
	
	The VC theory is general, has a structural importance to the field, and is a useful guide for modeling practical problems. However, since $N(d_{VC}(\mathcal{H}),\epsilon,\delta)$, the least $N$, given $\mathcal{H}, \epsilon$, and $\delta$, under VC theory bounds such that
	\begin{equation}
	\label{typeI}
	\mathbb{P}\Big(\sup\limits_{h \in \mathcal{H}} \Big|L(h) - L_{\mathcal{D}_{N}}(h)\Big| > \epsilon\Big) \leq \delta,
	\end{equation}
	is not a tight bound, as it is distribution-free, it is usually a meaningless quantity in real application problems. In fact, the sample size $N$ depends on data availability, which may be conditioned upon several factors such as technical difficulties and costs. Thus, parameters $(N,\epsilon,\delta)$ are usually predetermined, so the only free component to be adjusted on VC theory's bounds is the hypotheses space $\mathcal{H}$ or, more precisely, its VC dimension. This adjustment is usually performed via a data-driven selection of $\mathcal{H}$, known in the literature of the field as Model Selection (see \cite{guyon2010,raschka2018,ding2018,massart2007} for a review of Model Selection techniques). 
	
	To select $\mathcal{H}$ based on data, one might apply a combinatorial algorithm, which searches a family of candidate hypotheses spaces seeking to minimize an estimator of the out-of-sample error of the optimal hypothesis of each space. In other words, given a family $\{\mathcal{M}_{1},\dots,\mathcal{M}_{n}\}$ of hypotheses spaces, a sample of size $N$ and a consistent estimator $\hat{L}$ of the out-of-sample error, often given by an independent validation sample or cross-validation, such algorithm returns a space $\mathcal{M}$, whose estimated optimal hypothesis is somewhat the best estimator for a target hypothesis. 
	
	The VC dimension of the selected hypotheses space will be adequate, in some sense, to learn with a sample of size $N$ for this particular unknown distribution $P$. If the candidate models are nested, i.e., $\mathcal{M}_{1} \subset \cdots \subset \mathcal{M}_{n}$, then a method based on the Structured Risk Minimization (SRM) Inductive Principle may be applied to solve this problem (see \cite[Chapter~4]{vapnik2000} for more details and \cite{anguita2012} for an example). Furthermore, this problem may also be tackled by penalizing the in-sample error according to the complexity of each model, in both nested and non-nested frameworks (see \cite{massart2007} for a in-depth presentation of Model Selection by penalization and \cite{koltchinskii2001,koltchinskii2011,agarwal2011,arlot2011,bartlett2008} for results in more specific learning frameworks). Moreover, the classical problem of feature selection \cite{guyon2003,john1994} constitutes another framework for Model Selection, in which a family of partially ordered constrained hypotheses spaces is generated through elimination of features. 
	
	A limitation of feature selection is that the family of candidate hypotheses spaces is too constrained, so it may not be sharp enough for some problems of interest, while the limitation of other common methods are the restriction to a nested family of candidate models. In this paper, we propose a family of hypotheses spaces, called Learning Space, which can be designed with adequate constraints for each class of problems, and has properties which one can take advantage of to implement Model Selection algorithms more efficient than an exhaustive search of the candidate models. In this context, feature selection, SRM and penalization methods become particular cases.
	
	We propose an extension of the classical learning model, defining the model $(\mathcal{H}, \mathbb{L}(\mathcal{H}),\mathbb{A},\mathcal{D}_{N})$ composed by a hypotheses space $\mathcal{H}$; a Learning Space $\mathbb{L}(\mathcal{H})$, a poset of subspaces of $\mathcal{H}$, which covers $\mathcal{H}$, and satisfies a property regarding the VC dimension of related subspaces; and a learning algorithm $\mathbb{A}(\mathbb{L}(\mathcal{H}),\mathcal{D}_{N})$, which processes $\mathbb{L}(\mathcal{H})$ and a training sample $\mathcal{D}_{N}$, and returns $\hat{\mathcal{M}} \in \mathbb{L}(\mathcal{H})$, a subspace of $\mathcal{H}$ with nice properties, and $\hat{h}_{\hat{\mathcal{M}}}(\mathbb{A}) \in \hat{\mathcal{M}}$, a hypothesis that seeks to approximate the target $h^{\star}$ of $\mathcal{H}$. Under this framework, the learning of hypotheses is performed in two consecutive steps: one first learn a hypotheses space $\hat{\mathcal{M}}$ among the candidates in $\mathbb{L}(\mathcal{H})$, and then learn a hypothesis $\hat{h}(\mathbb{A}) \in \hat{\mathcal{M}}$ on it.
	
	Our framework is based on properties of VC dimension and can be applied to a great range of problems, including various kinds of hypotheses algebraic representations and learning algorithms, e.g., feature selection, neural networks \cite{neural1994,aggarwal2018} and lattice-based methods \cite{reis2018}. The classical learning model requires that a particular hypotheses space $\mathcal{H}$, and maybe a family of candidate models which usually does not have an algebraic structure more complex than a single chain of nested models, is selected \textit{a priori}, while this new approach has the feature of searching for target hypotheses in a family of hypotheses spaces given by $\mathbb{L}(\mathcal{H})$, a Learning Space of $\mathcal{H}$, which has a much richer structure than a single chain, as is usually a complete lattice, a semi-lattice or a poset. 
	
	An interesting feature of Model Selection via Learning Spaces is an implicit regularization. When we consider candidate models with distinct complexities and optimize over them an error measure based on a validation method, we have an implicit regularization in the sense of avoiding selecting too complex models that would lead to overfitting, and favoring simpler models which may better capture the patterns of the data, hence better generalize. In this sense, Model Selection via Learning Spaces may be considered an \textit{implicit complexity regularizer}, where the regularization is due to considering a great family of candidate models with distinct complexities and optimizing an error measure which tries to protect against overfitting. This differs from typical regularization procedures which often penalize an error measure by the complexity of each hypothesis and optimize the penalized error over $\mathcal{H}$, usually learning directly a hypothesis without searching for a subspace of $\mathcal{H}$ first \cite{bickel2006,micchelli2005,ng2004,oneto2016,massart2007}. 
	
	As opposed to influential and important methods for Model Selection based on penalization of loss functions (as those in \cite{massart2007,koltchinskii2001,koltchinskii2011,arlot2011,agarwal2011,bartlett2008} and many others), the spirit of the framework proposed here is to \textit{not} penalize the loss function, but rather obtain a general and consistent scheme for Model Selection based on resampling techniques such as cross-validation. Hence, in this scenario, the obtained regularization is implicit, since it is not explicitly considered in the loss function by penalizing it. However, although not in the spirit of the paper, penalized methods also fit into the framework. We discuss why this is the case in Section \ref{disPen} and leave a further study of penalization methods under the Learning Space framework for future researches.
	
	When selecting a hypotheses space, one should mind the estimation errors of learning on a given space. At principle, when one learns on $\mathcal{H}$, disregarding any hypothesis not in it, he commits two errors
	\begin{align}
	\label{intro_type1}
	\sup\limits_{h \in \mathcal{H}} \big|L_{\mathcal{D}_{N}}(h) - L(h) \big| & & \text{ and } & & L(\hat{h}(\mathbb{A})) - L(h^{\star}),
	\end{align}
	which we call type I and type II estimation errors, respectively. If type I estimation error is small, then we can estimate the out-of-sample error of any hypothesis in $\mathcal{H}$ by the empirical error with great precision. If type II estimation error is small, then the hypothesis $\hat{h}(\mathbb{A})$, estimated by the algorithm $\mathbb{A}$, well approximates a target $h^{\star}$. Since, fixed $\epsilon$ and the sample size $N$, the VC bounds for the tail probabilities (cf. \eqref{typeI}) of both errors in (\ref{intro_type1}) are usually increasing functions of VC dimension \cite{vapnik1998,devroye1996}, the smaller the hypotheses space is, in the VC dimension sense, the lesser are the estimation errors on it, with high probability, so that we may regulate the VC dimension of the hypotheses space to better generalize. 
	
	This may be accomplished by selecting a proper subset $\mathcal{M} \subset \mathcal{H}$ on which to learn. However, when we restrict the learning to such subspace, we commit another two errors, which we call types III and IV estimation errors, that are, respectively,
	\begin{align*}
	L(h^{\star}_{\mathcal{M}}) - L(h^{\star}) & & \text{ and } & & L(\hat{h}_{\mathcal{M}}(\mathbb{A})) - L(h^{\star}),
	\end{align*}
	in which $h^{\star}_{\mathcal{M}}$ is a target hypothesis of $\mathcal{M}$ and $\hat{h}_{\mathcal{M}}(\mathbb{A})$ is the hypothesis of $\mathcal{M}$ estimated by the algorithm $\mathbb{A}$. If type III estimation error is small, then a target hypothesis $h^{\star}_{\mathcal{M}}$ of $\mathcal{M}$ well approximates a target hypothesis $h^{\star}$ of $\mathcal{H}$. If type IV estimation error is small, then the estimated hypothesis $\hat{h}_{\mathcal{M}}(\mathbb{A})$ of $\mathcal{M}$ well approximates a target of $\mathcal{H}$. If both types III and IV estimation errors are small, then it is feasible to learn on $\mathcal{M}$. 
	
	If there is no prior information about the target $h^{\star}$ which allows us to consider a subset of $\mathcal{H}$ such that $\mathcal{M} \ni h^{\star}$, so type III estimation error is zero and type IV reduces to type II\footnote{In this case, type II estimation error is $L(\hat{h}_{\mathcal{M}}(\mathbb{A})) - L(h^{\star}_{\mathcal{M}})$, which is equal to type IV if $h^{\star} \in \mathcal{M}$.}, it may not be possible to restrict $\mathcal{H}$ beforehand and still estimate a good hypothesis relatively to $h^{\star}$. However, we may learn on a random subset $\hat{\mathcal{M}} \subset \mathcal{H}$ in a manner such that all four estimation errors are asymptotically zero, i.e., tend in probability to zero as the sample size increases. Such a subset is random, for it depends on sample $\mathcal{D}_{N}$: $\hat{\mathcal{M}}$ is learned from data.
	
	We show that the framework for Model Selection based on Learning Spaces learn a subspace $\hat{\mathcal{M}}$ which is such that types I and II estimation errors tend to be smaller than on $\mathcal{H}$ and types III and IV estimation errors are asymptotically zero. We show that, in the proposed approach, all estimation errors converge to zero when the sample size tends to infinity, and that $\hat{\mathcal{M}}$ converges with probability one to a target subspace $\mathcal{M}^{\star}$ of $\mathcal{H}$, which is the subspace in $\mathbb{L}(\mathcal{H})$ with the least VC dimension that contains a target hypothesis $h^{\star}$ (cf. Figure \ref{paradigms}). We say that $\hat{\mathcal{M}}$ is statistically consistent if it satisfies these two properties. Our approach does not demand the specification of a hypotheses space $\mathcal{M}$ \textit{a priori}, but rather introduces the learning of a hypotheses space from data among those in $\mathbb{L}(\mathcal{H})$ as a mean to better learn hypotheses, so prior information is all embed in $\mathbb{L}(\mathcal{H})$.
	
	The target hypotheses space is central in our approach. As is the case in all optimization problems, there must be an optimal solution to the Model Selection problem which satisfies certain desired conditions. In the proposed framework, the optimal solution is the subspace in $\mathbb{L}(\mathcal{H})$ with the least VC dimension which contains a target hypothesis (see Figure \ref{paradigms}). From the perspective of estimation errors, this subspace provides the best circumstances in $\mathbb{L}(\mathcal{H})$ to learn hypotheses with a fixed sample of size $N$. On the one hand, type III estimation error is zero and type IV reduces to type II. On the other hand, the VC dimension is minimal under these constraints, so the bounds for the tail probabilities of types I and II estimation errors are tightest. 
	
	The concept of target hypotheses space brings a new learning paradigm, under which one seeks to estimate this hypotheses space to better estimate a target hypothesis with a fixed sample size. Throughout this paper, we present the main results from the perspective of this paradigm, offering a guide on how one can, theoretically, better estimate, without having to increase the sample size, by incorporating prior knowledge about the problem at hand into $\mathbb{L}(\mathcal{H})$ and employing high computational power.
	
	Indeed, apart from the statistical consistency of the method, it is important to take into account the computational aspects of learning hypotheses via Learning Spaces. At principle, to select a model from $\mathbb{L}(\mathcal{H})$, one would have to apply a combinatorial algorithm that performs an exhaustive search of it, looking for the model which minimizes some error measure. If the cardinality of $\mathbb{L}(\mathcal{H})$ is too great, which will be often the case, this exhaustive search cannot be performed, so computing $\hat{\mathcal{M}}$ is not feasible. Nevertheless, due to the structure of some Learning Spaces, there may exist non-exhaustive algorithms to compute $\hat{\mathcal{M}}$, so, even if these algorithms are highly complex, they may still be employed to solve practical problems when high computational power is available.
	
	In this paper, we define the U-curve property that, when satisfied, allows a non-exhaustive calculation of $\hat{\mathcal{M}}$ via a U-curve algorithm. The property is a rigorous mathematical definition of a phenomenon intuitively related to the bias-variance trade-off or Occam's razor \cite{belkin2019,friedman1997}, in which the estimated error of a model decreases with its complexity up to a point when there is an inflection point, and the error starts increasing with the complexity, forming a U-shaped curve. This heuristic behavior, which is supported by empirical evidence, but often does not have a mathematical proof, is rigorously defined here for models organized in a lattice and proved to be satisfied in certain instances.
	
	We show that a specific Learning Space satisfies the U-curve property and establish a sufficient condition for it that is closely related to convexity, but under a lattice algebra. This is, to our knowledge, the first rigorous result in the literature asserting general conditions under which a non-exhaustive search of candidate models for the purpose of Model Selection returns an optimal solution. We then briefly discuss how one may take advantage of this property to develop U-curve algorithms that return $\hat{\mathcal{M}}$ if the U-curve property is satisfied, but may also be employed efficiently to obtain suboptimal solutions when the property is not satisfied, as has been done for feature selection, where the candidate models form a Boolean lattice (see \cite{u-curve3,featsel,reis2018,u-curve1,ucurveParallel} for more details). 
	
	Following this Introduction, Section \ref{GE} presents the formal definition of Learning Space, as well as examples. Section \ref{LearningFramework} presents the framework of learning hypotheses via Learning Spaces, while Section \ref{SecConsistency} shows the statistical consistency of such framework. Section \ref{SecUproperty} presents the U-curve property, which enables a non-exhaustive search of $\mathbb{L}(\mathcal{H})$ to estimate $\hat{\mathcal{M}}$ via a U-curve algorithm, further discussed in Section \ref{U-curve}. In the Discussion, we review the main contributions of this paper and future research perspectives. Proofs of the results are presented on the Appendix.
	
	\section{Learning Spaces}
	\label{GE}
	
	In this section, we first present the main concepts related to the learning of hypotheses, and discuss how to estimate the error of a hypotheses space. Then, we define the Learning Spaces and present an intuitive manner of building them, as well as examples. At last, we define the minimums of a Learning Space, and present the central notion of target hypotheses space.
	
	\subsection{Learning of hypotheses}
	
	Let $X$ be a random vector and $Y$ a random variable defined on a same probability space $(\Omega,\mathcal{S},\mathbb{P})$, with ranges $\mathcal{X} \subset \mathbb{R}^{d}, d \geq 1$, and $\mathcal{Y} = \{0,1\}$, respectively. We denote $P(x,y) \coloneqq \mathbb{P}(X \leq x,Y \leq y)$ as the joint probability distribution of $(X,Y)$ at point $(x,y) \in \mathcal{X} \times \mathcal{Y}$. We define a sample $\mathcal{D}_{N} = \{(X_{1},Y_{1}), \dots, (X_{N},Y_{N})\}, N \geq 1,$ as a sequence of independent and identically distributed random vectors defined on $(\Omega,\mathcal{S},\mathbb{P})$ with joint distribution $P$. Throughout this paper, we assume that distribution $P$ is unknown, but fixed.
	
	Let $\mathcal{H}$ be a general hypotheses space, whose typical element $h$ is a function $h: \mathcal{X} \mapsto \mathcal{Y}$. We only make a mild measurability assumption about $\mathcal{H}$: it is an arbitrary set of functions $h$ with domain $\mathcal{X}$ and image $\mathcal{Y}$ such that $h \circ X$ is $(\Omega,\mathcal{S})$-measurable. We denote subsets of $\mathcal{H}$ as $\mathcal{M}_{i}$ indexed by the positive integers, i.e., $i \in \mathbb{Z}_{+}$. We may also denote a subset of $\mathcal{H}$ by $\mathcal{M}$ to ease notation. Whenever we say \textit{model} throughout the paper, we mean a subset of $\mathcal{H}$.
	
	For each hypothesis in $\mathcal{H}$, we assign a value indicating the loss incurred employing it as a predictor for $Y$ from the values of $X$. Let $\ell: \mathcal{Y} \times \mathcal{Y} \mapsto \mathbb{R}_{+}$ be a positive measurable loss function. The out-of-sample error, also known in the literature as risk or loss, of a hypothesis $h \in \mathcal{H}$ is defined as
	\begin{equation*}
	L(h) \coloneqq \mathbb{E}[\ell(h(X),Y)] = \int_{\mathcal{X} \times \mathcal{Y}} \ell(h(x),y) \ dP(x,y),
	\end{equation*}
	in which $\mathbb{E}$ means expectation under $\mathbb{P}$. We also consider the in-sample or empirical error of a hypothesis $h$ under sample $\mathcal{D}_{N}$, defined as
	\begin{equation*}
	L_{\mathcal{D}_{N}}(h) \coloneqq \frac{1}{N} \sum_{l=1}^{N} \ell(h(X_{l}),Y_{l}),
	\end{equation*}
	that is the mean of $\ell(h(x),y)$ on sample $\mathcal{D}_{N}$. Throughout this paper, we consider $\ell$ to be the simple loss function $\ell(h(x),y) = \mathds{1}\{h(x) \neq y\}$, so that $L(h)$ is the classification error $\mathbb{P}(h(X) \neq Y)$.
	
	The main goal of Machine Learning is to approximate target hypotheses, that are hypotheses in $\mathcal{H}$ which minimize the out-of-sample error. These hypotheses are in the set
	\begin{equation*}
	h^{\star} \coloneqq \arginfA\limits_{h \in \mathcal{H}} L(h).
	\end{equation*}
	Furthermore, we will also be interested in target hypotheses of subsets of $\mathcal{H}$ which are in
	\begin{align*}
	h^{\star}_{i} \coloneqq \arginfA\limits_{h \in \mathcal{M}_{i}} L(h) & & h^{\star}_{\mathcal{M}} \coloneqq \arginfA\limits_{h \in \mathcal{M}} L(h),
	\end{align*}
	depending on the subset. Since $\mathcal{H}$ may be a proper subset of the space of all functions with domain $\mathcal{X}$ and image $\mathcal{Y}$, there may exist a $f: \mathcal{X} \mapsto \mathcal{Y}, f \notin \mathcal{H}$, with $L(f) < L(h^{\star})$. However, in this paper, we focus on approximating the best hypotheses in $\mathcal{H}$, so we disregard all hypotheses not in it. 
	
	Under the Empirical Risk Minimization (ERM) paradigm \cite{vapnik2000} with sample $\mathcal{D}_{N}$, the target hypotheses are estimated by the ones in 
	\begin{align*}
	\hat{h}(\mathcal{D}_{N}) \coloneqq \arginfA\limits_{h \in \mathcal{H}} L_{\mathcal{D}_{N}}(h),
	\end{align*}	
	while the estimated optimal hypotheses of models are in
	\begin{align*}
	\hat{h}_{i}(\mathcal{D}_{N}) \coloneqq \arginfA\limits_{h \in \mathcal{M}_{i}} L_{\mathcal{D}_{N}}(h) & & \hat{h}_{\mathcal{M}}(\mathcal{D}_{N}) \coloneqq \arginfA\limits_{h \in \mathcal{M}} L_{\mathcal{D}_{N}}(h).
	\end{align*}
	We assume the minimum of $L$ and $L_{\mathcal{D}_{N}}$ is achieved in $\mathcal{H}$ and in all subspaces of it that we consider throughout this paper, so the sets above are not empty.
	
	\subsubsection{VC dimension}
	
	The framework proposed in this paper relies on the complexity of hypotheses spaces, that may be measured by their VC dimension, which is as follows.
	
	\begin{definition}
		\label{VCdimension} \normalfont
		The $k$-th shatter coefficient of a hypotheses space $\mathcal{H}$ is defined as
		\begin{equation*}
		S(\mathcal{H},k) = \max\limits_{(x_{1},\dots,x_{k}) \in \mathcal{X}^{k}} \Big|\big\{\big(h(x_{1}),\dots,h(x_{k})\big): h \in \mathcal{H}\big\}\Big|,
		\end{equation*}
		in which $|\cdot|$ is the cardinality of a set. The VC dimension of a hypotheses space $\mathcal{H}$ is the greatest integer $k \geq 1$ such that $S(\mathcal{H},k) = 2^{k}$ and is denoted by $d_{VC}(\mathcal{H})$. If $S(\mathcal{H},k) = 2^{k}$ for all integer $k \geq 1$, we denote $d_{VC}(\mathcal{H}) = \infty$.
	\end{definition} 
	
	The VC dimension assesses the capability of the functions in $\mathcal{H}$ in classifying instances with values in $\mathcal{X}$ into the categories $\mathcal{Y}$. The VC dimension plays a central role in Machine Learning Theory, as it is a measure of the complexity of a hypotheses space, which may be employed to bound estimation errors involved in the learning of hypotheses. Furthermore, the VC dimension allows us to define a Learning Space as a decomposition of $\mathcal{H}$ based on it: the VC dimension rules the complexity of subspaces, enabling the decomposition of $\mathcal{H}$ into subspaces of different complexities.
	
	\subsection{Model error estimation}
	\label{esti_Lhat}
	
	In order to compare subspaces of $\mathcal{H}$ for Model Selection, one needs to estimate the error of them, which is defined as
	\begin{equation*}
	L(\mathcal{M}) \coloneqq  \inf\limits_{h \in \mathcal{M}} L(h) = L(h^{\star}_{\mathcal{M}}),
	\end{equation*}
	for any $\mathcal{M} \subset \mathcal{H}$. Even though $\hat{h}_{\mathcal{M}}(\mathcal{D}_{N})$ is a consistent estimator of $h^{\star}_{\mathcal{M}}$, the substitution error $L_{\mathcal{D}_{N}}(\hat{h}_{\mathcal{M}}(\mathcal{D}_{N}))$ is generally an optimistically biased estimator of $L(\mathcal{M})$, specially if the sample size is relatively small \cite[Section 2.4]{estimation}. Furthermore, since $L_{\mathcal{D}_{N}}(\hat{h}_{\mathcal{M}_{1}}(\mathcal{D}_{N})) \geq L_{\mathcal{D}_{N}}(\hat{h}_{\mathcal{M}_{2}}(\mathcal{D}_{N}))$ if $\mathcal{M}_{1} \subset \mathcal{M}_{2}$, selecting models based on this error is susceptible to \textit{overfitting}, as minimizing it leads to the selection of more complex models, which may explain the sample very well, but do not generalize well to non-observed data. 
	
	The framework proposed in this paper is not dependent on any specific estimator for $L(\mathcal{M})$, since it may be carried out employing many types of estimators. To illustrate the method, we consider two common estimators for $L(\mathcal{M})$ based on an independent validation sample and cross-validation, which we define below. Other estimators, for instance based on a Bootstrap technique \cite{estimation,efron1979,efron1983,efron1997,kohavi1995}, could also be employed. When there is no need to specify which estimator of $L(\mathcal{M})$ we are referring, we denote simply $\hat{L}(\mathcal{M})$ to mean an arbitrary estimator.
	
	\subsubsection{Validation Sample}
	
	Fix a sequence $\{V_{N}: N \geq 1\}$ such that $\lim_{N \rightarrow \infty} V_{N} = \lim\limits_{N \to \infty} N - V_{N} = \infty$, and let $\mathcal{D}_{N}^{(\text{train})} = \{(X_{l},Y_{l}): 1 \leq l \leq N - V_{N}\}$ and $\mathcal{D}_{N}^{(\text{val})} = \{(X_{l},Y_{l}): N - V_{N} < l \leq N\}$ be a split of $\mathcal{D}_{N}$ into a training and validation sample. Proceeding in this manner, we have two samples $\mathcal{D}_{N}^{(\text{train})}$ and $\mathcal{D}_{N}^{(\text{val})}$ which are independent. The estimator under the validation sample is given by
	\begin{linenomath}
		\begin{equation}
		\label{VALLhat}
		\hat{L}_{\text{val}}(\mathcal{M}) \coloneqq L_{\mathcal{D}_{N}^{(\text{val})}}(\hat{h}_{\mathcal{M}}^{(\text{train})}) =  \frac{1}{V_{N}} \sum_{N- V_{N} < l \leq N}  \ell\big(\hat{h}_{\mathcal{M}}^{(\text{train})}(X_{l}),Y_{l}\big),
		\end{equation}
	\end{linenomath}
	in which $\hat{h}_{\mathcal{M}}^{(\text{train})} = \arginfA_{h \in \mathcal{M}} L_{\mathcal{D}_{N}^{(\text{train})}}(h)$ is the ERM hypothesis of $\mathcal{M}$ under $\mathcal{D}_{N}^{(\text{train})}$.
	Instances in the validation sample are not in the training data, hence may provide less biased information about the generalization quality of $\hat{h}_{\mathcal{M}}^{(\text{train})}$. This estimator is specially useful when there is a great sample available, so one can divide it in training and validation samples with great size themselves. However, when there is little data available, a method based on resampling may perform better \cite{molinaro2005}.
	
	\subsubsection{k-fold cross-validation}
	
	Fix $k \in \mathbb{Z}_{+}$ and assume $N \coloneqq kn$, for a $n \in \mathbb{Z}_{+}$. Then, let $\mathcal{D}_{N}^{(j)} \coloneqq \{(X_{l},Y_{l}): (j-1)n < l \leq jn\}, j = 1,\dots,k$, be a partition of $\mathcal{D}_{N}$: $\mathcal{D}_{N} = \bigcup_{j=1}^{k} \mathcal{D}_{N}^{(j)}$ and $\mathcal{D}_{N}^{(j)} \cap \mathcal{D}_{N}^{(j^{\prime})} = \emptyset$ if $j \neq j^{\prime}$. We define
	\begin{equation*}
	\hat{h}^{(j)}_{\mathcal{M}} \coloneqq \arginfA_{h \in \mathcal{M}} \ L_{\mathcal{D}_{N}\setminus\mathcal{D}_{N}^{(j)}}(h) = \arginfA_{h \in \mathcal{M}} \ \frac{1}{(k-1)n} \sum_{\substack{l \leq (j-1)n \\ \cup \ l > jn}} \ell(h(X_{l}),Y_{l})
	\end{equation*}
	as the ERM hypotheses of the sample $\mathcal{D}_{N}\setminus\mathcal{D}_{N}^{(j)}$, that is the sample composed by all folds but the $j$-th, and
	\begin{equation*}
	\hat{L}_{\text{cv(k)}}^{(j)}(\mathcal{M}) \coloneqq L_{\mathcal{D}_{N}^{(j)}}(\hat{h}^{(j)}_{\mathcal{M}}) =  \frac{1}{n} \sum_{(j-1)n < l \leq jn} \ell(\hat{h}^{(j)}_{\mathcal{M}}(X_{l}),Y_{l})
	\end{equation*}
	as the validation error of the $j$-th fold.
	
	The k-fold cross-validation estimator of $L(\mathcal{M})$ is then given by
	\begin{equation}
	\label{CVLhat}
	\hat{L}_{\text{cv(k)}}(\mathcal{M}) \coloneqq \frac{1}{k} \ \sum_{j=1}^{k} \hat{L}_{\text{cv(k)}}^{(j)}(\mathcal{M}),
	\end{equation}
	that is the average validation error over the folds. Estimator \eqref{CVLhat} seeks to diminish the bias of \eqref{VALLhat} by applying a resampling strategy and averaging the validation error over these samples \cite[Section~2.5]{estimation}. Although we focus on the k-fold, other cross validation methods \cite{arlot2010,stone1974} may also be employed in the framework for Model Selection proposed in this paper.
	
	\subsection{Learning Spaces}
	
	Let $\mathcal{H}$ be a general hypotheses space with $d_{VC}(\mathcal{H}) < \infty$ and $\mathbb{L}(\mathcal{H}) \coloneqq \{\mathcal{M}_{i}: i \in \mathcal{J} \subset \mathbb{Z}_{+}\}$ be a finite subset of the power set of $\mathcal{H}$, i.e., $\mathbb{L}(\mathcal{H}) \subset \mathcal{P}(\mathcal{H})$ and $|\mathcal{J}| < \infty$. We say that the poset $(\mathbb{L}(\mathcal{H}),\subset)$ is a Learning Space if
	
	\vspace{0.1cm}
	\begin{itemize}
		\item[(i)] $\bigcup\limits_{i \in \mathcal{J}} \mathcal{M}_{i} = \mathcal{H}$ 
		\item[(ii)] $\mathcal{M}_{1}, \mathcal{M}_{2} \in \mathbb{L}(\mathcal{H})$ and $\mathcal{M}_{1} \subset \mathcal{M}_{2}$ implies $d_{VC}(\mathcal{M}_{1}) < d_{VC}(\mathcal{M}_{2})$.
	\end{itemize}
	\vspace{0.1cm}
	We define the VC dimension of $\mathbb{L}(\mathcal{H})$ as
	\begin{equation*}
	d_{VC}(\mathbb{L}(\mathcal{H})) \coloneqq \max\limits_{i \in \mathcal{J}} d_{VC}(\mathcal{M}_{i}),
	\end{equation*}
	for which an upper bound is $d_{VC}(\mathcal{H})$.
	
	On the one hand, for $\mathbb{L}(\mathcal{H})$ to be a structuring of $\mathcal{H}$ it should cover $\mathcal{H}$, so the need for (i). On the other hand, condition (ii) implies that any element $\mathcal{M} \in \mathbb{L}(\mathcal{H})$ is maximal in the sense that there does not exist $\mathcal{M}' \in \mathbb{L}(\mathcal{H})$ such that $d_{VC}(\mathcal{M}') = d_{VC}(\mathcal{M})$ and $\mathcal{M}' \subset \mathcal{M}$, so it guarantees that if $\mathcal{M}_{1} \subset \mathcal{M}_{2}$ then the complexity of $\mathcal{M}_{2}$ is greater than that of $\mathcal{M}_{1}$. We note that one could choose $\{\mathcal{M}_{1},\dots,\mathcal{M}_{n}\}$ without thinking of it as a decomposition of a hypotheses space $\mathcal{H}$. Nevertheless, if condition (ii) is satisfied, then it would be a Learning Space of $\mathcal{H} = \cup_{i} \mathcal{M}_{i}$, so taking $\mathcal{H}$ as this union, the only non-trivial condition is (ii).
	
	As $\mathbb{L}(\mathcal{H})$ covers $\mathcal{H}$, when one searches for a hypotheses space in $\mathbb{L}(\mathcal{H})$ on which to learn, no hypothesis of $\mathcal{H}$ is lost beforehand, as there is no prior constraint which exclude hypotheses from it. Indeed, all hypotheses in the candidate models, hence all hypotheses in $\mathcal{H}$, are available to be estimated, since it is enough that a model which contains it is selected, and then it is learned from it. A constraint is added to $\mathcal{H}$ a \textit{posteriori}, and based on data, as the method to be proposed seeks to select, based solely on data, i.e., learn from data, the hypotheses space in $\mathbb{L}(\mathcal{H})$ on which to learn, that can be a constrained subspace $\mathcal{M} \subsetneq \mathcal{H}$.
	
	Although $\mathbb{L}(\mathcal{H})$ is not unique, i.e., there are multiple subsets of $\mathcal{P}(\mathcal{H})$ which are Learning Spaces, there are classes of Learning Spaces that have some properties which allow a non-exhaustive search of $\mathbb{L}(\mathcal{H})$ when looking for a target model (cf. Section \ref{SecUproperty}). The main class of Learning Spaces are the Lattice Learning Spaces, which have a lattice structure with useful properties that increase the performance of search algorithms (see \cite{davey2002} for a definition of lattice).
	
	\begin{definition}
		\label{lattice_LS} \normalfont
		Let $\mathbb{L}(\mathcal{H})$ be a Learning Space of $\mathcal{H}$. We say that $\mathbb{L}(\mathcal{H})$ is a Lattice Learning Space if $(\mathbb{L}(\mathcal{H}),\subset,\wedge,\vee,\mathcal{O},\mathcal{I})$ is a complete lattice, that is a poset with the least ($\mathcal{O}$) and greatest ($\mathcal{I}$) subset, and two operators defined for all pairs of elements in it: the supremum operator $\vee$ and the infimum operator $\wedge$.
	\end{definition} 
	
	In this paper, we consider only Lattice Learning Spaces, or posets of one, although the abstract framework is quite general and may also be applied to other cases.
	
	\begin{remark}
		\normalfont Although we consider the VC-dimension, other complexity measures of hypotheses spaces could be used to define the Learning Space. We considered the VC-dimension since it is a property of the hypotheses space and does not depend on the data unknown distribution $P$. We note the value of the VC dimension is not of importance to the algebraic aspect of the Learning Space definition, but only the fact that it increases when we consider nested models. Hence, any other complexity measure such that this increase is also observed for the chosen nested models would generate the same Learning Space.
	\end{remark}
	
	\subsection{Building Learning Spaces}
	
	The first step in building a Learning Space is fixing an algebraic parametric representation of the hypotheses in $\mathcal{H}$. Some important families of learning models, such as neural networks and lattice functions, have a particular algebraic structure, with a parametric representation and a corresponding optimization algorithm to estimate a target hypothesis from a given sample. In neural networks, the parameters represent the weight of synapses connections and the minimum bursting threshold. In lattice functions, the parameters are lattice intervals which represent a join or meet minimal representation. In the binary case, the lattice representation derives from Boolean Algebra. In the continuous or integer cases, it derives from lattice algebra. In the discrete case, the representation is finite, since the function domain $\mathcal{X}$ is finite. In the continuous case, there are topological conditions on the parametric representation of hypotheses which guarantee finite representations (see \cite{banon1991}). Therefore, all these learning models are parametric, and hence a Learning Space $\mathbb{L}(\mathcal{H})$ should represent sets of hypotheses in terms of their parameters. 
	
	The algebraic structure of $(\mathbb{L}(\mathcal{H}),\subset)$ may be defined from the learning model and algebraic representation fixed, in the following manner. Let $(\mathcal{F},\leq)$ be a poset, in which $\mathcal{F}$ is an arbitrary set with finite cardinality. Moreover, let $\mathcal{R}: \mathcal{F} \mapsto Im(\mathcal{R}) \subset \mathcal{P}(\mathcal{H})$ be a lattice isomorphism from set $(\mathcal{F},\leq)$ to $(Im(\mathcal{R}),\subset)$, a subset of the power set of $\mathcal{H}$ partially ordered by inclusion. This means that $\mathcal{R}$ is bijective and if $a,b \in \mathcal{F}, a \leq b$, then $\mathcal{R}(a) \subset \mathcal{R}(b)$, so $\mathcal{R}$ preserves the partial order $\leq$ on $\mathcal{F}$ as the partial order on $Im(\mathcal{R})$ given by inclusion. Then, if 
	\vspace{0.1cm}
	\begin{itemize}
		\item[(i)] $\bigcup\limits_{a \in \mathcal{F}} \mathcal{R}(a) = \mathcal{H}$ and
		\item[(ii)] $a,b \in \mathcal{F}, a \leq b,$ implies $d_{VC}(\mathcal{R}(a)) < d_{VC}(\mathcal{R}(b))$,
	\end{itemize}
	\vspace{0.15cm}
	we may define $\mathbb{L}(\mathcal{H}) \coloneqq Im(\mathcal{R})$ as a Learning Space of $\mathcal{H}$. Isomorphisms which satisfy these conditions play a central role in the theory, and hence we formally define them.
	
	\begin{definition}
		\normalfont
		Given a partially ordered set $(\mathcal{F},\leq)$, a Lattice isomorphism $\mathcal{R}: (\mathcal{F},\leq) \mapsto (\mathbb{L}(\mathcal{H}),\subset)$, with $\mathbb{L}(\mathcal{H}) \subset \mathcal{P}(\mathcal{H})$, which satisfies (i) and (ii) is called a \textit{Learning Space generator}.
	\end{definition}
	
	A Learning Space is completely defined by a triple $(\mathcal{F},\leq,\mathcal{R})$, in which the elements of $\mathcal{F}$ may be interpreted as sets of parameters which describe a subset of hypotheses, i.e., the hypotheses in $\mathcal{R}(a), a \in \mathcal{F},$ are represented by the parameters $a$, so that, in particular, $\mathcal{F}$ generates a parametric representation of the functions in $\mathcal{H}$. For this reason, we call $(\mathcal{F},\leq)$ a parametric poset of $\mathcal{H}$. Therefore, in general, to build a Learning Space of $\mathcal{H}$ we apply a generator to a parametric poset of its hypotheses. Furthermore, since the generator $\mathcal{R}$ is an isomorphism, it preserves properties of $(\mathcal{F},\leq)$, hence, for instance, by applying $\mathcal{R}$ to $(\mathcal{F},\leq,\wedge,\vee,\mathcal{O},\mathcal{I})$, a complete lattice, we obtain a Lattice Learning Space.

	\subsection{Examples of Learning Spaces}
	\label{SecExamples}
	
	We present some examples of Learning Spaces completely defined by a triple $(\mathcal{F},\leq,\mathcal{R})$.
	
	\begin{example}[Feature Selection]
		\label{feature_lattice} \normalfont
		Let $\mathcal{H}$ be a space of hypotheses with domain $\mathcal{X} \subset \mathbb{R}^{d}, d > 1$. Let $\mathcal{F} = \mathcal{P}(\{1,\dots,d\})$ be the power set of $\{1,\dots,d\}$ partially ordered by inclusion, so that $(\mathcal{F},\subset,\cap,\cup,\emptyset,\{1,\dots,d\})$ is a complete Boolean lattice. Consider the Learning Space generator $\mathcal{R}: \mathcal{F} \mapsto Im(\mathcal{R}) \subset \mathcal{P}(\mathcal{H})$ given by
		\begin{equation*}
		\mathcal{R}(a) = \Big\{h \in \mathcal{H}: h(x) = h(z), \text{ if } x \equiv_{a} z\Big\},
		\end{equation*} 
		in which $a = \{a_{1},\dots,a_{j}\} \in \mathcal{F}$ and $x = (x_{1},\dots,x_{d}) \equiv_{a} z = (z_{1},\dots,z_{d})$ if, and only if, $x_{a_{i}} = z_{a_{i}}$ for $i = 1,\dots,j$, so $\mathcal{R}(a)$ contains the hypotheses which depend solely on features in $a$. The lattice isomorphism $\mathcal{R}$ satisfies condition (i), and often satisfies (ii), as in many applications the VC dimension is an increasing function of the number of features, so $Im(\mathcal{R})$ is often a Learning Space.
		
		\hfill$\square$
	\end{example}
	
	\begin{example}[Partition Lattice]
		\label{partition_lattice} \normalfont	
		The target hypothesis $h^{\star}$ creates equivalence classes in $\mathcal{X}$, partitioning it according to its classification of each input value, that is an ordered partition $\mathcal{X}_{0},\mathcal{X}_{1}$, with $\mathcal{X}_{i} = \{x \in \mathcal{X}: h^{\star}(x) = i\}, i = 0,1$, in which each element is in exactly one of these sets, the one which represents its classification according to $h^{\star}$. Actually, every hypothesis $h$ generates an ordered partition according to its classification of the input values. Hence, there is a duality between hypotheses and the ordered partitions of $\mathcal{X}$ with two parts. 
		
		An example of learning method based on the partition-hypothesis duality are the pyramid-based learning models which specify the hypotheses space through the specification of properties of the desired hypotheses, e.g., increasing or decreasing hypotheses, and some sampling pyramid, which constrains the set of hypotheses considered, generating enormous equivalence classes (partition) in the hypotheses' domain (see \cite{dougherty2001,grauman2006,junior2002} for more details).
		
		The partition-hypothesis duality brings upon the paradigm of learning a hypothesis through a partition. This task can be performed in two manners, either by first choosing explicitly an unordered partition and learning an ordination of it, which generates a hypothesis, or learning the partition implicitly while learning the hypothesis. 
		
		In both manners, it is not necessary to have a partition consisting of exactly two parts, as one could rather consider more sets that form a partition with the constraint that elements within a same set should be classified in the same output. These sets represent an equivalence relation on $\mathcal{X}$, with equivalence between elements in a same set. This may ease the estimation process since the original partition sets are broken into more simple ones, which may have better topological features and be easier to estimate.
		
		Assume we know a partition $\mathcal{X}_{1}, \dots, \mathcal{X}_{k}$ of $k$ greater than two parts such that there is a hypothesis which respects it that well-approximate a target one. The set of hypotheses that respect a partition is composed by the ones that classify elements within a same part in the same output. Once we fix a partition $\mathcal{X}_{1}, \dots, \mathcal{X}_{k}$, the learning is performed considering only hypotheses that respect it, that is a constrained hypotheses space on which the learning may be \textit{better} than on all of $\mathcal{H}$, since this constrained space (a) has lesser VC dimension (that is equal to $k$) and (b) contains a hypothesis that \textit{well} approximate a target hypothesis.
		
		The hypotheses that respect each partition form the models in the Partition Lattice Learning Space (cf. Figure \ref{partitionL}), so this Learning Space is a natural family of candidate models under the partition-hypothesis duality when $|\mathcal{X}|$ is finite. We now formally define this Learning Space.
		
		Let $\mathcal{X}$ be such that $|\mathcal{X}| < \infty$, and let $\mathcal{H}$ contain all functions with domain $\mathcal{X}$ and image $\{0,1\}$. A partition of $\mathcal{X}$ is a set $\pi$ of non-empty subsets of $\mathcal{X}$, called blocks or parts, such that every element $x \in \mathcal{X}$ is in exactly one of these blocks. A partition $\pi$ generates an equivalence relation on $\mathcal{X}$ in the sense that $x$ and $z$ in $\mathcal{X}$ are $\pi$-equivalent, i.e., $x \equiv_{\pi} z$, if, and only if, they are in the same block of partition $\pi$.
		
		Define $\mathcal{F} \coloneqq \{\pi: \pi \text{ is a partition of } \mathcal{X}\}$ as the set of all partitions of $\mathcal{X}$, partially ordered by $\leq$ defined as
		\begin{equation*}
		\pi_{1} \leq \pi_{2} \text{ if, and only if, } x \equiv_{\pi_{2}} z \text{ implies } x \equiv_{\pi_{1}} z
		\end{equation*}
		for $\pi_{1}, \pi_{2} \in \mathcal{F}$, which is a complete lattice $(\mathcal{F},\leq,\wedge,\vee,\{\mathcal{X}\},\mathcal{X})$. See Figure \ref{partitionL} for an example of Partition Lattice. By applying the generator $\mathcal{R}: \mathcal{F} \mapsto Im(\mathcal{R}) \subset \mathcal{P}(\mathcal{H})$ given by
		\begin{equation*}
		\mathcal{R}(\pi) \coloneqq \mathcal{H}|_{\pi} = \Big\{h \in \mathcal{H}: h(x) = h(z) \text{ if } x \equiv_{\pi} z \Big\},
		\end{equation*}
		for $\pi \in \mathcal{F}$, we obtain a Lattice Learning Space $\mathbb{L}(\mathcal{H}) \coloneqq Im(\mathcal{R})$. The set $\mathcal{R}(\pi)$ is formed by all hypotheses which classify the points inside a block of $\pi$ in a same category, that are the hypotheses which respect $\pi$. This lattice is indeed a Leaning Space since $\mathcal{H} \in \mathbb{L}(\mathcal{H})$ and $d_{VC}(\mathcal{H}|_{\pi}) = |\pi|$. In this case, the parameters of the functions $h \in \mathcal{H}$ are the elements in their domain $\mathcal{X}$, in contrast, for example, to the features they depend on as in Example \ref{feature_lattice}. If $\mathcal{H}$ is given by the set of Boolean functions $h: \{0,1\}^{d} \mapsto \{0,1\}$ we have the special case of a Boolean Partition Lattice, which is studied in \cite{edu}.
		
		This Learning Space is quite general and has subsets which are themselves useful Learning Spaces, illustrating how one may drop subsets of $\mathbb{L}(\mathcal{H})$ to obtain other Learning Spaces according to the needs of the application at hand. For example, the Feature Selection Learning Space, when $\mathcal{X} \subset \mathbb{R}^{d}, |\mathcal{X}| < \infty$, is a sub-lattice of the Partition Lattice Learning Space and is represented in orange in Figure \ref{partitionL} when $\mathcal{X} = \{0,1\}^2$. This Learning Space can also be obtained by applying the generator of Example \ref{feature_lattice}.
		
		Apart from dropping nodes, one can also obtain Learning Spaces for subsets $\mathcal{M} \subset \mathcal{H}$ by taking $\mathbb{L}(\mathcal{M}) = \mathbb{L}(\mathcal{H}) \cap \mathcal{M} \coloneqq \{\mathcal{M}^{\prime} \cap \mathcal{M}: \mathcal{M}^{\prime} \in \mathbb{L}(\mathcal{H})\}$ which is often a Learning Space of $\mathcal{M}$. For example, by taking $\mathcal{M}$ as the non-decreasing hypotheses in $\mathcal{H}$ when $\mathcal{X} \subset \mathbb{R}$, i.e., $\mathcal{M} = \{h \in \mathcal{H}: h(x_{1}) \leq h(x_{2}), \text{ if } x_{1} < x_{2}\}$, we obtain $\mathbb{L}(\mathcal{M})$ composed by the dashed nodes in Figure \ref{partitionL} whose hypotheses are the ones in $\mathcal{R}(\pi) \cap \mathcal{M}$. Some nodes may be dropped in order for $\mathbb{L}(\mathcal{H}) \cap \mathcal{M}$ to satisfy (ii).
		
		These are some examples of how one can incorporate prior knowledge about the problem at hand into the Learning Space to remodel it accordingly. If one believes that a target hypothesis does not depend on all features, he may use the Feature Selection Learning Space, while if one believes a target hypothesis is non-decreasing, he may consider the respective subset of the Partition Lattice Learning Space. In both cases, the Learning Space is defined by constraining the Partition Lattice Learning Space according to prior information, so multipurpose learning algorithms based on the Partition Lattice may be applied on several instances when distinct levels of prior information is available.
		
		\hfill$\square$
		
		\begin{figure*}[ht]
			\begin{center}
				\begin{tikzpicture}[scale=0.476, transform shape]
				\tikzstyle{hs} = [rectangle,draw=black, rounded corners, minimum height=3em, minimum width=3.5em, node distance=2.5cm, line width=1pt]
				\tikzstyle{hs2} = [rectangle,fill=orange,draw=black, rounded corners, minimum height=3em, minimum width=3.5em, node distance=2.5cm, line width=1pt]
				\tikzstyle{hs3} = [rectangle,draw=black, rounded corners, minimum height=3em, minimum width=3.5em, node distance=2.5cm, line width=1pt,dashed]
				\tikzstyle{hs4} = [rectangle,fill=orange,draw=black, rounded corners, minimum height=3em, minimum width=3.5em, node distance=2.5cm, line width=1pt,dashed]
				\tikzstyle{branch}=[fill,shape=circle,minimum size=4pt,inner sep=0pt]
				\tikzstyle{simple}=[rectangle,draw=black,minimum height=2em,minimum width=3.5em, node distance=2cm,line width=1pt]
				\tikzstyle{dot}=[node distance=2.5cm, line width=1pt, minimum width=1em]
				\node[hs2] at (0, 2) (a1234) {$\{\{1\},\{2\},\{3\},\{4\}\}$};
				\node[hs3] at (-15, -5) (2a1) {$\{\{1\},\{2,3,4\}\}$};
				\node[hs] at (-10, -5) (2a2) {$\{\{1,3,4\},\{2\}\}$};
				\node[hs] at (-5, -5) (2a3) {$\{\{1,2,4\},\{3\}\}$};
				\node[hs3] at (0, -5) (2a4) {$\{\{1,2,3\},\{4\}\}$};
				\node[hs4] at (5, -5) (2a12) {$\{\{1,2\},\{3,4\}\}$};
				\node[hs2] at (10, -5) (2a13) {$\{\{1,3\},\{2,4\}\}$};
				\node[hs] at (15, -5) (2a14) {$\{\{1,4\},\{2,3\}\}$};
				\node[hs] at (-12.5, -1.5) (3a12) {$\{\{1\},\{2\},\{3,4\}\}$};
				\node[hs] at (-7.5, -1.5) (3a13) {$\{\{1\},\{2,4\},\{3\}\}$};
				\node[hs] at (-2.5, -1.5) (3a14) {$\{\{1\},\{2,3\},\{4\}\}$};
				\node[hs] at (2.5, -1.5) (3a23) {$\{\{1,4\},\{2\},\{3\}\}$};
				\node[hs] at (7.5, -1.5) (3a24) {$\{\{1,3\},\{2\},\{4\}\}$};
				\node[hs] at (12.5, -1.5) (3a34) {$\{\{1,2\},\{3\},\{4\}\}$};
				\node[hs4] at (0, -8.5) (empty) {$\{\{1,2,3,4\}\}$};
				\node at (-12.5,1.5) (tab1)  {$\mathcal{M}_{1} = $ 
					\begin{tabular}{|c|cccccccc|}
					\hline
					$\mathcal{X}$ & $h_{1}$ & $h_{2}$ & $h_{3}$ & $h_{4}$ & $h_{5}$ & $h_{6}$ & $h_{7}$ & $h_{8}$\\
					\hline 
					1 & 1 & 0 & 0 & 1 & 1 & 0 & 1 & 0\\
					2 & 1 & 0 & 1 & 0 & 0 & 0 & 1 & 1\\
					3 & 1 & 0 & 1 & 0 & 1 & 1 & 0 & 0\\
					4 & 1 & 0 & 1 & 0 & 1 & 1 & 0 & 0\\
					\hline
					\end{tabular}};
				\node at (-14.25,-8) (tab2)  {$\mathcal{M}_{2} = $
					\begin{tabular}{|c|cccc|}
					\hline
					$\mathcal{X}$ & $h_{1}$ & $h_{2}$ & $h_{3}$ & $h_{4}$ \\
					\hline 
					1 & 1 & 0 & 0 & 1 \\
					2 & 1 & 0 & 1 & 0 \\
					3 & 1 & 0 & 1 & 0 \\
					4 & 1 & 0 & 1 & 0 \\
					\hline
					\end{tabular}};

				\begin{scope}[line width=1pt]
				\draw[->] (tab1) -- (3a12);
				\draw[->] (tab2) -- (2a1);
				\draw[-] (empty) -- (2a1);
				\draw[-] (empty) -- (2a2);
				\draw[-] (empty) -- (2a3);
				\draw[-] (empty) -- (2a4);
				\draw[-] (empty) -- (2a12);
				\draw[-] (empty) -- (2a13);
				\draw[-] (empty) -- (2a14);
				\draw[-] (2a1) -- (3a12);
				\draw[-] (2a1) -- (3a13);
				\draw[-] (2a1) -- (3a14);
				\draw[-] (2a2) -- (3a12);
				\draw[-] (2a2) -- (3a23);
				\draw[-] (2a2) -- (3a24);
				\draw[-] (2a3) -- (3a13);
				\draw[-] (2a3) -- (3a23);
				\draw[-] (2a3) -- (3a34);
				\draw[-] (2a4) -- (3a14);
				\draw[-] (2a4) -- (3a24);
				\draw[-] (2a4) -- (3a34);
				\draw[-] (2a12) -- (3a12);
				\draw[-] (2a12) -- (3a34);
				\draw[-] (2a13) -- (3a13);
				\draw[-] (2a13) -- (3a24);
				\draw[-] (2a14) -- (3a14);
				\draw[-] (2a14) -- (3a23);
				\draw[-] (3a12) -- (a1234);
				\draw[-] (3a13) -- (a1234);
				\draw[-] (3a14) -- (a1234);
				\draw[-] (3a23) -- (a1234);
				\draw[-] (3a24) -- (a1234);
				\draw[-] (3a34) -- (a1234);
				\end{scope}
				
				\end{tikzpicture}
			\end{center}
			\caption{\footnotesize Partition Lattice for Linear Classifiers with $d = 4$ or for $\mathcal{X} = \{1,2,3,4\}$. The tables present the hypotheses in selected subspaces $\mathcal{M}_{1}, \mathcal{M}_{2}$ of the Partition Lattice Learning Space for $\mathcal{X} = \{1,2,3,4\}$. The orange nodes represent the Boolean Lattice of feature selection when $\mathcal{X} = \{0,1\}^{2}$ so its points are $1 = (0,0), 2 = (0,1), 3 = (1,0)$ and $4 = (1,1)$. The dashed nodes are the ones in $\mathbb{L}(\mathcal{H}) \cap \mathcal{M}$ in which $\mathcal{M}$ is composed by the non-decreasing hypotheses.}
			\label{partitionL}
		\end{figure*}
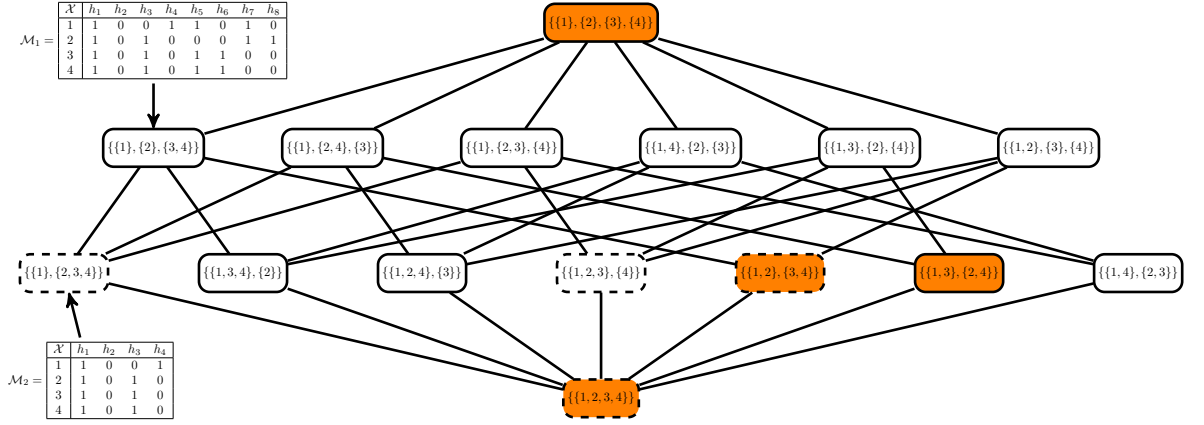
	\end{example}
	
	\begin{example}[Linear Classifiers]
		\label{parametric_lattice} \normalfont
		Let $\mathcal{H}$ be given by the linear classifiers in $\mathbb{R}^{d}, d \geq 1$:
		\begin{equation*}
		\mathcal{H} = \Bigg\{h_{a}(x) = \frac{1}{2} \ \text{sgn}\Big\{a_{0} + \sum_{i=1}^{d} a_{i}x_{i}\Big\} + \frac{1}{2}: a_{i} \in \mathbb{R}\Bigg\},
		\end{equation*}
		in which $x = (x_{1},\dots,x_{d}) \in \mathbb{R}^{d}$ and $h_{a}$ is the function indexed by its parameters $a = (a_{0},\dots,a_{d}) \in \mathbb{R}^{d+1}$. Denoting $\mathcal{A} = \{1,\dots,d\}$, we consider two distinct Learning Spaces generators: from the Boolean Lattice $(\mathcal{P}(\mathcal{A}),\subset,\cap,\cup,\{\emptyset\},\mathcal{A})$ and from the Partition Lattice $(\Pi_{\mathcal{A}},\leq,\wedge,\vee,\{\mathcal{A}\},\mathcal{A})$ of $\mathcal{A}$, in which $\mathcal{P}(\mathcal{A})$ is the power set of $\mathcal{A}$ and $\Pi_{\mathcal{A}}$ is the set of all partitions of $\mathcal{A}$. The Partition Lattice is represented in Figure \ref{partitionL} for $d = 4$.
		
		Define $\mathcal{R}_{1}: \mathcal{P}(\mathcal{A}) \mapsto \mathcal{P}(\mathcal{H})$ as
		\begin{equation*}
		\mathcal{R}_{1}(A) = \Big\{h_{a} \in \mathcal{H}: a_{j} = 0  \text{ if } j \notin A \cup \{0\}\Big\},
		\end{equation*}
		for $A \in \mathcal{P}(\mathcal{A})$ as a feature selection generator, and define $\mathcal{R}_{2}: \Pi_{\mathcal{A}} \mapsto \mathcal{P}(\mathcal{H})$ as
		\begin{equation*}
		\mathcal{R}_{2}(\pi) = \Big\{h_{a} \in \mathcal{H}: a_{j} = a_{k} \text{ if } j \equiv_{\pi} k\},
		\end{equation*}
		for $\pi \in \Pi_{\mathcal{A}}$ as a generator which equal parameters, i.e., create equivalence classes in $\mathcal{A}$. Both $\mathcal{R}_{1}, \mathcal{R}_{2}$ clearly satisfies (i) and
		\begin{align*}
		d_{VC}(\mathcal{R}_{1}(A)) = |A| + 1 & & d_{VC}(\mathcal{R}_{2}(\pi)) = |\pi| + 1,
		\end{align*}
		so they also satisfy (ii). Therefore, these lattice isomorphisms generate two distinct Lattice Learning Spaces for a same hypotheses space $\mathcal{H}$ and the application at hand will dictate which one is the more suitable to solve the problem. For example, if it is believed that $h^{\star}$ does not depend on all variables, then the Boolean Lattice Learning Space may be preferable; otherwise, one would rather choose the Partition Lattice Learning Space, a subset of it or the intersection of it with a subset $\mathcal{M} \subset \mathcal{H}$ if one believes that the target linear classifier satisfies some property.
		
		\hfill$\square$
	\end{example}
	
	\begin{example}[Deep Neural Networks]
		\label{example_NN} \normalfont
		For each $\theta \coloneqq (\theta_{0},\dots,\theta_{m+1}) \in \Theta \subset \mathbb{R}^{t_{0}} \times \cdots \times \mathbb{R}^{t_{m+1}}$, let $f_{0}^{\theta_{0}}, \dots, f_{m+1}^{\theta_{m+1}}, m \geq 2$, be a sequence of measurable functions 
		\begin{linenomath}
			\begin{align}
			\label{condLayer}
			f_{0}^{\theta_{0}}: \mathcal{X} \mapsto \mathbb{R}^{d_{1}} & & \cdots & &  f_{k}^{\theta_{k}}: \mathbb{R}^{d_{k}} \mapsto \mathbb{R}^{d_{k+1}} & & \cdots & & f_{m+1}^{\theta_{m}+1}: \mathbb{R}^{d_{m+1}} \mapsto \mathcal{Y}
			\end{align} 
		\end{linenomath}
		for $1 \leq k \leq m$, that are completely determined by parameters $\theta$, in which $1 \leq d_{k}, t_{k^\prime} < \infty$ for all $1 \leq k \leq m + 1$ and $0 \leq k^\prime \leq m+1$. When we say \textit{measurable}, we mean Borel measurable in case the function domain is a Euclidean space, and $\left(\mathcal{X},\sigma(\mathcal{X})\right)$-measurable when its domain is $\mathcal{X}$, where $\sigma(\mathcal{X})$ is a $\sigma$-algebra of $\mathcal{X}$. 
		
		Assume that $m \geq 2$ is fixed and a class
		\begin{linenomath}
			\begin{equation*}
			\mathcal{A} = \Big\{\{f_{0}^{\theta_{0}}, \dots, f_{m+1}^{\theta_{m+1}}\}: \theta \in \Theta \Big\}
			\end{equation*}
		\end{linenomath}
		satisfying (\ref{condLayer}) is given. Then, for each $\theta \in \Theta$ define $g_{\theta}: \mathcal{X} \mapsto \mathcal{Y}$ as
		\begin{linenomath}
			\begin{equation*}
			g_{\theta}(x) \coloneqq f_{m+1}^{\theta_{m+1}} \circ f_{m}^{\theta_{m}} \circ \cdots \circ f_{0}^{\theta_{0}}(x),
			\end{equation*}
		\end{linenomath}
		for $x \in \mathcal{X}$. We call $\mathcal{A}$ a \textit{Deep Neural Network} (DNN) architecture with $m$ hidden layers which can represent the classifiers in set
		\begin{linenomath}
			\begin{equation*}
			\mathcal{R}(\mathcal{A}) \coloneqq \Big\{g_{\theta}: \theta \in \Theta\Big\},
			\end{equation*} 
		\end{linenomath}
		which is a collection of functions with domain $\mathcal{X}$ and image $\mathcal{Y}$.
		
		Our definition of DNN does not seek to contemplate all kinds of DNNs used nowadays, but rather focus on lesser sophisticated architectures, which includes Fully Connected DNNs, when $f^{\theta_{k}}_{k}$ is given by applying a multidimensional linear transformation to the input followed by an activation function coordinate-wise.
		
		Since an architecture $\mathcal{A}$ generates a hypotheses space $\mathcal{H} \coloneqq \mathcal{R}(\mathcal{A})$, the selection of an architecture, in what is known in the literature as Neural Architecture Search \cite{elsken2019}, could be performed via the selection of a hypotheses space among the candidates in $\mathbb{L}(\mathcal{H})$, generated by distinct DNN architectures. The first step in this task is to define $\mathbb{L}(\mathcal{H})$, which would be as follows.
		
		Let $\mathscr{A} = \{\mathcal{A}_{1}, \dots, \mathcal{A}_{n}\}$ be a collection of architectures and $(\mathscr{A},\leq)$ a poset such that $\mathcal{H} = \bigcup_{i = 1}^{n} \mathcal{R}(\mathcal{A}_{i})$ and
		\begin{equation}
		\label{cond_DNN}
		\mathcal{A}_{i} \leq \mathcal{A}_{j}, i \neq j \implies \mathcal{R}(\mathcal{A}_{i}) \subset \mathcal{R}(\mathcal{A}_{j}) \text{ and } d_{VC}(\mathcal{R}(\mathcal{A}_{i})) < d_{VC}(\mathcal{R}(\mathcal{A}_{j})),
		\end{equation}
		so that $\mathbb{L}(\mathcal{H}) \coloneqq \{\mathcal{R}(\mathcal{A}_{i}): i = 1, \dots, n\}$ is a Learning Space of $\mathcal{H}$. In this framework, selecting a subspace $\mathcal{M}_{i} \coloneqq \mathcal{R}(\mathcal{A}_{i})$ from $\mathbb{L}(\mathcal{H})$ would imply selecting an architecture from $\mathscr{A}$.
		
		Although easy to define, the construction of a Learning Space for DNNs is not a straightforward task. On the one hand, it is not clear which classifiers are in $\mathcal{R}(\mathcal{A}_{i})$ for a given architecture $\mathcal{A}_{i}$. On the other hand, since there is a lot of redundancy among the parameters $\theta$ \cite{denil2013,neyshabur2018}, one usually do not know if a constraint on the parametric space $\Theta$, e.g., by setting some parameters to zero, actually generates a constraint on the hypotheses space $\mathcal{H}$. Furthermore, the VC dimension of $\mathcal{R}(\mathcal{A}_{i})$ is often not exactly known, and is not necessarily proportional to either the dimensionality of $\theta$, that is the architecture number of parameters, or its number of layers \cite{sontag1998,harvey2017,bartlett2019,karpinski1997}. As opposite, in the examples above it is clear that constraints on the parametric space, such as dropping variables (feature selection and linear classifiers), equating parameters (liner classifiers) and creating equivalence classes on the classifier domain (partition lattice), not only represent constraints on the hypotheses space, but are also isomorphisms which preserve the partial ordering on the parametric space and satisfy the Learning Space property regarding the VC dimension of related models.
		
		Nevertheless, a Learning Space for DNNs could be built from the top-down or from the bottom-up. The former would be performed by starting from the greatest architecture $\mathcal{A}$ and then adding constraints to its parametric space $\Theta$ successively in various ways and ordinations, generating the chains of $\mathbb{L}(\mathcal{H})$ from the top. The latter would mean starting from concise building blocks, e.g., nodes of layers, full layers or compositions of layers $f_{j}^{\theta_{j}}$, which can be combined in many ways to form more complex architectures creating chains of $\mathbb{L}(\mathcal{H})$ from the bottom. This task may be accomplished with or without explicitly knowing the subspace $\mathcal{R}(\mathcal{A}_{i})$ for all $i$, since it is enough to guarantee that \eqref{cond_DNN} is in force.
		
		\hfill$\square$
	\end{example}
	
	\subsection{Minimums of Learning Spaces}
	
	Model Selection via Learning Spaces relies on the concept of minimums of a $\mathbb{L}(\mathcal{H})$, since it will select a minimizer of error $\hat{L}$, that is a global minimum, and also a local minimum, of $\mathbb{L}(\mathcal{H})$. We start with the definition of continuous chain, which are basically chains with no \textit{jumps} over models in $\mathbb{L}(\mathcal{H})$.
	
	\begin{definition}
		\label{chain} \normalfont
		A sequence $\mathcal{M}_{i_{1}} \subset \mathcal{M}_{i_{2}} \subset \cdots \subset \mathcal{M}_{i_{k}}$ is called a continuous chain of $\mathbb{L}(\mathcal{H})$ if, and only if, $d(\mathcal{M}_{i_{j}},\mathcal{M}_{i_{j+1}}) = 1$ for all $j \in \{1,\dots,k-1\}$ in which $d$ means distance on the Hasse diagram $(\mathbb{L}(\mathcal{H}),\subset)$.
	\end{definition}
	
	We now define the minimums of $\mathbb{L}(\mathcal{H})$.
	
	\begin{definition}
		\label{local_min} \normalfont
		The model $\mathcal{M}_{i_{j\star}}$ is:
		\begin{itemize}
			\setlength\itemsep{0.5em}
			\item a \textbf{weak local minimum} of a continuous chain $\mathcal{M}_{i_{1}} \subset \mathcal{M}_{i_{2}} \subset \cdots \subset \mathcal{M}_{i_{k}}$ of $\mathbb{L}(\mathcal{H})$ if $\hat{L}(\mathcal{M}_{i_{j\star}}) \leq \min\big(\hat{L}(\mathcal{M}_{i_{j\star-1}}),\hat{L}(\mathcal{M}_{i_{j\star+1}})\big)$, in which $\hat{L}(\mathcal{M}_{i_{0}}) \equiv \hat{L}(\mathcal{M}_{i_{k+1}}) \equiv + \infty$;
			\item a \textbf{strong local minimum} of $\mathbb{L}(\mathcal{H})$ if it is a weak local minimum of all continuous chains of $\mathbb{L}(\mathcal{H})$ which contain it;
			\item a \textbf{global minimum of a continuous chain} if $\hat{L}(\mathcal{M}_{i_{j\star}}) = \min\limits_{1 \leq j \leq k} \hat{L}(\mathcal{M}_{i_{j}})$;
			\item a \textbf{global minimum} of $\mathbb{L}(\mathcal{H})$ if $\hat{L}(\mathcal{M}_{i_{j\star}}) = \min\limits_{i \in \mathcal{J}} \hat{L}(\mathcal{M}_{i})$.
		\end{itemize}
	\end{definition}
	
	\begin{remark}
		\normalfont
		The concept of strong local minimum may be defined in another manner, by considering the distance on the Hasse diagram $(\mathbb{L}(\mathcal{H}),\subset)$. A model $\mathcal{M}$ is a strong local minimum of $\mathbb{L}(\mathcal{H})$ if, and only if,
		\begin{equation*}
		\hat{L}(\mathcal{M}) \leq \min\left\{\hat{L}(\mathcal{M}_{i}): \mathcal{M}_{i} \in \mathbb{L}(\mathcal{H}), \ d(\mathcal{M}_{i},\mathcal{M}) = 1\right\},
		\end{equation*}
		that is, if its estimated error is lesser or equal to the estimated error of all its immediate neighbors, that are models at a distance one from it.
	\end{remark}
	
	In Definition \ref{local_min}, the error of a model $\mathcal{M}$ is estimated by a fixed estimator $\hat{L}(\mathcal{M})$, so the concept of minimums is dependent upon the choice of estimator. Since $\mathcal{M}_{i_{j}} \subset \mathcal{M}_{i_{j+1}}, j = 1, \dots, k-1$, the sequence $\{L_{\mathcal{D}_{N}}(\hat{h}_{i_{j}}(\mathcal{D}_{N})): j = 1, \dots, k\}$ is non-increasing, hence if we estimated this error simply by the resubstitution $L_{\mathcal{D}_{N}}(\hat{h}_{\mathcal{M}}(\mathcal{D}_{N}))$, the minimum would always be the greatest model of the chain so that we would be \textit{doomed to overfit}. On the other hand, employing an estimator involving validation samples, for example, as those in Section \ref{esti_Lhat}, is a manner of avoiding \textit{overfitting}. We discuss these questions in more details in Section \ref{SecUproperty}.
	
	\subsection{The target hypotheses space}
	
	The concept of target hypotheses space is central in our approach. As is the case in all optimization problems, there must be an optimal solution to the Model Selection problem which satisfies certain desired conditions. In the proposed framework, the optimal is the model in $\mathbb{L}(\mathcal{H})$ with the least VC dimension which contains a target hypothesis. To be exact in this definition, we need to consider equivalence classes of models, as it is not possible to differentiate some models with the concepts of the theory.
	
	Define in $\mathbb{L}(\mathcal{H})$ the equivalence relation given by
	\begin{linenomath}
		\begin{align}
		\label{equiv_class}
		\mathcal{M}_{i} \sim \mathcal{M}_{j} \text{ if, and only if, } d_{VC}(\mathcal{M}_{i}) = d_{VC}(\mathcal{M}_{j}) \text{ and } L(\mathcal{M}_{i}) = L(\mathcal{M}_{j}),
		\end{align}
	\end{linenomath}
	for $\mathcal{M}_{i}, \mathcal{M}_{j} \in \mathbb{L}(\mathcal{H})$: two models in $\mathbb{L}(\mathcal{H})$ are equivalent if they have the same VC dimension and error. Let 
	\begin{linenomath}
		\begin{equation*}
		\mathcal{L}^{\star} = \argminA\limits_{\mathcal{M} \in \ \nicefrac{\mathbb{L}(\mathcal{H})}{\sim}} L(\mathcal{M})
		\end{equation*}
	\end{linenomath}
	be the equivalence classes which contain a target hypothesis of $\mathcal{H}$ so that their error is minimum. We define the target model $\mathcal{M}^{\star} \in \nicefrac{\mathbb{L}(\mathcal{H})}{\sim}$ as
	\begin{linenomath}
		\begin{equation*}
		\mathcal{M}^{\star} = \argminA\limits_{\mathcal{M} \in \mathcal{L}^{\star}} d_{VC}(\mathcal{M}),
		\end{equation*}
	\end{linenomath}
	which is the class of the smallest models in $\mathbb{L}(\mathcal{H})$, in the VC dimension sense, that are not disjoint with $h^{\star}$. The target model has the lowest complexity among the unbiased models in $\mathbb{L}(\mathcal{H})$, that are models which contain a target hypothesis.
	
	The intuition of the paradigm proposed by this paper is presented in Figure \ref{paradigms}, in which the ellipses represent some models in $\mathbb{L}(\mathcal{H})$ and their area is proportional to their VC dimension. Assume that $\mathcal{H}$ is all we have to learn on, i.e., we are not willing to consider any hypothesis outside $\mathcal{H}$. Then, if we could choose, we would like to learn on $\mathcal{M}^{\star}$: \textit{the model in $\mathbb{L}(\mathcal{H})$ with the least VC dimension which contains a $h^{\star}$}. Of course, this model is dependent on both $\mathbb{L}(\mathcal{H})$ and $P$, i.e., it is not distribution-free, and thus we cannot establish beforehand, without looking at data, on which model of $\mathbb{L}(\mathcal{H})$ we should learn. Moreover, even if we looked at data, it could not be possible to search $\mathbb{L}(\mathcal{H})$ exhaustively to properly estimate $\mathcal{M}^{\star}$ by a $\hat{\mathcal{M}}$ learned from data and, in the general case, there is nothing guaranteeing that it is feasible to estimate $\mathcal{M}^{\star}$ anyhow.
	
	\begin{figure}[ht]
		\centering
		\includegraphics[scale = 0.5]{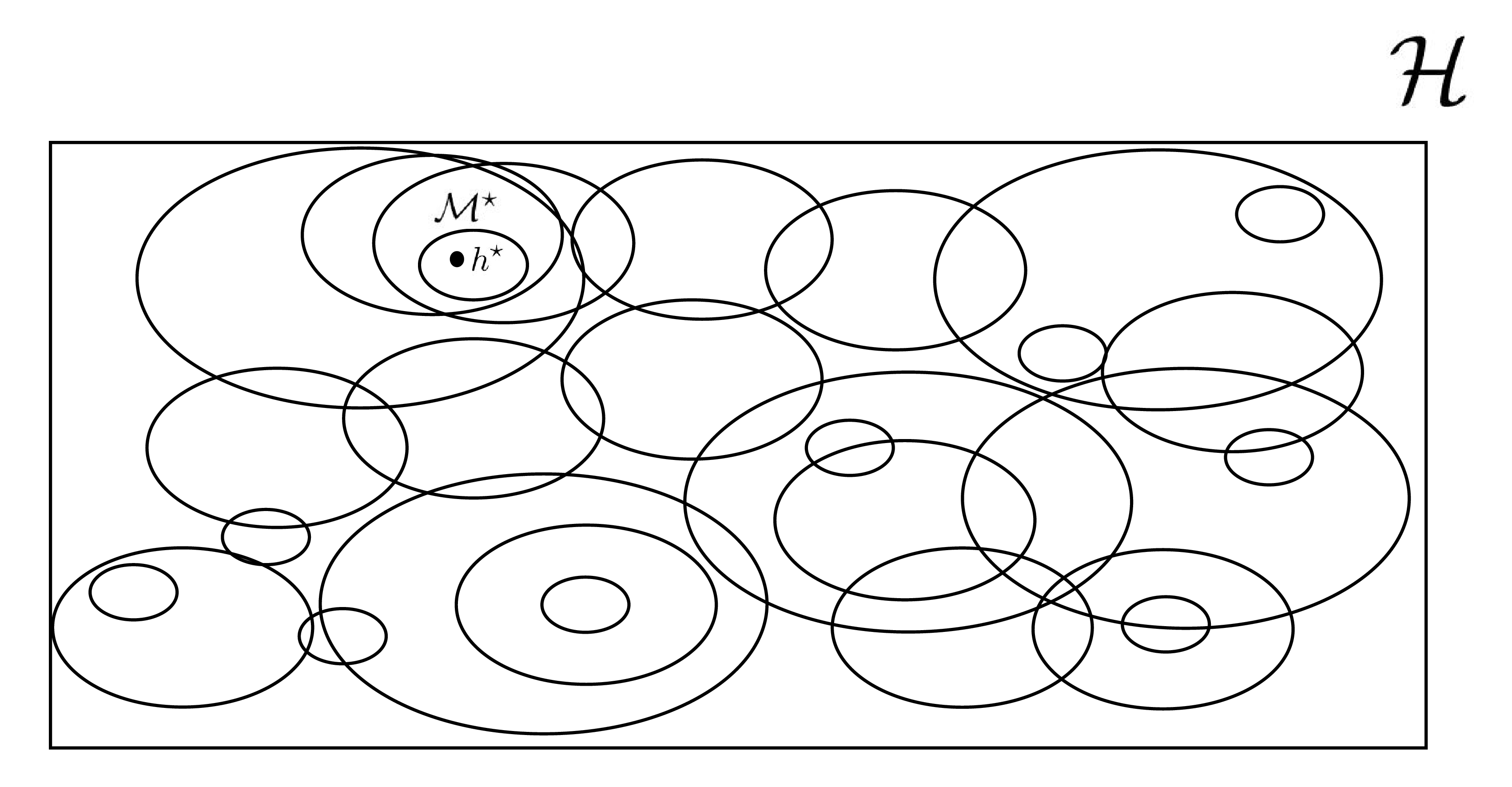}
		\caption{\footnotesize Decomposition of $\mathcal{H}$ by a $\mathbb{L}(\mathcal{H})$. We omitted some models for a better visualization, since $\mathbb{L}(\mathcal{H})$ should cover $\mathcal{H}$.}
		\label{paradigms}
	\end{figure}
	
	In fact, when learning on a hypotheses space $\hat{\mathcal{M}} \in \mathbb{L}(\mathcal{H})$, that is random, since is learned (estimated) from data, we commit three errors which are
	\begin{align}
	\label{ee23}
	\textbf{(II)} \ \ L(\hat{h}_{\hat{\mathcal{M}}}(\mathbb{A})) - L(h^{\star}_{\hat{\mathcal{M}}}) & & \textbf{(III)} \ \ L(h^{\star}_{\hat{\mathcal{M}}}) - L(h^{\star}) & & \textbf{(IV)} \ \ L(\hat{h}_{\hat{\mathcal{M}}}(\mathbb{A})) - L(h^{\star}),
	\end{align}
	that we call types II, III, and IV estimation errors.\footnote{We define type I estimation error in Section \ref{SecLearnOn} (cf. \eqref{typeIe}).} In a broad sense, type III error would represent the bias of learning on $\hat{\mathcal{M}}$, while type II would represent the variance within $\hat{\mathcal{M}}$, and type IV would be the error, with respect to $\mathcal{H}$, committed when learning on $\hat{\mathcal{M}}$ with algorithm $\mathbb{A}$. Indeed, type III error compares a target hypothesis of $\hat{\mathcal{M}}$ with a target hypothesis of $\mathcal{H}$, hence any difference between them would be a systematic bias of learning on $\hat{\mathcal{M}}$ when compared to learning on $\mathcal{H}$. Type II error compares the loss of the estimated hypothesis of $\hat{\mathcal{M}}$ and the loss of its target, assessing how much the estimated hypothesis varies from a target. These estimation errors are illustrated in Figure \ref{fig_error}.
	
	\begin{figure}[ht]
		\centering
		\includegraphics[scale = 0.1]{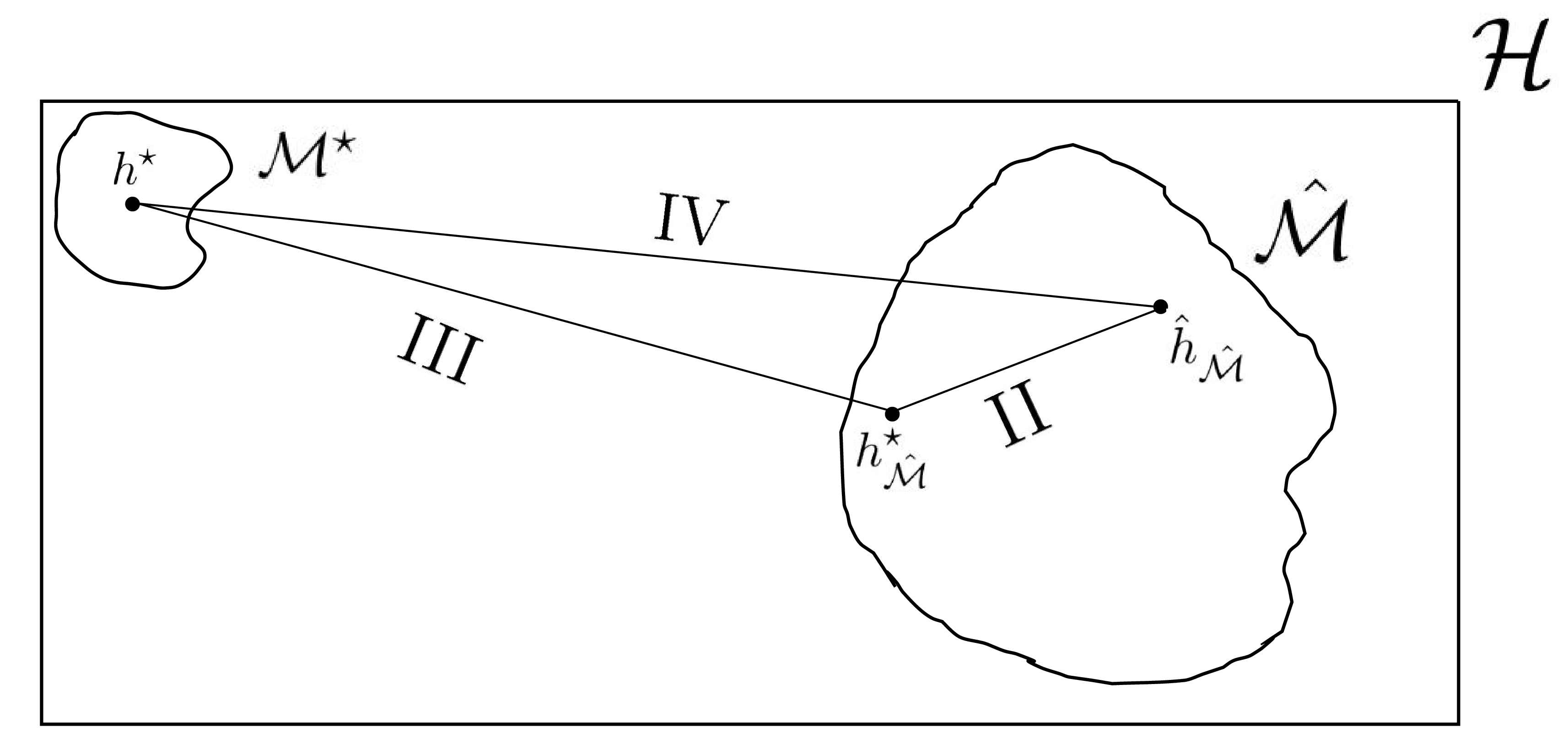}
		\caption{\footnotesize Types II, III, and IV estimation errors when learning on $\hat{\mathcal{M}}$, in which $\hat{h}_{\hat{\mathcal{M}}} \equiv \hat{h}_{\hat{\mathcal{M}}}(\mathbb{A})$.}
		\label{fig_error}
	\end{figure}
	
	As is often the case, there will be a bias-variance trade-off that should be minded when learning (on) $\hat{\mathcal{M}}$. Hence, the feasibility of a Model Selection approach depends on algorithms which (a) are more efficient than an exhaustive search of $\mathbb{L}(\mathcal{H})$ and (b) guarantee that, when the sample size increases, types II and III, and consequently type IV, estimation errors converge to zero, so there is no systematic bias when learning on $\hat{\mathcal{M}}$, and it is statistically consistent to do so.
	
	This paper presents an approach to Model Selection satisfying (a) and (b). In the following sections, we rigorously show, by extending the VC theory, that under mild conditions it is feasible to estimate $\mathcal{M}^{\star}$ by a $\hat{\mathcal{M}}$, a random model learned from data, which converges to $\mathcal{M}^{\star}$ with probability 1 when the sample size tends to infinity. Furthermore, we show bounds to the tail probabilities of the estimation errors of learning on $\hat{\mathcal{M}}$ which may be tighter than those we have when learning on $\mathcal{H}$, i.e., by introducing a bias III, which converges to zero, we may decrease the variance II of the learning process, so it is more efficient to learn on a model learned from data.
	
	Then, we discuss the U-curve phenomenon \cite{u-curve1} and show how it can be explored, via the solution of a U-curve problem \cite{reis2018} through a U-curve algorithm \cite{u-curve3,ucurveParallel}, to estimate a target hypothesis by first learning a hypotheses space $\hat{\mathcal{M}}$ and then a hypothesis in it, without exhaustively searching $\mathbb{L}(\mathcal{H})$. 
	
	In summary, in this paper we show that it is \textit{theoretically feasible}, and may be computationally possible, to learn the hypotheses space from data, via a Learning Space and a U-curve algorithm, even though this algorithm is NP-hard \cite{u-curve2,ucurveParallel}, as it is usually significantly more efficient than an exhaustive search, and may be efficiently employed to obtain suboptimal solutions.
	
	\section{The learning of hypotheses via Learning Spaces}
	\label{LearningFramework}
	
	The Learning Space framework for learning hypotheses is composed of two steps: first learn a model $\hat{\mathcal{M}}$ from $\mathbb{L}(\mathcal{H})$ and then learn a hypothesis on $\hat{\mathcal{M}}$. In this section, we define $\hat{\mathcal{M}}$ and present two ways of learning hypotheses on it under the Learning Space framework.
	
	\subsection{Learning hypotheses spaces}
	
	Model Selection via Learning Spaces is performed by applying a $(\Omega,\mathcal{S})$-measurable learning machine $\mathbb{M}_{\mathbb{L}(\mathcal{H})}$, dependent on the Learning Space $\mathbb{L}(\mathcal{H})$, satisfying
	\begin{linenomath}
		\begin{equation}
		\label{diagram}
		\omega \in \Omega \xrightarrow{(\mathcal{D}_{N},\hat{L})} (\mathcal{D}_{N}(\omega),\hat{L}(\omega)) \xrightarrow{\ \ \mathbb{M}_{\mathbb{L}(\mathcal{H})} \ \ } \mathcal{\hat{M}}(\omega) \in \mathbb{L}(\mathcal{H}),
		\end{equation}
	\end{linenomath}
	which is such that, given $\mathcal{D}_{N}$ and an estimator $\hat{L}$ of the error of each candidate model, learns a $\mathcal{\hat{M}} \in \mathbb{L}(\mathcal{H})$. Note from (\ref{diagram}) that $\mathcal{\hat{M}}$ is a $(\Omega,\mathcal{S})$-measurable $\mathbb{L}(\mathcal{H})$-valued function as it is the composition of measurable functions, i.e., $\mathcal{\hat{M}} \coloneqq \mathcal{\hat{M}}_{\mathcal{D}_{N},\hat{L},\mathbb{L}(\mathcal{H})} = \mathbb{M}_{\mathbb{L}(\mathcal{H})}\big(\mathcal{D}_{N},\hat{L}\big)$. Even though $\mathcal{\hat{M}}$ depends on $\mathcal{D}_{N}, \hat{L}$ and $\mathbb{L}(\mathcal{H})$, we drop the subscripts to ease notation. 
	
	The main feature of $\mathbb{M}_{\mathbb{L}(\mathcal{H})}$ which allows the learning of models is that type III estimation error converges in probability to zero:
	\begin{linenomath}
		\begin{equation}
		\label{bound_ghat}
		\mathbb{P}\Big(L(h^{\star}_{\hat{\mathcal{M}}}) - L(h^{\star}) > \epsilon\Big) \xrightarrow{N \rightarrow \infty} 0,
		\end{equation}
	\end{linenomath}
	for all $\epsilon > 0$ which is equivalent to
	\begin{linenomath}
		\begin{equation*}
		\mathbb{P}\Big(\hat{\mathcal{M}} \cap h^{\star} = \emptyset\Big) \xrightarrow{N \rightarrow \infty} 0,
		\end{equation*}
	\end{linenomath}
	since $|\mathbb{L}(\mathcal{H})| < \infty$. In fact, it is desired the model learned by $\mathbb{M}_{\mathbb{L}(\mathcal{H})}$ to be as simple as it can be under the restriction that it converges to the target model. Therefore, we will develop a learning machine $\mathbb{M}_{\mathbb{L}(\mathcal{H})}$ such that
	\begin{linenomath}
		\begin{equation}
		\label{consistent_LM}
		\mathcal{\hat{M}} = \mathbb{M}_{\mathbb{L}(\mathcal{H})}\big(\mathcal{D}_{N},\hat{L}\big) \xrightarrow{N \rightarrow \infty} \mathcal{M}^{\star} \text{ with probability 1},
		\end{equation}
	\end{linenomath}
	which implies (\ref{bound_ghat}).
	
	A learning machine $\mathbb{M}_{\mathbb{L}(\mathcal{H})}$ which satisfies (\ref{consistent_LM}) may be defined by mimicking the definition of $\mathcal{M}^{\star}$, but employing the estimated error $\hat{L}$ instead of the out-of-sample error $L$. Define in $\mathbb{L}(\mathcal{H})$ the equivalence relation given by
	\begin{linenomath}
		\begin{align*}
		\mathcal{M}_{i} \hat{\sim} \mathcal{M}_{j} \text{ if, and only if, } d_{VC}(\mathcal{M}_{i}) = d_{VC}(\mathcal{M}_{j}) \text{ and } \hat{L}(\mathcal{M}_{i}) = \hat{L}(\mathcal{M}_{j})
		\end{align*}
	\end{linenomath}
	for $\mathcal{M}_{i}, \mathcal{M}_{j} \in \mathbb{L}(\mathcal{H})$, which is a random $(\Omega,\mathcal{S})$-measurable equivalence relation. Let
	\begin{linenomath}
		\begin{equation*}
		\hat{\mathcal{L}} = \argminA\limits_{\mathcal{M} \in \ \nicefrac{\mathbb{L}(\mathcal{H})}{\hat{\sim}}} \hat{L}(\mathcal{M})
		\end{equation*}
	\end{linenomath}
	be the classes in $\nicefrac{\mathbb{L}(\mathcal{H})}{\hat{\sim}}$ which are global minimums of $\mathbb{L}(\mathcal{H})$ (cf. Definition \ref{local_min}). Then, $\mathbb{M}_{\mathbb{L}(\mathcal{H})}$ selects
	\begin{equation}
	\label{Ghat}
	\hat{\mathcal{M}} = \argminA\limits_{\mathcal{M} \in \mathcal{\hat{L}}} d_{VC}(\mathcal{M}),
	\end{equation} 
	the simplest global minimum of $\mathbb{L}(\mathcal{H})$. By selecting $\mathcal{\hat{M}}$ this way, we get to learn on relatively simple models, what is in accordance with the paradigm of selecting the simplest model that properly express reality, which in this case is represented by the fact that $\mathcal{\hat{M}}$ is the simplest global minimum. In Section \ref{SecConvTM} we show that \eqref{consistent_LM} holds when we define $\hat{\mathcal{M}}$ as \eqref{Ghat}.
	
	\subsection{Learning Hypotheses on $\hat{\mathcal{M}}$}
	\label{SecLearnOn}
	
	Once $\hat{\mathcal{M}}$ is selected, we need to learn a hypothesis from it that will be employed to solve the practical problem at hand. We propose two manners of performing such learning, that are characterized, respectively, by resubstitution on the sample $\mathcal{D}_{N}$ and considering an independent sample, as follows.
	
	A straightforward manner of learning on $\hat{\mathcal{M}}$ is to simply consider 
	\begin{align*}
	\hat{h}_{\hat{\mathcal{M}}}(\mathcal{D}_{N}) \coloneqq \arginfA\limits_{h \in \hat{\mathcal{M}}} L_{\mathcal{D}_{N}}(h),
	\end{align*}
	that are the hypotheses which minimize the empirical error under $\mathcal{D}_{N}$ on $\hat{\mathcal{M}}$. Since $\mathcal{D}_{N}$ was employed on the selection of $\hat{\mathcal{M}}$, through $\hat{L}$, estimator $\hat{h}_{\hat{\mathcal{M}}}(\mathcal{D}_{N})$ may be biased, so type IV estimation error $L(\hat{h}_{\hat{\mathcal{M}}}(\mathcal{D}_{N})) - L(h^{\star})$ may be great with high probability if $N$ is not large enough. We call this framework \textit{learning by reusing}.
	
	Another manner of learning on $\hat{\mathcal{M}}$ is to consider a sample $\tilde{\mathcal{D}}_{M} = \{(\tilde{X}_{l},\tilde{Y}_{l}): 1 \leq l \leq M\}$ of $M$ independent and identically distributed random vectors with distribution $P$, independent of $\mathcal{D}_{N}$, and consider 
	\begin{align*}
	\hat{h}_{\hat{\mathcal{M}}}(\tilde{\mathcal{D}}_{M}) \coloneqq \arginfA\limits_{h \in \hat{\mathcal{M}}} L_{\tilde{\mathcal{D}}_{M}}(h),
	\end{align*}
	that are the hypotheses which minimize the empirical error under $\tilde{\mathcal{D}}_{M}$ on $\hat{\mathcal{M}}$. Since the sample $\tilde{\mathcal{D}}_{M}$ is independent of $\mathcal{D}_{N}$ it may provide a less biased estimator of $h^{\star}_{\hat{\mathcal{M}}}$. We call this framework \textit{learning with independent sample}. The two frameworks for learning on $\hat{\mathcal{M}}$ are depicted in Figure \ref{learn_hyp}.
	
	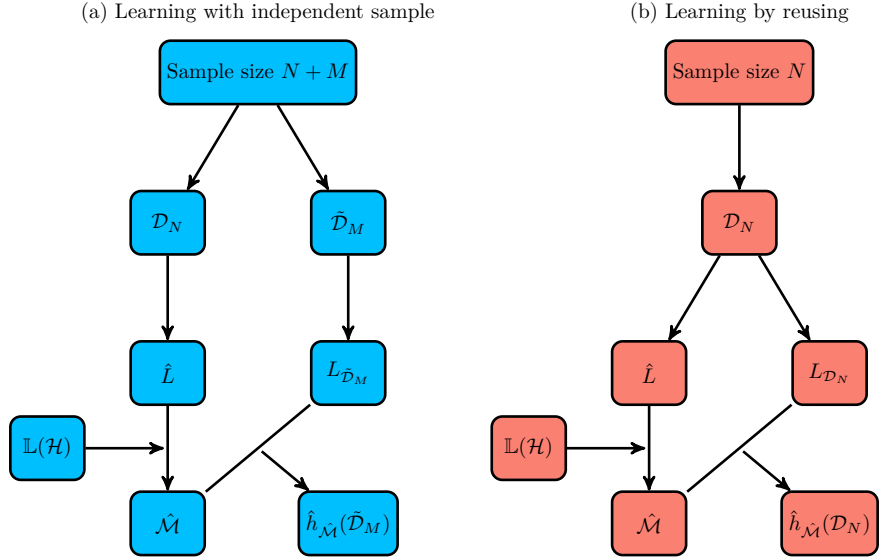
\begin{figure*}[ht]
		\begin{center}
			\begin{tikzpicture}[scale=0.4, transform shape]
			\tikzstyle{hs} = [rectangle,draw=black,fill = color1bg,rounded corners, minimum height=3em, minimum width=3.5em, node distance=2.5cm, line width=1pt]
			\tikzstyle{hs2} = [rectangle,draw=black,fill = bblue,rounded corners, minimum height=3em, minimum width=3.5em, node distance=2.5cm, line width=1pt]
			\tikzstyle{ns} = [rectangle,draw=white,rounded corners, minimum height=3em, minimum width=3.5em, node distance=2.5cm, line width=1pt]
			
			\node[ns,scale=2] at (8, 2) (name1) {(b) Learning by reusing};
			\node[hs,scale=2] at (8, 0) (s1) {Sample size $N$};
			\node[hs,scale=2] at (8, -5) (DN1) {$\mathcal{D}_{N}$};
			\node[hs,scale=2] at (5, -10) (hL1) {$\hat{L}$};
			\node[hs,scale=2] at (11, -10) (LDN1) {$L_{\mathcal{D}_{N}}$};
			\node[hs,scale=2] at (5, -15) (hM1) {$\hat{\mathcal{M}}$};
			\node[hs,scale=2] at (11, -15) (hhM1) {$\hat{h}_{\hat{\mathcal{M}}}(\mathcal{D}_{N})$};
			\node[hs,scale=2] at (1, -12.5) (LH2) {$\mathbb{L}(\mathcal{H})$};
			
			\node[ns,scale=2] at (-8, 2) (name2) {(a) Learning with independent sample};
			\node[hs2,scale=2] at (-8, 0) (s2) {Sample size $N+M$};
			\node[hs2,scale=2] at (-11, -5) (DN2) {$\mathcal{D}_{N}$};
			\node[hs2,scale=2] at (-5, -5) (DM2) {$\tilde{\mathcal{D}}_{M}$};
			\node[hs2,scale=2] at (-11, -10) (hL2) {$\hat{L}$};
			\node[hs2,scale=2] at (-5, -10) (LDM2) {$L_{\tilde{\mathcal{D}}_{M}}$};
			\node[hs2,scale=2] at (-11, -15) (hM2) {$\hat{\mathcal{M}}$};
			\node[hs2,scale=2] at (-5, -15) (hhM2) {$\hat{h}_{\hat{\mathcal{M}}}(\tilde{\mathcal{D}}_{M})$};
			\node[hs2,scale=2] at (-15, -12.5) (LH1) {$\mathbb{L}(\mathcal{H})$};
			
			\node at (-8,-12.5) (m2) {};
			\node at (8,-12.5) (m1) {};
			\node at (5,-12.5) (m22) {};
			\node at (-11,-12.5) (m11) {};

			\begin{scope}[line width=1pt]
			\draw[->] (s1) -- (DN1);
			\draw[->] (DN1) -- (LDN1);
			\draw[->] (DN1) -- (hL1);
			\draw[->] (hL1) -- (hM1);
			\draw[-] (hM1) -- (LDN1);
			\draw[->] (m1) -- (hhM1);
			\draw[->] (s2) -- (DN2);
			\draw[->] (s2) -- (DM2);
			\draw[->] (DN2) -- (hL2);
			\draw[->] (DM2) -- (LDM2);
			\draw[->] (hL2) -- (hM2);
			\draw[-] (hM2) -- (LDM2);
			\draw[->] (m2) -- (hhM2);
			\draw[->] (LH1) -- (m11);
			\draw[->] (LH2) -- (m22);
			\end{scope}
			
			\end{tikzpicture}
		\end{center}
		\caption{\footnotesize Two frameworks for learning hypotheses via Learning Spaces. (a) A sample of size $N+M$ is split into two, one of size $N$ that is used to estimate $\hat{\mathcal{M}}$ by minimization of $\hat{L}$ on $\mathbb{L}(\mathcal{H})$, and another of size $M$ to learn a hypothesis from $\hat{\mathcal{M}}$ by ERM. (b) The whole sample of size $N$ is used for estimating $\hat{\mathcal{M}}$ via the minimization of $\hat{L}$ on $\mathbb{L}(\mathcal{H})$ and to estimate hypotheses on $\hat{\mathcal{M}}$ via ERM.}
		\label{learn_hyp}
	\end{figure*}
	
	On the one hand, if the available sample is \textit{great enough}, then we may split it into $\mathcal{D}_{N}$ and $\tilde{\mathcal{D}}_{M}$, with \textit{great size} themselves, and learn with independent sample. On the other hand, if few samples are available, it could be better to learn by reusing, even if such framework is biased, since dividing the sample into two would generate even smaller samples. In the next section, we discuss what a sample \textit{great enough} means in these cases and better quantify the qualities and pitfalls of each framework.
	
	Besides type II and IV (cf. \eqref{ee23}), there is another estimation error that depends on the algorithm one chooses to learn on $\hat{\mathcal{M}}$. The type I estimation error is defined as
	\begin{align}
	\label{typeIe}
	\textbf{(I)} \begin{cases}
	\sup\limits_{h \in \hat{\mathcal{M}}} \left|L_{\tilde{\mathcal{D}}_{M}}(h) - L(h)\right| & \text{if learning with independent sample}\\
	\sup\limits_{h \in \hat{\mathcal{M}}} \left|L_{\mathcal{D}_{N}}(h) - L(h)\right| & \text{if learning by reusing}
	\end{cases},
	\end{align}
	which represents how well one can estimate the loss uniformly on $\hat{\mathcal{M}}$ by the empirical error under $\tilde{\mathcal{D}}_{M}$ or $\mathcal{D}_{N}$. In a posterior step, after $\hat{\mathcal{M}}$ is selected, one may wish to estimate the loss of the hypotheses in it, and how well this task is accomplished is measured by type I estimation error depending on how it is performed, by either reusing the same sample employed to obtain $\hat{\mathcal{M}}$ or by using an independent sample. 
	
	\section{Consistency of learning hypotheses spaces}
	\label{SecConsistency}
	
	The desired properties of a Model Selection framework are asymptotic zero estimation errors of learning on the selected model and convergence to the target model with probability 1. These properties are what we call consistency.
	
	\begin{definition} \normalfont
		\textbf{(Consistency)} A Model Selection framework is consistent if it returns a random model $\hat{\mathcal{M}}$ and an estimated hypothesis $\hat{h}(\mathbb{A}) \in \hat{\mathcal{M}}$ such that types I, II, III and IV estimation errors of learning on it converge in probability to zero, and $\hat{\mathcal{M}}$ converges to $\mathcal{M}^{\star}$ with probability one, when the sample size tends to infinity.
	\end{definition}
	
	\begin{remark}
		\normalfont
		The convergence of $\hat{\mathcal{M}}$ to $\mathcal{M}^{\star}$ implies the convergence to zero of type III estimation error. Hence, as type IV estimation error reduces to type II when $\hat{\mathcal{M}} = \mathcal{M}^{\star}$, the non-trivial conditions for consistency are convergence in probability to zero of types I and II estimation errors, and convergence of $\hat{\mathcal{M}}$ to $\mathcal{M}^{\star}$ with probability one. In some cases, depending on how the algorithm $\mathbb{A}$ is chosen to learn on $\hat{\mathcal{M}}$, convergence of type I estimation error implies convergence of type II, so convergence of type II estimation error may also be trivial (see \cite[Lemma~8.2]{devroye1996}).
	\end{remark}
	
	To show the consistency of Model Selection via Learning Spaces one should bound the tail probability of types I, II, III and IV estimation errors, implying their convergence to zero, and assert the convergence of $\hat{\mathcal{M}}$ to $\mathcal{M}^{\star}$. We start by showing the convergence of $\hat{\mathcal{M}}$ to $\mathcal{M}^{\star}$ with probability one.
	
	\subsection{Convergence to the target hypotheses space}
	\label{SecConvTM}
	
	In order to have $L(\hat{\mathcal{M}}) = L(\mathcal{M}^{\star})$, one does not need to know exactly $L(\mathcal{M})$ for all $\mathcal{M} \in \mathbb{L}(\mathcal{H})$, i.e., one does not need $\hat{L}(\mathcal{M}) = L(\mathcal{M}), \forall \mathcal{M} \in \mathbb{L}(\mathcal{H})$. We argue that it suffices to have $\hat{L}(\mathcal{M})$ close enough to $L(\mathcal{M})$ for all $\mathcal{M} \in \mathbb{L}(\mathcal{H})$ so the global minimums of $\mathbb{L}(\mathcal{H})$ will have the same error as $\mathcal{M}^{\star}$, even if it is not possible to properly estimate their error. This ``close enough'' depends on $P$, hence is not distribution-free, and is given by the \textit{maximum discrimination error} (MDE) of $\mathbb{L}(\mathcal{H})$ under $P$ defined as
	\begin{linenomath}
		\begin{equation*}
		\epsilon^{\star} = \epsilon^{\star}(\mathbb{L}(\mathcal{H}),P) \coloneqq \min\limits_{\substack{\mathcal{M} \in \mathbb{L}(\mathcal{H})\\L(\mathcal{M}) > L(\mathcal{M}^{\star}) }} L(\mathcal{M}) - L(\mathcal{M}^{\star}).
		\end{equation*}
	\end{linenomath}
	The MDE is the minimum difference between the out-of-sample error of a global target hypothesis and the best hypothesis in a model which does not contain a global target. In other words, it is the difference between the error of the best model $\mathcal{M}^{\star}$ and the second to best. The meaning of $\epsilon^{\star}$ is depicted in Figure \ref{epsilonstar}.
	
	\begin{figure}[ht]
		\begin{center}
			\begin{tikzpicture}[scale=0.5]
			
			\coordinate (y) at (-3.5,0);
			\coordinate (x) at (21,0);
			\draw[->] (y) -- (0,0) --  (x) node[right]
			{$L$};
			
			\draw[solid] (1,-0.2) -- (1,0.2);
			\draw[solid] (9,-0.2) -- (9,0.2);
			\draw[solid] (16,-0.2) -- (16,0.2);
			
			\draw[solid,red] (3,-0.2) -- (3,0.2);
			\draw[solid,red] (-1,-0.2) -- (-1,0.2);
			\draw[solid,blue] (6,-0.2) -- (6,0.2);
			\draw[solid,orange] (12.5,-0.2) -- (12.5,0.2);
			
			\draw[dashed] (1,0) -- (1,2);
			\draw[dashed] (9,0) -- (9,2);
			
			\draw [decorate,decoration={brace,amplitude=10pt},xshift=0pt,yshift=0pt,line width=1pt]
			(1,2) -- (9,2) node [black,midway,yshift=17.5pt] {$\epsilon^{\star}$};
			
			\node at (1,-0.75) {$L(\mathcal{M}^{\star}) = L(\mathcal{M}_{1})$};
			\node at (9,-0.75) {$L(\mathcal{M}_{2})$};
			\node at (16,-0.75) {$L(\mathcal{M}_{3})$};
			
			\node at (3,1) {\color{red} $\hat{L}(\mathcal{M}^{\star})$};
			\node at (-1,1) {\color{red} $\hat{L}(\mathcal{M}_{1})$};
			\node at (6,1) {\color{blue} $\hat{L}(\mathcal{M}_{2})$};
			\node at (12.5,1) {\color{orange} $\hat{L}(\mathcal{M}_{3})$};
			
			\draw [red,decorate, 
			decoration = {brace},line width=0.5pt] (-3,-0.5) --  (-3,0.5);
			\draw [red,decorate, 
			decoration = {brace,mirror},line width=0.5pt,xshift=-3.5pt] (5,-0.5) --  (5,0.5);
			\draw [blue,decorate, 
			decoration = {brace},line width=0.5pt,xshift=3.5pt] (5,-0.5) --  (5,0.5);
			\draw [blue,decorate, 
			decoration = {brace,mirror},line width=0.5pt] (13,-0.5) --  (13,0.5);
			\draw [orange,decorate, 
			decoration = {brace},line width=0.5pt] (12,-0.5) --  (12,0.5);
			\draw [orange,decorate, 
			decoration = {brace,mirror},line width=0.5pt] (20,-0.5) --  (20,0.5);
			
			\draw[solid,red,line width=0.75pt] (-3.1,0) -- (5,0);
			\draw[solid,blue,line width=0.75pt] (5,0) -- (13.1,0);
			\draw[solid,orange,line width=0.75pt] (13.1,0.0) -- (20.1,0);
			\draw[solid,orange,line width=0.75pt] (11.9,0.1) -- (13.1,0.1);
			\end{tikzpicture}
		\end{center}
		\caption{\footnotesize The errors of the equivalence classes (cf. \eqref{equiv_class}) of $\mathbb{L}(\mathcal{H})$ in ascending order. The MDE $\epsilon^{\star}$ is the difference between the error of the target class $\mathcal{M}^{\star}$, and the second to best $\mathcal{M}_{2}$. The colored intervals represent a distance of at most $\epsilon^{\star}/2$ from the real error of each model, and the colored estimated errors $\hat{L}$ illustrate a case such that the estimated error is within $\epsilon^{\star}/2$ of the real error for all models. The class $\mathcal{M}_{1}$ has the same error as $\mathcal{M}^{\star}$, but has a smaller estimated error, and, by the definition of $\mathcal{M}^{\star}$, greater VC dimension. Note from the representation that if one can estimate $\hat{L}$ within a margin of error of $\epsilon^{\star}/2$, then $\hat{\mathcal{M}}$ will be a model with the same error as $\mathcal{M}^{\star}$, in this case $\mathcal{M}_{1}$ (cf. Proposition \ref{proposition_principal}).} \label{epsilonstar}
	\end{figure}
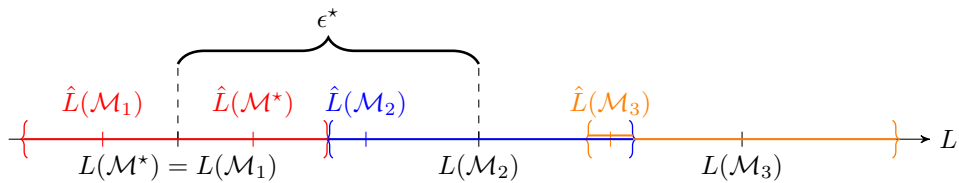
	
	The MDE is greater than zero if there exists at least one $\mathcal{M} \in \mathbb{L}(\mathcal{H})$ such that $h^{\star} \cap \mathcal{M} = \emptyset$, i.e., there is a subset in $\mathbb{L}(\mathcal{H})$ which does not contain a target hypothesis. If  $h^{\star} \cap \mathcal{M} \neq \emptyset$ for all $\mathcal{M} \in \mathbb{L}(\mathcal{H})$, then type III estimation error is zero, and type IV reduces to type II. From this point, we assume $\epsilon^{\star} > 0$.
	
	The terminology MDE is used for we can show that a fraction of $\epsilon^{\star}$ is the greatest error one can commit when estimating $L(\mathcal{M}_{i})$ by $\hat{L}(\mathcal{M}_{i})$ for all $i$ in order for $L(\mathcal{\hat{M}})$ to be equal to $L(\mathcal{M}^{\star})$. This is the result of the next proposition.
	
	\begin{proposition}
		\label{proposition_principal}
		Suppose there exists $\delta > 0$ such that
		\begin{equation}
		\label{cond_prop_principal}
		\mathbb{P}\Big(\sup_{i \in \mathcal{J}} \left|L(\mathcal{M}_{i}) - \hat{L}(\mathcal{M}_{i})\right| < \epsilon^{\star}/2\Big) \geq 1 - \delta.
		\end{equation}
		Then
		\begin{equation}
		\label{prob_equal}
		\mathbb{P}\Big(L(\mathcal{\hat{M}}) = L(\mathcal{M}^{\star})\Big) \geq 1-\delta.
		\end{equation}
	\end{proposition}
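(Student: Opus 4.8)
The plan is to work on the event $E \coloneqq \{\sup_{i \in \mathcal{J}} |L(\mathcal{M}_i) - \hat{L}(\mathcal{M}_i)| < \epsilon^\star/2\}$, which has probability at least $1-\delta$ by hypothesis, and to show that on $E$ we necessarily have $L(\hat{\mathcal{M}}) = L(\mathcal{M}^\star)$; the conclusion \eqref{prob_equal} then follows immediately by monotonicity of $\mathbb{P}$. So everything reduces to a deterministic argument: fix an $\omega \in E$ and reason about the estimated errors $\hat{L}(\mathcal{M}_i) = \hat{L}(\mathcal{M}_i)(\omega)$.

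First I would recall that $\hat{\mathcal{M}}$ is, by its definition \eqref{Ghat}, a global minimizer of $\hat{L}$ over $\mathbb{L}(\mathcal{H})$ (it is the least-VC-dimension member of the class $\hat{\mathcal{L}}$ of global $\hat{L}$-minimums), so in particular $\hat{L}(\hat{\mathcal{M}}) \leq \hat{L}(\mathcal{M}^\star)$, since $\mathcal{M}^\star$ is one of the competitors. Next I would use the two-sided bound on $E$ twice: on one side, $\hat{L}(\mathcal{M}^\star) < L(\mathcal{M}^\star) + \epsilon^\star/2$; on the other side, for the selected model, $L(\hat{\mathcal{M}}) < \hat{L}(\hat{\mathcal{M}}) + \epsilon^\star/2$. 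Chaining these with the minimality inequality gives
\begin{equation*}
L(\hat{\mathcal{M}}) < \hat{L}(\hat{\mathcal{M}}) + \epsilon^\star/2 \leq \hat{L}(\mathcal{M}^\star) + \epsilon^\star/2 < L(\mathcal{M}^\star) + \epsilon^\star,
\end{equation*}
i.e. $L(\hat{\mathcal{M}}) - L(\mathcal{M}^\star) < \epsilon^\star$.

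Now I would invoke the definition of the MDE $\epsilon^\star$: it is the smallest strictly positive gap $L(\mathcal{M}) - L(\mathcal{M}^\star)$ over all $\mathcal{M} \in \mathbb{L}(\mathcal{H})$, and $L(\mathcal{M}^\star) = \min_{\mathcal{M}} L(\mathcal{M})$ is the global minimum of the error. Hence every $\mathcal{M} \in \mathbb{L}(\mathcal{H})$ has either $L(\mathcal{M}) = L(\mathcal{M}^\star)$ or $L(\mathcal{M}) \geq L(\mathcal{M}^\star) + \epsilon^\star$; there is nothing strictly in between. Since we just showed $L(\hat{\mathcal{M}}) < L(\mathcal{M}^\star) + \epsilon^\star$, the second alternative is excluded, forcing $L(\hat{\mathcal{M}}) = L(\mathcal{M}^\star)$. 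This holds for every $\omega \in E$, so $E \subseteq \{L(\hat{\mathcal{M}}) = L(\mathcal{M}^\star)\}$ and therefore $\mathbb{P}(L(\hat{\mathcal{M}}) = L(\mathcal{M}^\star)) \geq \mathbb{P}(E) \geq 1-\delta$.

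The argument is essentially a ``gap'' argument and has no real obstacle; the one place to be slightly careful is the bookkeeping with strict versus non-strict inequalities — the hypothesis \eqref{cond_prop_principal} gives a \emph{strict} bound $< \epsilon^\star/2$, which is exactly what is needed so that the chained inequality lands strictly below $L(\mathcal{M}^\star)+\epsilon^\star$ and the dichotomy from the MDE definition can be applied cleanly. One should also note that $\hat{\mathcal{M}}$ being a minimizer over equivalence classes $\nicefrac{\mathbb{L}(\mathcal{H})}{\hat\sim}$ rather than over $\mathbb{L}(\mathcal{H})$ itself is harmless, since members of a $\hat\sim$-class share the same $\hat{L}$ value, so ``global minimum of $\mathbb{L}(\mathcal{H})$'' in the sense of Definition \ref{local_min} is exactly what \eqref{Ghat} produces, and $\mathcal{M}^\star$ (or rather its class) is a legitimate competitor in that minimization.
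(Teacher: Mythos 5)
Your proof is correct and follows essentially the same route as the paper's: both rest on the minimality $\hat{L}(\hat{\mathcal{M}}) \leq \hat{L}(\mathcal{M}^{\star})$, the uniform $\epsilon^{\star}/2$ closeness of $\hat{L}$ to $L$, and the definition of the MDE, the only cosmetic difference being that the paper argues by showing every model with $L(\mathcal{M}) > L(\mathcal{M}^{\star})$ has $\hat{L}(\mathcal{M}) > \hat{L}(\mathcal{M}^{\star})$, while you chain the inequalities to get $L(\hat{\mathcal{M}}) < L(\mathcal{M}^{\star}) + \epsilon^{\star}$ and then invoke the gap dichotomy. The bookkeeping of strict inequalities and the remark about the $\hat\sim$-equivalence classes are handled consistently with the paper.
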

	
	\begin{remark}
		\normalfont
		Since there may exist $\mathcal{M} \in \ \nicefrac{\mathbb{L}(\mathcal{H})}{\sim}$ with $L(\mathcal{M}) = L(\mathcal{\mathcal{M}^{\star}})$ and $d_{VC}(\mathcal{M}) > d_{VC}(\mathcal{M}^{\star})$, condition \eqref{cond_prop_principal} guarantees only that the estimated error of both $\mathcal{M}$ and $\mathcal{M}^{\star}$ is lesser than the estimated error of any model with error greater than theirs, but there may happen that $\hat{L}(\mathcal{M}) < \hat{L}(\mathcal{M}^{\star})$ (see Figure \ref{epsilonstar} for an example).
	\end{remark}
	
	From now on, we consider that $\hat{L}$ is of the form
	\begin{equation}
	\label{form_Lhat}
	\hat{L}(\mathcal{M}) = \frac{1}{m} \ \sum_{j=1}^{m} \ \hat{L}^{(j)}(\hat{h}^{(j)}_{\mathcal{M}}),
	\end{equation}
	in which there are $m$ pairs of independent training and validation samples, $\hat{L}^{(j)}$ is the empirical error under the $j$-th validation sample and $\hat{h}^{(j)}_{\mathcal{M}}$ is the ERM hypothesis of $\mathcal{M}$ under the $j$-th training sample, denoted by $\mathcal{D}_{N}^{(j)}$. We assume independence between samples within a pair $j$, but there may exist dependence between samples of distinct pairs $j,j^{\prime}$. The validation sample and k-fold cross validation estimators presented in Section \ref{esti_Lhat} are of the form \eqref{form_Lhat} with $m = 1$ and $m = k$, respectively. In this case, we may obtain a bound for \eqref{prob_equal} depending on $\epsilon^{\star}$, on $d_{VC}(\mathbb{L}(\mathcal{H}))$, and on bounds for tail probabilities of type I estimation error under each validation and training sample.
	
	These bounds also depend on the number of maximal models of $\mathbb{L}(\mathcal{H})$, which are models in
	\begin{align*}
	\text{Max } \mathbb{L}(\mathcal{H}) = \left\{\mathcal{M} \in \mathbb{L}(\mathcal{H}): \mathcal{M} \subset \mathcal{M}^{\prime}, \ \mathcal{M}^{\prime} \in \mathbb{L}(\mathcal{H}) \implies \mathcal{M} = \mathcal{M}^{\prime} \right\},
	\end{align*}
	that are models which are not contained in any element in $\mathbb{L}(\mathcal{H})$ beside themselves. We denote $\mathfrak{m}(\mathbb{L}(\mathcal{H})) = |\text{Max } \mathbb{L}(\mathcal{H})|$. If $\mathbb{L}(\mathcal{H})$ is a complete lattice, then the only maximal element of it is its greatest element, so $\mathfrak{m}(\mathbb{L}(\mathcal{H})) = 1$. We have the following rate of convergence of $L(\hat{\mathcal{M}})$ to $L(\mathcal{M}^{\star})$, and condition for $\hat{\mathcal{M}}$ to converge to $\mathcal{M}^{\star}$ with probability one.
	
	\begin{theorem}
		\label{theorem_principal_convergence}
		For each $\epsilon > 0$ let $\{B_{N,\epsilon}: N \geq 1\}$ and $\{\hat{B}_{N,\epsilon}: N \geq 1\}$ be sequences of positive real-valued increasing functions with domain $\mathbb{Z}_{+}$ satisfying $\lim\limits_{N \to \infty} B_{N,\epsilon}(k) = \lim\limits_{N \to \infty} \hat{B}_{N,\epsilon}(k) = 0$, for all $\epsilon > 0$ and $k \in \mathbb{Z}_{+}$ fixed, and such that
		\begin{linenomath}
			\begin{align}
			\label{maxj}
			\max_{j} \mathbb{P}\Big(\sup\limits_{h \in \mathcal{M}} \big|L_{\mathcal{D}_{N}^{(j)}}(h) - L(h)\big| > \epsilon \Big) \leq B_{N,\epsilon}(d_{VC}(\mathcal{M})) & \text{ and } &\\ \nonumber
			\max_{j} \mathbb{P}\Big(\sup\limits_{h \in \mathcal{M}} \big|\hat{L}^{(j)}(h) - L(h)\big| > \epsilon \Big) \leq \hat{B}_{N,\epsilon}(d_{VC}(\mathcal{M})), & & 
			\end{align}
		\end{linenomath}
		for all $\mathcal{M} \in \mathbb{L}(\mathcal{M})$, recalling that $L_{\mathcal{D}_{N}^{(j)}}$ and $\hat{L}^{(j)}$ represent the empirical error under the $j$-th training and validation samples, respectively. Let $\mathcal{\hat{M}} \in \mathbb{L}(\mathcal{H})$ be a random model learned by $\mathbb{M}_{\mathbb{L}(\mathcal{H})}$. Then,
		\begin{align}
		\label{bound_pM}
		\mathbb{P}\left(L(\hat{\mathcal{M}}) \neq L(\mathcal{M}^{\star})\right) &\leq m \ \mathfrak{m}(\mathbb{L}(\mathcal{H})) \left[\hat{B}_{N,\epsilon^{\star}/4}(d_{VC}(\mathbb{L}(\mathcal{H}))) + B_{N,\epsilon^{\star}/8}(d_{VC}(\mathbb{L}(\mathcal{H})))\right],
		\end{align}
		in which $m$ is the number of pairs considered to calculate \eqref{form_Lhat}. Furthermore, if
		\begin{align}
		\label{as_conv}
		\max_{\mathcal{M} \in \mathbb{L}(\mathcal{H})} \max_{j} \sup\limits_{h \in \mathcal{M}} \big|L_{\mathcal{D}_{N}^{(j)}}(h) - L(h)\big| \xrightarrow{\text{a.s.}} 0 & \text{ and } &\\ \nonumber
		\max_{\mathcal{M} \in \mathbb{L}(\mathcal{H})} \max_{j} \sup\limits_{h \in \mathcal{M}} \big|\hat{L}^{(j)}(h) - L(h)\big| \xrightarrow{\text{a.s.}} 0, & & 
		\end{align}
		then
		\begin{equation*}
		\lim_{N \rightarrow \infty} \mathbb{P}\left(\hat{\mathcal{M}} = \mathcal{M}^{\star}\right) = 1.
		\end{equation*}
	\end{theorem}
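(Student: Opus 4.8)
The plan is to obtain the finite-sample bound \eqref{bound_pM} from Proposition \ref{proposition_principal} by controlling the uniform deviation $\sup_{i\in\mathcal{J}}\big|L(\mathcal{M}_i)-\hat L(\mathcal{M}_i)\big|$, and then to deduce the almost-sure statement by running the same estimates pathwise under \eqref{as_conv}. For a fixed model $\mathcal{M}$ and a fixed pair $j$ I would first split
\[
\hat L^{(j)}(\hat h^{(j)}_{\mathcal{M}})-L(h^\star_{\mathcal{M}})
=\Big[\hat L^{(j)}(\hat h^{(j)}_{\mathcal{M}})-L(\hat h^{(j)}_{\mathcal{M}})\Big]
+\Big[L(\hat h^{(j)}_{\mathcal{M}})-L(h^\star_{\mathcal{M}})\Big].
\]
The first bracket is at most $\sup_{h\in\mathcal{M}}\big|\hat L^{(j)}(h)-L(h)\big|$ in absolute value, and, since $\hat h^{(j)}_{\mathcal{M}}$ minimizes $L_{\mathcal{D}_N^{(j)}}$ over $\mathcal{M}$, the usual empirical-risk-minimization inequality gives $0\le L(\hat h^{(j)}_{\mathcal{M}})-L(h^\star_{\mathcal{M}})\le 2\sup_{h\in\mathcal{M}}\big|L_{\mathcal{D}_N^{(j)}}(h)-L(h)\big|$. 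As $\hat L(\mathcal{M})$ is the average of the $m$ numbers $\hat L^{(j)}(\hat h^{(j)}_{\mathcal{M}})$, the event $\{|\hat L(\mathcal{M})-L(\mathcal{M})|\ge\epsilon^\star/2\}$ forces some term to lie at least $\epsilon^\star/2$ from $L(\mathcal{M})$, hence forces, for some $j$, either $\sup_{h\in\mathcal{M}}|\hat L^{(j)}(h)-L(h)|\ge\epsilon^\star/4$ or $\sup_{h\in\mathcal{M}}|L_{\mathcal{D}_N^{(j)}}(h)-L(h)|\ge\epsilon^\star/8$.

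Next I would union over $i\in\mathcal{J}$. Each $\mathcal{M}_i$ sits inside some maximal model $\mathcal{M}'\in\text{Max }\mathbb{L}(\mathcal{H})$, and a supremum over $h\in\mathcal{M}_i$ is bounded by the same supremum over $h\in\mathcal{M}'$, so the union over all $i$ lies inside a union over the $\mathfrak{m}(\mathbb{L}(\mathcal{H}))$ maximal models. Applying \eqref{maxj} to each maximal model, using that $B_{N,\epsilon}$ and $\hat B_{N,\epsilon}$ are nondecreasing in their argument and $d_{VC}(\mathcal{M}')\le d_{VC}(\mathbb{L}(\mathcal{H}))$, and taking a union bound over the $m$ pairs and the $\mathfrak{m}(\mathbb{L}(\mathcal{H}))$ maximal models, I get
\[
\mathbb{P}\Big(\sup_{i\in\mathcal{J}}\big|L(\mathcal{M}_i)-\hat L(\mathcal{M}_i)\big|\ge\epsilon^\star/2\Big)
\le m\,\mathfrak{m}(\mathbb{L}(\mathcal{H}))\Big[\hat B_{N,\epsilon^\star/4}\big(d_{VC}(\mathbb{L}(\mathcal{H}))\big)+B_{N,\epsilon^\star/8}\big(d_{VC}(\mathbb{L}(\mathcal{H}))\big)\Big],
\]
and feeding this into Proposition \ref{proposition_principal}, with $\delta$ equal to the right-hand side, yields \eqref{bound_pM}.

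For the almost-sure part, \eqref{as_conv} and the finiteness of $\mathbb{L}(\mathcal{H})$ let me run the same chain of inequalities pathwise: for almost every $\omega$ there is $N_0(\omega)$ with $\sup_{i\in\mathcal{J}}|\hat L(\mathcal{M}_i)-L(\mathcal{M}_i)|<\epsilon^\star/2$ for all $N\ge N_0(\omega)$. By the reasoning behind Proposition \ref{proposition_principal}, on that event every global minimum of $\hat L$ has out-of-sample error $L(\mathcal{M}^\star)$, so $L(\hat{\mathcal{M}})=L(\mathcal{M}^\star)$ and $\hat{\mathcal{M}}$ lies in the optimal error class $\mathcal{L}^\star$, whence $d_{VC}(\hat{\mathcal{M}})\ge d_{VC}(\mathcal{M}^\star)$ by definition of $\mathcal{M}^\star$; it remains to establish the matching reverse inequality, after which $\mathbb{P}(\hat{\mathcal{M}}=\mathcal{M}^\star)\to1$ follows.

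Everything up to the error equality $L(\hat{\mathcal{M}})=L(\mathcal{M}^\star)$ is routine: a union bound over pairs and maximal models, the ERM inequality, and monotonicity of the deviation bounds in the VC dimension. I expect the genuine obstacle to be the final step, upgrading $L(\hat{\mathcal{M}})=L(\mathcal{M}^\star)$ to the identity $\hat{\mathcal{M}}=\mathcal{M}^\star$, i.e.\ also matching the VC dimension. All models attaining the optimal error share the same $L$, so closeness of $\hat L$ to $L$ does not separate them; one must use that $\hat{\mathcal{M}}$ is selected, through the $\hat{\sim}$-quotient, as the \emph{simplest} global minimum of $\hat L$, and argue that, with probability tending to one, $\mathcal{M}^\star$ itself is a global minimum of $\hat L$, so that this tie-breaking rule returns a model of VC dimension $d_{VC}(\mathcal{M}^\star)$. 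This is the place where the specific construction of $\hat{\mathcal{M}}$, rather than a generic concentration estimate, must carry the argument.
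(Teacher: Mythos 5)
Your derivation of the finite-sample bound \eqref{bound_pM} is correct and follows essentially the same route as the paper: decompose $\hat{L}^{(j)}(\hat{h}^{(j)}_{\mathcal{M}})-L(\mathcal{M})$ into a validation-deviation term and an ERM excess-risk term, note that the latter exceeding $\epsilon^{\star}/4$ forces the training-sample deviation to exceed $\epsilon^{\star}/8$, reduce the maximum over $\mathcal{J}$ to the maximal models by monotonicity of suprema over nested classes, take a union bound over the $m$ pairs and the $\mathfrak{m}(\mathbb{L}(\mathcal{H}))$ maximal models, and feed the resulting bound on $\mathbb{P}\big(\sup_{i}|L(\mathcal{M}_{i})-\hat{L}(\mathcal{M}_{i})|\geq\epsilon^{\star}/2\big)$ into Proposition \ref{proposition_principal}. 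The thresholds $\epsilon^{\star}/4$ and $\epsilon^{\star}/8$ and the role of $\mathfrak{m}(\mathbb{L}(\mathcal{H}))$ come out exactly as in the paper, so this half needs no changes.

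The second half, however, is left with a genuine gap, and you have correctly located it: your pathwise argument only yields that eventually $L(\hat{\mathcal{M}})=L(\mathcal{M}^{\star})$ and hence $d_{VC}(\hat{\mathcal{M}})\geq d_{VC}(\mathcal{M}^{\star})$, and the step from equality of errors to $\hat{\mathcal{M}}=\mathcal{M}^{\star}$ is not supplied. Note moreover that the repair you sketch---showing that $\mathcal{M}^{\star}$ is a global minimum of $\hat{L}$ with probability tending to one---cannot be obtained from the concentration estimates you have, precisely for the reason you state: a class $\mathcal{M}_{1}$ with $L(\mathcal{M}_{1})=L(\mathcal{M}^{\star})$ but larger VC dimension is not separated from $\mathcal{M}^{\star}$ by any event of the form $\sup_{i}|L(\mathcal{M}_{i})-\hat{L}(\mathcal{M}_{i})|<\epsilon$, so $\hat{L}(\mathcal{M}_{1})<\hat{L}(\mathcal{M}^{\star})$ can occur on that event and the tie-breaking by minimal VC dimension never gets to compare $\mathcal{M}^{\star}$ with $\mathcal{M}_{1}$ (this is exactly the situation of Figure \ref{epsilonstar}). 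The paper closes this step differently: it observes the inclusion of events
\begin{equation*}
\Big\{\max_{\mathcal{M}\in\mathbb{L}(\mathcal{H})}\big|L(\mathcal{M})-\hat{L}(\mathcal{M})\big|=0\Big\}\subset\big\{\hat{\mathcal{M}}=\mathcal{M}^{\star}\big\},
\end{equation*}
since when $\hat{L}$ coincides exactly with $L$ on every model the equivalence relation $\hat{\sim}$ coincides with $\sim$ and the two-stage minimization defining $\hat{\mathcal{M}}$ reproduces the definition of $\mathcal{M}^{\star}$, and then asserts that under \eqref{as_conv} the probability of the left-hand event converges to one. So the missing ingredient in your write-up is not a sharper use of the bounds $B_{N,\epsilon}$, $\hat{B}_{N,\epsilon}$, but the passage to this exact-coincidence event; without it (or some substitute handling ties in $L$ between classes of different VC dimension) the claim $\lim_{N}\mathbb{P}(\hat{\mathcal{M}}=\mathcal{M}^{\star})=1$ remains unproved in your proposal.
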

	
	\begin{remark}
		\normalfont
		A bound for $\mathbb{P}(L(\hat{\mathcal{M}}) \neq L(\mathcal{M}^{\star}))$ in the case of k-fold cross validation may be obtained from the bounds for \eqref{maxj} given in, for example, \cite[Theorem~4.4]{vapnik1998}, recalling that the sample size in each training and validation sample is $(k-1)n$ and $n$, respectively, with $N = kn$. From these bounds, which are exponential on $N$, and Borel-Cantelli lemma \cite[Theorem~4.3]{billingsley2008}, the almost sure convergences \eqref{as_conv} follow in this case, hence $\hat{\mathcal{M}}$ converges to $\mathcal{M}^{\star}$ with probability one.  Analogously, we may obtain a bound when an independent validation sample is considered.
	\end{remark}
	
	From bound \eqref{bound_pM} we first see that we have to better estimate with the training samples, that require a precision of $\epsilon^{\star}/8$ in contrast to a precision $\epsilon^{\star}/4$ with the validation samples. Hence, as is done in k-fold cross validation, it is better to consider a greater sample size for training rather than for validation. Moreover, from this bound follows that, with a fixed sample size, we can have a tighter bound for $\mathbb{P}(L(\hat{\mathcal{M}}) \neq L(\mathcal{M}^{\star}))$ by choosing a Learning Space with small $d_{VC}(\mathbb{L}(\mathcal{H}))$ and few maximal elements, while attempting to increase $\epsilon^{\star}$. Of course, there is a trade-off between $d_{VC}(\mathbb{L}(\mathcal{H}))$ and the number of maximal elements of $\mathbb{L}(\mathcal{H})$, the only known free quantities in bound $\eqref{bound_pM}$, since the sample size is fixed and $\epsilon^{\star}$ is unknown.
	
	As an illustrative example, let $\mathbb{L}(\mathcal{H})$ be the Partition Lattice Learning Space and
	\begin{align*}
	\mathbb{L}_{2}(\mathcal{H}) \coloneqq \{\mathcal{M} \in \mathbb{L}(\mathcal{H}): d_{VC}(\mathcal{M}) <= 2\}
	\end{align*}
	be its models with VC dimension not greater than two. The collection $\mathbb{L}_{2}(\mathcal{H})$ has $2^{|\mathcal{X}| - 1}$ models, that is the number of partitions of $\mathcal{X}$ with at most two blocks, and is a Learning Space, since condition (ii) is inherited from $\mathbb{L}(\mathcal{H})$, and it covers $\mathcal{H}$: any given $h \in \mathcal{H}$ is in the model generated by partition $\{\{x \in \mathcal{X}: h(x) = 0\},\{x \in \mathcal{X}: h(x) = 1\}\}$ which has at most two blocks. 
	
	On the one hand, since $\mathbb{L}(\mathcal{H})$ is a complete lattice, it has only one maximal element, and $d_{VC}(\mathbb{L}(\mathcal{H})) = d_{VC}(\mathcal{H})$ since its maximal element is $\mathcal{H}$. On the other hand, $\mathfrak{m}(\mathbb{L}_{2}(\mathcal{H})) = 2^{|\mathcal{X}| - 1} - 1$ since the only element in it that is not maximal is the model of the constant hypotheses, and $d_{VC}(\mathbb{L}_{2}(\mathcal{H})) = 2$ by definition. Furthermore, $\mathcal{M}^{\star}$, which has VC dimension at most 2, and $\epsilon^{\star}$, are the same on both spaces.
	
	The form of bound \eqref{bound_pM} may dictate on which of these spaces we can have the tightest bound for $\mathbb{P}(L(\hat{\mathcal{M}}) \neq L(\mathcal{M}^{\star}))$, so may guide the choice of the Learning Space in this scenario. Nevertheless, in practice, it is important to also consider the computational complexity of $\hat{\mathcal{M}}$ in each instance. For this particular case, we discuss in Section \ref{U-curve} a non-exhaustive algorithm to compute $\hat{\mathcal{M}}$ in $\mathbb{L}(\mathcal{H})$, while we need an exhaustive search of $\mathbb{L}_{2}(\mathcal{H})$ for this task. Therefore, the choice of a Learning Space should mind the consistency of $\hat{\mathcal{M}}$ and all prior information about the problem at hand, but also the computational aspect that enables the application of the method.
	
	The bound of Theorem \ref{theorem_principal_convergence} is the first result which supports that by properly modeling the Learning Space one may better learn on $\mathcal{H}$, in this instance by having a greater probability of learning on a model with the same error as $\mathcal{M}^{\star}$, the best model in $\mathbb{L}(\mathcal{H})$. In the next section, we develop bounds for types I, II, III, and IV estimation errors on $\hat{\mathcal{M}}$ which also support this paradigm.
	
	\subsection{Convergence of estimation errors on $\hat{\mathcal{M}}$}
	
	The type III estimation error on $\hat{\mathcal{M}}$ depend solely on $\hat{\mathcal{M}}$, while types I, II and IV depend on $\hat{\mathcal{M}}$, but also on the choice of algorithm $\mathbb{A}$ employed to learn a hypotheses $\hat{h}_{\hat{\mathcal{M}}}(\mathbb{A}) \in \hat{\mathcal{M}}$ (cf. \eqref{ee23} and \eqref{typeIe}). In this section, we consider two possible algorithms, described in Figure \ref{learn_hyp} as learning by reusing, in which we consider the ERM hypothesis of $\hat{\mathcal{M}}$ under sample $\mathcal{D}_{N}$, and learning with independent sample, in which we consider the ERM hypothesis of $\hat{\mathcal{M}}$ under sample $\tilde{\mathcal{D}}_{M}$, independent of $\mathcal{D}_{N}$. We start by discussing in detail the case of an independent sample, and then briefly discuss learning by reusing.
	
	\subsubsection{Learning with independent sample}
	
	Bounds for types I and II estimation errors when learning on a random model with a sample independent of the one employed to compute such random model, may be obtained when there is a bound for them on each $\mathcal{M} \in \mathbb{L}(\mathcal{H})$ under the independent sample. This is the content of Theorem \ref{bound_constant}.
	
	\begin{theorem}
		\label{bound_constant}	
		Assume we are learning with an independent sample $\tilde{\mathcal{D}}_{M}$ and that for each $\epsilon > 0$ there exist sequences $\{B^{I}_{M,\epsilon}: M \geq 1\}$ and $\{B^{II}_{M,\epsilon}: M \geq 1\}$ of positive real-valued increasing functions with domain $\mathbb{Z}_{+}$ satisfying $\lim\limits_{M \to \infty} B^{I}_{M,\epsilon}(k) = \lim\limits_{M \to \infty} B^{II}_{M,\epsilon}(k) = 0$, for all $\epsilon > 0$ and $k \in \mathbb{Z}_{+}$ fixed, such that
		\begin{linenomath}
			\begin{align}
			\label{bound_theoremBC}
			\mathbb{P}\Big(\sup\limits_{h \in \mathcal{M}} \big|L_{\tilde{\mathcal{D}}_{M}}(h) - L(h) \big| > \epsilon \Big) \leq B^{I}_{M,\epsilon}(d_{VC}(\mathcal{M})) & \text{ and } &\\ \nonumber
			\mathbb{P}\Big(L(\hat{h}_{\mathcal{M}}(\tilde{\mathcal{D}}_{M})) - L(h^{\star}_{\mathcal{M}}) > \epsilon \Big) \leq B^{II}_{M,\epsilon}(d_{VC}(\mathcal{M})), & & 
			\end{align}
		\end{linenomath}
		for any $\mathcal{M} \in \mathbb{L}(\mathcal{H})$. Let $\mathcal{\hat{M}} \in \mathbb{L}(\mathcal{H})$ be a random model learned by $\mathbb{M}_{\mathbb{L}(\mathcal{H})}$. Then, for any $\epsilon > 0$,
		\begin{linenomath}
			\begin{align*}
			\textbf{(I)} \ \mathbb{P}\Big(&\sup\limits_{h \in \mathcal{\hat{M}}} \big|L_{\tilde{\mathcal{D}}_{M}}(h) - L(h) \big| > \epsilon \Big) \leq \mathbb{E}\Big[B^{I}_{M,\epsilon}(d_{VC}(\mathcal{\hat{M}}))\Big] \leq B^{I}_{M,\epsilon}\left(d_{VC}(\mathbb{L}(\mathcal{H}))\right)
			\end{align*}
		\end{linenomath}
		and
		\begin{linenomath}
			\begin{align*}
			\textbf{(II)} \ \mathbb{P}\Big(L(\hat{h}_{\mathcal{\hat{M}}}(\tilde{\mathcal{D}}_{M})) - L(h^{\star}_{\mathcal{\hat{M}}}) > \epsilon \Big) \leq \mathbb{E}\Big[B^{II}_{M,\epsilon}(d_{VC}(\mathcal{\hat{M}}))\Big] \leq B^{II}_{M,\epsilon}\left(d_{VC}(\mathbb{L}(\mathcal{H}))\right),
			\end{align*}
		\end{linenomath}
		in which the expectations are over all samples $\mathcal{D}_{N}$, from which $\hat{\mathcal{M}}$ is calculated. Since $d_{VC}(\mathbb{L}(\mathcal{H})) < \infty$, both probabilities above converge to zero when $M \to \infty$.
	\end{theorem}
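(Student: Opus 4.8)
The plan is to condition on the sample $\mathcal{D}_{N}$ used to build $\hat{\mathcal{M}}$ and exploit its independence from $\tilde{\mathcal{D}}_{M}$, so that the per-model bounds \eqref{bound_theoremBC} transfer directly to the random model $\hat{\mathcal{M}}$. Fix $\epsilon > 0$. Since $\mathbb{M}_{\mathbb{L}(\mathcal{H})}$ is $(\Omega,\mathcal{S})$-measurable and acts only on $\mathcal{D}_{N}$ (and on the estimator $\hat{L}$, itself a function of $\mathcal{D}_{N}$), the model $\hat{\mathcal{M}}$ is $\sigma(\mathcal{D}_{N})$-measurable; on the other hand, for each fixed $i \in \mathcal{J}$ both $\sup_{h \in \mathcal{M}_{i}}|L_{\tilde{\mathcal{D}}_{M}}(h) - L(h)|$ and $L(\hat{h}_{\mathcal{M}_{i}}(\tilde{\mathcal{D}}_{M})) - L(h^{\star}_{\mathcal{M}_{i}})$ are $\sigma(\tilde{\mathcal{D}}_{M})$-measurable random variables, measurability of the supremum being inherited from the standing measurability assumption on $\mathcal{H}$. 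As $\mathcal{D}_{N}$ and $\tilde{\mathcal{D}}_{M}$ are independent, conditionally on $\mathcal{D}_{N}$ the sample $\tilde{\mathcal{D}}_{M}$ is still i.i.d.\ with law $P$ while $\hat{\mathcal{M}}$ is frozen, so the first bound in \eqref{bound_theoremBC}, applied to the now-deterministic model $\hat{\mathcal{M}}$, gives
\[
\mathbb{P}\Big(\sup_{h \in \hat{\mathcal{M}}}\big|L_{\tilde{\mathcal{D}}_{M}}(h) - L(h)\big| > \epsilon \ \Big|\ \mathcal{D}_{N}\Big) \leq B^{I}_{M,\epsilon}\big(d_{VC}(\hat{\mathcal{M}})\big).
\]

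Taking expectations over $\mathcal{D}_{N}$ and using the tower property yields $\mathbb{P}(\sup_{h\in\hat{\mathcal{M}}}|L_{\tilde{\mathcal{D}}_{M}}(h)-L(h)|>\epsilon) \le \mathbb{E}[B^{I}_{M,\epsilon}(d_{VC}(\hat{\mathcal{M}}))]$; equivalently, since $|\mathbb{L}(\mathcal{H})|<\infty$, one may just sum over the finite partition $\{\hat{\mathcal{M}} = \mathcal{M}_{i}\}_{i\in\mathcal{J}}$, factor each summand by independence, and apply \eqref{bound_theoremBC} term by term. The second inequality is then immediate: $d_{VC}(\hat{\mathcal{M}}) \le d_{VC}(\mathbb{L}(\mathcal{H}))$ almost surely and $B^{I}_{M,\epsilon}$ is increasing, so $B^{I}_{M,\epsilon}(d_{VC}(\hat{\mathcal{M}})) \le B^{I}_{M,\epsilon}(d_{VC}(\mathbb{L}(\mathcal{H})))$ a.s., and expectation preserves this. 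The bound for the type II error follows by the identical argument, now using the second line of \eqref{bound_theoremBC}, the $\sigma(\tilde{\mathcal{D}}_{M})$-measurability of $L(\hat{h}_{\mathcal{M}}(\tilde{\mathcal{D}}_{M})) - L(h^{\star}_{\mathcal{M}})$, and replacing $B^{I}$ by $B^{II}$. For the last assertion of the statement, observe that $d_{VC}(\mathbb{L}(\mathcal{H})) = \max_{i\in\mathcal{J}}d_{VC}(\mathcal{M}_{i})$ is a fixed finite integer, so the hypotheses $\lim_{M\to\infty}B^{I}_{M,\epsilon}(k) = \lim_{M\to\infty}B^{II}_{M,\epsilon}(k) = 0$ evaluated at $k=d_{VC}(\mathbb{L}(\mathcal{H}))$ force both tail probabilities to $0$ as $M\to\infty$.

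There is no serious obstacle here: the substance of the proof is the single conditioning-and-independence step, after which only monotonicity of $B^{I}_{M,\epsilon}$ and $B^{II}_{M,\epsilon}$ and the finite bound on $d_{VC}$ are used. The one point demanding care is the measurability bookkeeping — confirming that $\hat{\mathcal{M}}$ is a function of $\mathcal{D}_{N}$ alone, so that conditioning on $\mathcal{D}_{N}$ genuinely decouples the two samples, and that the empirical-process supremum and the ERM out-of-sample risk over each $\mathcal{M}_{i}$ are bona fide random variables, which is precisely where the standing assumptions on $\mathcal{H}$ and on $\mathbb{M}_{\mathbb{L}(\mathcal{H})}$ enter.
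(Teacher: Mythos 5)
Your proposal is correct and follows essentially the same route as the paper: the paper likewise decomposes the probability over the events $\{\hat{\mathcal{M}} = \mathcal{M}_{i}\}$, uses the independence of $\tilde{\mathcal{D}}_{M}$ and $\mathcal{D}_{N}$ to replace each conditional probability by the unconditional one, applies the per-model bounds \eqref{bound_theoremBC}, and finishes with the monotonicity of $B^{I}_{M,\epsilon}$, $B^{II}_{M,\epsilon}$ and the finiteness of $d_{VC}(\mathbb{L}(\mathcal{H}))$. Your conditioning on $\mathcal{D}_{N}$ and the paper's conditioning on $\hat{\mathcal{M}}$ are the same argument, and your measurability remarks only make explicit what the paper assumes.
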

	
	Our definition of $\mathcal{\hat{M}}$ ensures that it is going to have the smallest VC dimension under the constraint that it is a global minimum of $\mathbb{L}(\mathcal{H})$. As the quantities inside the expectations of Theorem \ref{bound_constant} are increasing functions of VC dimension, fixed $\epsilon$ and $M$, we tend to have smaller expectations, thus tighter bounds for types I and II estimation errors. Furthermore, we conclude from the bounds of Theorem \ref{bound_constant} that the sample complexity needed to learn on $\hat{\mathcal{M}}$ is at most that of $d_{VC}(\mathbb{L}(\mathcal{H}))$. This implies that this complexity is at most that of $\mathcal{H}$, but may be much less if $d_{VC}(\mathbb{L}(\mathcal{H})) \ll d_{VC}(\mathcal{H})$. We conclude that the bounds for the tail probabilities of types I and II estimation errors on $\hat{\mathcal{M}}$ are tighter than that on $\mathcal{H}$ and the sample complexity needed to learn on $\hat{\mathcal{M}}$ is at most that of $\mathbb{L}(\mathcal{H})$, and not of $\mathcal{H}$. 
	
	However, even when these inequalities guarantee the consistency of $\hat{\mathcal{M}}$ regarding types I and II estimation errors, it is still necessary to check that types III and IV estimation errors are small to attest the feasibility of learning on $\hat{\mathcal{M}}$: if $L(h^{\star}_{\hat{\mathcal{M}}})$ is too greater than $L(h^{\star})$, well estimating $h^{\star}_{\hat{\mathcal{M}}}$ (small type II estimation error) is useless, so having small types I and II estimation errors is not enough to properly approximate $h^{\star}$, that is the main objective of the learning problem.
	
	A bound for type III estimation error may be obtained using methods similar to that we employed to prove Theorem \ref{theorem_principal_convergence}. As in that theorem, the bound for type III estimation error depends on $\epsilon^{\star}$, on bounds for type I estimation error under each training and validation sample for every $\mathcal{M} \in \mathbb{L}(\mathcal{H})$, and on $\mathbb{L}(\mathcal{H})$, more specifically, on its VC dimension and number of maximal elements. To ease notation, we denote $\epsilon \vee \epsilon^{\star} \coloneqq \max \{\epsilon,\epsilon^{\star}\}$ for any $\epsilon > 0$.
	
	\begin{theorem}
		\label{theorem_tipeIII}
		Assume the premises of Theorem \ref{theorem_principal_convergence} are in force. Let $\mathcal{\hat{M}} \in \mathbb{L}(\mathcal{H})$ be a random model learned by $\mathbb{M}_{\mathbb{L}(\mathcal{H})}$. Then, for any $\epsilon > 0$,
		\begin{align*}
		\textbf{(III)} \ \mathbb{P}\left(L(h_{\hat{\mathcal{M}}}^{\star}) - L(h^{\star}) > \epsilon\right) \leq m \ \mathfrak{m}(\mathbb{L}(\mathcal{H})) \left[\hat{B}_{N,(\epsilon \vee \epsilon^{\star})/4}(d_{VC}(\mathbb{L}(\mathcal{H}))) + B_{N,(\epsilon \vee \epsilon^{\star})/8}(d_{VC}(\mathbb{L}(\mathcal{H})))\right].
		\end{align*}
		In particular,
		\begin{align*}
		\lim_{N \rightarrow \infty} \mathbb{P}\left(L(h_{\hat{\mathcal{M}}}^{\star}) - L(h^{\star}) > \epsilon\right) = 0,
		\end{align*}
		for any $\epsilon > 0$.
	\end{theorem}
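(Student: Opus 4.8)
The plan is to reduce type III error to a uniform-deviation event for the model-error estimator, exactly as in the proof of Theorem~\ref{theorem_principal_convergence}, but with the margin $\epsilon^{\star}$ replaced by $\bar{\epsilon}\coloneqq\epsilon\vee\epsilon^{\star}$. First I would rewrite the quantity of interest in terms of models: since $\mathcal{M}^{\star}$ contains a target hypothesis of $\mathcal{H}$ we have $L(h^{\star})=L(\mathcal{M}^{\star})$, and by definition $L(h^{\star}_{\hat{\mathcal{M}}})=L(\hat{\mathcal{M}})$, so type III error equals $L(\hat{\mathcal{M}})-L(\mathcal{M}^{\star})\ge 0$, and it suffices to bound $\mathbb{P}(L(\hat{\mathcal{M}})-L(\mathcal{M}^{\star})>\epsilon)$.

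The core deterministic step is to consider the event $\mathcal{E}\coloneqq\{\sup_{i\in\mathcal{J}}|L(\mathcal{M}_{i})-\hat{L}(\mathcal{M}_{i})|<\bar{\epsilon}/2\}$. On $\mathcal{E}$, every representative of the class $\mathcal{M}^{\star}$ satisfies $\hat{L}(\mathcal{M}^{\star})<L(\mathcal{M}^{\star})+\bar{\epsilon}/2$, hence $\hat{L}(\hat{\mathcal{M}})=\min_{i}\hat{L}(\mathcal{M}_{i})<L(\mathcal{M}^{\star})+\bar{\epsilon}/2$. If in addition $L(\hat{\mathcal{M}})>L(\mathcal{M}^{\star})+\epsilon$, then — because every error value attained in $\mathbb{L}(\mathcal{H})$ that exceeds $L(\mathcal{M}^{\star})$ exceeds it by at least $\epsilon^{\star}$, by the very definition of the MDE — we get $L(\hat{\mathcal{M}})\ge L(\mathcal{M}^{\star})+\bar{\epsilon}$, so on $\mathcal{E}$ also $\hat{L}(\hat{\mathcal{M}})>L(\hat{\mathcal{M}})-\bar{\epsilon}/2\ge L(\mathcal{M}^{\star})+\bar{\epsilon}/2$, a contradiction. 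Therefore $\mathcal{E}$ forces $L(\hat{\mathcal{M}})-L(\mathcal{M}^{\star})\le\epsilon$, and so $\mathbb{P}(L(h^{\star}_{\hat{\mathcal{M}}})-L(h^{\star})>\epsilon)\le\mathbb{P}(\mathcal{E}^{c})$.

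It then remains to bound $\mathbb{P}(\mathcal{E}^{c})=\mathbb{P}\big(\sup_{i}|L(\mathcal{M}_{i})-\hat{L}(\mathcal{M}_{i})|\ge\bar{\epsilon}/2\big)$, which is exactly the probability already estimated inside the proof of Theorem~\ref{theorem_principal_convergence} with $\epsilon^{\star}$ replaced by $\bar{\epsilon}$. Concretely, using the form \eqref{form_Lhat} of $\hat{L}$, I would bound, for each fold $j$ and each $\mathcal{M}$, $|\hat{L}^{(j)}(\hat{h}^{(j)}_{\mathcal{M}})-L(h^{\star}_{\mathcal{M}})|\le\sup_{h\in\mathcal{M}}|\hat{L}^{(j)}(h)-L(h)|+\big(L(\hat{h}^{(j)}_{\mathcal{M}})-L(h^{\star}_{\mathcal{M}})\big)$, controlling the last term by the standard ERM bound $2\sup_{h\in\mathcal{M}}|L_{\mathcal{D}_{N}^{(j)}}(h)-L(h)|$; requiring the validation deviation to be $<\bar{\epsilon}/4$ and the training deviation to be $<\bar{\epsilon}/8$ for every fold makes $|\hat{L}(\mathcal{M})-L(\mathcal{M})|<\bar{\epsilon}/2$. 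A union bound over the $m$ folds, the reduction of $\sup_{h\in\mathcal{M}}$ over all $\mathcal{M}\in\mathbb{L}(\mathcal{H})$ to the $\mathfrak{m}(\mathbb{L}(\mathcal{H}))$ maximal models (each model being contained in a maximal one), and the monotonicity of $B_{N,\cdot}$ and $\hat{B}_{N,\cdot}$ in the VC dimension together with $d_{VC}(\mathcal{M})\le d_{VC}(\mathbb{L}(\mathcal{H}))$, then yield exactly $m\,\mathfrak{m}(\mathbb{L}(\mathcal{H}))\big[\hat{B}_{N,\bar{\epsilon}/4}(d_{VC}(\mathbb{L}(\mathcal{H})))+B_{N,\bar{\epsilon}/8}(d_{VC}(\mathbb{L}(\mathcal{H})))\big]$. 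The limit statement is then immediate, since $d_{VC}(\mathbb{L}(\mathcal{H}))<\infty$ and both $\hat{B}_{N,\bar{\epsilon}/4}(k)$ and $B_{N,\bar{\epsilon}/8}(k)$ tend to $0$ as $N\to\infty$ for each fixed $k$.

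The main obstacle is organizational rather than conceptual: the quantitative half of the argument duplicates the proof of Theorem~\ref{theorem_principal_convergence}, so the cleanest route is to isolate that computation as a lemma bounding $\mathbb{P}\big(\sup_{i}|L(\mathcal{M}_{i})-\hat{L}(\mathcal{M}_{i})|\ge t/2\big)$ by $m\,\mathfrak{m}(\mathbb{L}(\mathcal{H}))\big[\hat{B}_{N,t/4}(d_{VC}(\mathbb{L}(\mathcal{H})))+B_{N,t/8}(d_{VC}(\mathbb{L}(\mathcal{H})))\big]$ for every $t>0$, and apply it with $t=\bar{\epsilon}$. The only genuinely new point is the elementary observation, using the gap structure of the MDE, that uniform precision $\bar{\epsilon}/2$ (rather than $\epsilon/2$) is exactly what is needed to pin $L(\hat{\mathcal{M}})$ within $\epsilon$ of $L(\mathcal{M}^{\star})$ for every $\epsilon>0$, which is why $\epsilon$ enters the bound only through $\epsilon\vee\epsilon^{\star}$.
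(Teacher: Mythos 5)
Your proposal is correct and follows essentially the same route as the paper: it reduces type III error to the uniform $(\epsilon\vee\epsilon^{\star})/2$-precision event for $\hat{L}$ over $\mathbb{L}(\mathcal{H})$ and then bounds that event's complement by re-running the computation of Theorem \ref{theorem_principal_convergence} with $\epsilon^{\star}$ replaced by $\epsilon\vee\epsilon^{\star}$, exactly as the paper does. Your only (harmless) deviation is cosmetic: you merge the paper's two cases $\epsilon\leq\epsilon^{\star}$ and $\epsilon>\epsilon^{\star}$ into a single contradiction argument via the MDE gap, which yields the same inclusion of events.
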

	
	\begin{remark}
		\label{remReuse} \normalfont
		Type III estimation error, and its bound presented in Theorem \ref{theorem_tipeIII}, do not depend on the algorithm $\mathbb{A}$ employed to learn on $\hat{\mathcal{M}}$, hence this theorem is true for both frameworks in Figure \ref{learn_hyp}, holding also when learning by reusing.
	\end{remark}
	
	On the one hand, by definition of $\epsilon^{\star}$, if $\epsilon < \epsilon^{\star}$, then type III estimation error is lesser than $\epsilon$ if, and only if, $L(\hat{\mathcal{M}}) = L(\mathcal{M}^{\star})$, so this error is actually zero, and the result of Theorem \ref{theorem_principal_convergence} is a bound for type III estimation error in this case. On the other hand, if $\epsilon > \epsilon^{\star}$, one way of having type III estimation error lesser than $\epsilon$ is to have the estimated error of each $\mathcal{M}$ at a distance of at most $\epsilon/2$ from its real error and, as can be inferred from the proof of Theorem \ref{theorem_principal_convergence}, this can be accomplished if one has type I estimation error not greater than a fraction of $\epsilon$ under each training and validation sample considered, so a modification of the result of Theorem \ref{theorem_principal_convergence} also applies to this case.
	
	Finally, as the tail probability of type IV estimation error may be bounded by the following inequality, involving the tail probabilities of types II and III estimation errors,
	\begin{linenomath}
		\begin{align}
		\label{triangle}
		\mathbb{P}\Big(&L(\hat{h}_{\mathcal{\hat{M}}}(\tilde{\mathcal{D}}_{M})) - L(h^{\star}) > \epsilon\Big) \leq \mathbb{P}\Big(L(\hat{h}_{\mathcal{\hat{M}}}(\tilde{\mathcal{D}}_{M})) - L(h^{\star}_{\mathcal{\hat{M}}}) > \epsilon/2\Big) + \mathbb{P}\Big(L(h^{\star}_{\mathcal{\hat{M}}}) - L(h^{\star}) > \epsilon/2\Big)
		\end{align}
	\end{linenomath}
	a bound on the rate of convergence of this error to zero is a direct consequence of Theorems \ref{bound_constant} and \ref{theorem_tipeIII}.
	
	\begin{corollary}
		\label{cor_typeIV}
		Assume the premises of Theorem \ref{theorem_principal_convergence} and \ref{bound_constant} are in force. Let $\mathcal{\hat{M}} \in \mathbb{L}(\mathcal{H})$ be a random model learned by $\mathbb{M}_{\mathbb{L}(\mathcal{H})}$. Then, for any $\epsilon > 0$,
		\begin{align*}
		&\mathbb{P}\Big(L(\hat{h}_{\mathcal{\hat{M}}}(\tilde{\mathcal{D}}_{M})) - L(h^{\star}) > \epsilon\Big)\\
		& \leq \mathbb{E}\Big[B^{II}_{M,\epsilon/2}(d_{VC}(\mathcal{\hat{M}}))\Big] + m \ \mathfrak{m}(\mathbb{L}(\mathcal{H})) \left[\hat{B}_{N,(\epsilon/2 \vee \epsilon^{\star})/4}(d_{VC}(\mathbb{L}(\mathcal{H}))) + B_{N,(\epsilon/2 \vee \epsilon^{\star})/8}(d_{VC}(\mathbb{L}(\mathcal{H})))\right]\\
		&\leq  B^{II}_{M,\epsilon/2}(d_{VC}(\mathbb{L}(\mathcal{H}))) + m \ \mathfrak{m}(\mathbb{L}(\mathcal{H})) \left[\hat{B}_{N,(\epsilon/2 \vee \epsilon^{\star})/4}(d_{VC}(\mathbb{L}(\mathcal{H}))) + B_{N,(\epsilon/2 \vee \epsilon^{\star})/8}(d_{VC}(\mathbb{L}(\mathcal{H})))\right].
		\end{align*}
		In particular,
		\begin{align*}
		\lim_{\substack{N \rightarrow \infty \\ M \rightarrow \infty}} \mathbb{P}\Big(L(\hat{h}_{\mathcal{\hat{M}}}(\tilde{\mathcal{D}}_{M})) - L(h^{\star}) > \epsilon\Big) = 0,
		\end{align*}
		for any $\epsilon > 0$.
	\end{corollary}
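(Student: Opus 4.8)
The plan is to read the corollary off the triangle-type inequality \eqref{triangle} together with the bounds for types II and III estimation errors supplied by Theorems \ref{bound_constant} and \ref{theorem_tipeIII}, and then to let $N,M \to \infty$ using the stipulated decay of the families $B^{II}_{M,\epsilon}$, $\hat{B}_{N,\epsilon}$ and $B_{N,\epsilon}$. First I would justify \eqref{triangle}: because $h^{\star}$ minimizes $L$ over $\mathcal{H}$ and $\hat{\mathcal{M}} \subset \mathcal{H}$, we have $L(h^{\star}_{\hat{\mathcal{M}}}) \geq L(h^{\star})$, so type IV estimation error decomposes as a sum of the two nonnegative quantities type II and type III, namely $L(\hat{h}_{\hat{\mathcal{M}}}(\tilde{\mathcal{D}}_{M})) - L(h^{\star}) = \left[L(\hat{h}_{\hat{\mathcal{M}}}(\tilde{\mathcal{D}}_{M})) - L(h^{\star}_{\hat{\mathcal{M}}})\right] + \left[L(h^{\star}_{\hat{\mathcal{M}}}) - L(h^{\star})\right]$. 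Hence, if type IV exceeds $\epsilon$ then at least one of the two summands exceeds $\epsilon/2$, and subadditivity of $\mathbb{P}$ over the resulting union of events gives exactly \eqref{triangle}.

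Next I would substitute the two available bounds. Applying the type II bound of Theorem \ref{bound_constant} with $\epsilon$ replaced by $\epsilon/2$ bounds $\mathbb{P}\left(L(\hat{h}_{\hat{\mathcal{M}}}(\tilde{\mathcal{D}}_{M})) - L(h^{\star}_{\hat{\mathcal{M}}}) > \epsilon/2\right)$ by $\mathbb{E}\left[B^{II}_{M,\epsilon/2}(d_{VC}(\hat{\mathcal{M}}))\right]$, which is in turn at most $B^{II}_{M,\epsilon/2}(d_{VC}(\mathbb{L}(\mathcal{H})))$ since $B^{II}_{M,\epsilon/2}$ is increasing and $d_{VC}(\hat{\mathcal{M}}) \leq d_{VC}(\mathbb{L}(\mathcal{H}))$. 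Applying Theorem \ref{theorem_tipeIII} with $\epsilon$ replaced by $\epsilon/2$ bounds $\mathbb{P}\left(L(h^{\star}_{\hat{\mathcal{M}}}) - L(h^{\star}) > \epsilon/2\right)$ by $m \, \mathfrak{m}(\mathbb{L}(\mathcal{H}))\left[\hat{B}_{N,(\epsilon/2 \vee \epsilon^{\star})/4}(d_{VC}(\mathbb{L}(\mathcal{H}))) + B_{N,(\epsilon/2 \vee \epsilon^{\star})/8}(d_{VC}(\mathbb{L}(\mathcal{H})))\right]$. Summing the two estimates through \eqref{triangle} yields the two displayed inequalities of the corollary, the first retaining the expectation $\mathbb{E}\left[B^{II}_{M,\epsilon/2}(d_{VC}(\hat{\mathcal{M}}))\right]$ and the second the cruder $B^{II}_{M,\epsilon/2}(d_{VC}(\mathbb{L}(\mathcal{H})))$.

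For the limiting statement, fix $\epsilon > 0$. Then $d_{VC}(\mathbb{L}(\mathcal{H}))$ is a fixed finite integer and $(\epsilon/2) \vee \epsilon^{\star}$ is a fixed positive number, so the hypotheses $\lim_{M \to \infty} B^{II}_{M,\eta}(k) = 0$ and $\lim_{N \to \infty} \hat{B}_{N,\eta}(k) = \lim_{N \to \infty} B_{N,\eta}(k) = 0$, valid for every fixed $\eta > 0$ and $k \in \mathbb{Z}_{+}$, force each of the three terms of the upper bound to vanish as $N,M \to \infty$; hence the tail probability of type IV estimation error does too.

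The heavy lifting in this corollary is already carried out in Theorems \ref{bound_constant} and \ref{theorem_tipeIII}, so I expect no genuine obstacle. The only points requiring care are: (a) the observation $L(h^{\star}_{\hat{\mathcal{M}}}) \geq L(h^{\star})$, which is what makes the decomposition of type IV into a sum of nonnegative types II and III legitimate and thus licenses the simple union bound in place of a true triangle inequality; (b) checking that, once $\epsilon$ is fixed, the argument $(\epsilon/2) \vee \epsilon^{\star}$ of the $\hat{B}$ and $B$ functions remains a fixed positive number, so the decay hypotheses genuinely apply; and (c) the measurability of $\hat{\mathcal{M}}$, needed for $\mathbb{E}\left[B^{II}_{M,\epsilon/2}(d_{VC}(\hat{\mathcal{M}}))\right]$ to be well defined, which is inherited from the construction \eqref{diagram}.
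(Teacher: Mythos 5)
Your proposal is correct and follows exactly the route the paper intends: decompose type IV via the union-bound inequality \eqref{triangle} (noting, as you do, that it needs no more than the fact that two quantities summing to more than $\epsilon$ cannot both be at most $\epsilon/2$), then plug in Theorem \ref{bound_constant} (II) with $\epsilon/2$ and Theorem \ref{theorem_tipeIII} with $\epsilon/2$, bound the expectation by monotonicity of $B^{II}_{M,\epsilon/2}$ together with $d_{VC}(\hat{\mathcal{M}}) \leq d_{VC}(\mathbb{L}(\mathcal{H}))$, and let $N,M \to \infty$ with $(\epsilon/2) \vee \epsilon^{\star}$ fixed. This matches the paper's treatment, which states the corollary as a direct consequence of these ingredients.
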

	
	From Theorems \ref{theorem_principal_convergence}, \ref{bound_constant} and \ref{theorem_tipeIII}, and Corollary \ref{cor_typeIV}, follows the consistency of the Model Selection framework given by selecting $\hat{\mathcal{M}}$ via $\mathbb{M}_{\mathbb{L}(\mathcal{H})}$ and learning on it with an independent sample when we consider $\hat{L}$ given by k-fold cross-validation or an independent validation sample. We state this result in the next corollary.
	
	\begin{corollary}	
		\label{corollary}
		The Model Selection framework given by
		\begin{enumerate}
			\item[(a)] estimating $L(\mathcal{M})$ by k-fold cross validation with a fixed $k$ or by an independent validation sample,
			\item[(b)] selecting $\hat{\mathcal{M}}$ via $\mathbb{M}_{\mathbb{L}(\mathcal{H})}$,
			\item[(c)] and learning with an independent sample on $\hat{\mathcal{M}}$,
		\end{enumerate}
		is consistent.	  
	\end{corollary}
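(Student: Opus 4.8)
The plan is to observe that Corollary \ref{corollary} is a synthesis result: options (a)--(c) only need to be shown to meet the hypotheses of Theorems \ref{theorem_principal_convergence}, \ref{bound_constant} and \ref{theorem_tipeIII}, and hence of Corollary \ref{cor_typeIV}, whereupon the conclusions of those statements, taken together, are precisely the four-fold convergence of estimation errors plus the almost sure convergence $\hat{\mathcal{M}}\to\mathcal{M}^{\star}$ demanded by the definition of consistency. So essentially no new estimate is produced; the work is bookkeeping.

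First I would check that under (a) the estimator $\hat{L}$ has the form \eqref{form_Lhat}: for an independent validation sample this holds with $m=1$, and for $k$-fold cross validation with $m=k$, the $j$-th pair being the disjoint samples $\mathcal{D}_N\setminus\mathcal{D}_N^{(j)}$ (size $(k-1)n$) and $\mathcal{D}_N^{(j)}$ (size $n$); within each pair the two samples are independent, which is all Theorems \ref{theorem_principal_convergence} and \ref{theorem_tipeIII} require (dependence across pairs being permitted). Next I would supply the bounding sequences. The uniform-deviation bounds of \eqref{maxj} and the type~I bound of \eqref{bound_theoremBC} are instances of the classical VC inequality applied on each $\mathcal{M}\in\mathbb{L}(\mathcal{H})$ with the relevant sub-sample size, e.g.\ \cite[Theorem~4.4]{vapnik1998}; bounding shatter coefficients via Sauer's lemma makes each such bound increasing in $d_{VC}(\mathcal{M})$, and, being exponential in the sub-sample size, it tends to $0$ as that size --- hence as $N$ (for fixed $k$) or $M$ --- goes to infinity. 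The type~II bound $B^{II}_{M,\epsilon}$ then follows from the type~I bound via $L(\hat{h}_{\mathcal{M}}(\tilde{\mathcal{D}}_M))-L(h^{\star}_{\mathcal{M}})\le 2\sup_{h\in\mathcal{M}}|L_{\tilde{\mathcal{D}}_M}(h)-L(h)|$ (cf.\ \cite[Lemma~8.2]{devroye1996}), so one may take $B^{II}_{M,\epsilon}(k)=B^{I}_{M,\epsilon/2}(k)$; and $d_{VC}(\mathbb{L}(\mathcal{H}))<\infty$ by the definition of a Learning Space, so all these quantities are genuinely finite.

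With the sequences in hand, Theorem \ref{bound_constant} gives convergence in probability to zero of types~I and~II estimation errors on $\hat{\mathcal{M}}$ as $M\to\infty$; since the VC tails above are summable in $N$, Borel--Cantelli yields the almost sure convergences \eqref{as_conv}, and Theorem \ref{theorem_principal_convergence} then gives $\mathbb{P}(\hat{\mathcal{M}}=\mathcal{M}^{\star})\to 1$, i.e.\ (as $|\mathbb{L}(\mathcal{H})|<\infty$) convergence to $\mathcal{M}^{\star}$ with probability one; this forces type~III to zero, with the rate of Theorem \ref{theorem_tipeIII}, and Corollary \ref{cor_typeIV} --- the triangle inequality \eqref{triangle} --- then delivers type~IV. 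Collecting these is exactly the definition of consistency. I expect the only delicate point to be the cross-validation bookkeeping: reading ``independence within a pair'' as the train/validation split of one fold rather than as independence across folds, checking that $(k-1)n$ and $n$ both diverge with $N$ for fixed $k$, and noting that the multiplicative constants $m=k$ and $\mathfrak{m}(\mathbb{L}(\mathcal{H}))$ in \eqref{bound_pM} are harmless as $N\to\infty$. No estimate beyond the classical VC inequality with the correct sub-sample sizes is needed.
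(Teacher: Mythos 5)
Your proposal is correct and follows essentially the same route as the paper: the paper's proof likewise just invokes Theorems \ref{theorem_principal_convergence}, \ref{bound_constant} and \ref{theorem_tipeIII} and Corollary \ref{cor_typeIV}, noting that the required bounding sequences come from classical VC theory applied to the independent training/validation samples and to $\tilde{\mathcal{D}}_{M}$ (with the Borel--Cantelli step for \eqref{as_conv} already flagged in the remark after Theorem \ref{theorem_principal_convergence}). Your extra bookkeeping on the cross-validation pairs, the form \eqref{form_Lhat}, and deriving $B^{II}$ from $B^{I}$ only makes explicit what the paper leaves implicit.
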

	
	From the results above, follows that the Learning Space plays an important role in the rate of convergence of $\mathbb{P}(L(\hat{\mathcal{M}}) = L(\mathcal{M}^{\star}))$ to 1 and of the estimation errors to zero, through $\epsilon^{\star}$ and $d_{VC}(\mathcal{M}^{\star})$. Moreover, these results also shed light on manners of improving the quality of the learning, i.e., decreasing the estimation errors, specially type IV, when the sample size is fixed. We discuss the main features of the consistent framework presented in this section.
	
	First, since $\hat{\mathcal{M}}$ converges to $\mathcal{M}^{\star}$ with probability 1, 
	\begin{linenomath}
		\begin{equation*}
		\mathbb{E}(F(\mathcal{\hat{M}})) \xrightarrow{N \rightarrow \infty} F(\mathcal{M}^{\star})
		\end{equation*}
	\end{linenomath}
	by the Dominated Convergence Theorem, in which $F: \mathbb{L}(\mathcal{H}) \mapsto \mathbb{R}$ is any $(\Omega,\mathcal{S})$-measurable real-valued function, since the domain of $F$ is finite. The convergence of $\mathbb{E}(F(\mathcal{\hat{M}}))$ ensures that the functions of $\hat{\mathcal{M}}$ on the right-hand side of inequalities of Theorem \ref{bound_constant} and Corollary \ref{cor_typeIV} tend to the same functions evaluated at $\mathcal{M}^{\star}$, when $N$ tends to infinity. Hence, if one was able to isolate $h^{\star}$ within a model $\mathcal{M}^{\star}$ with small VC dimension, the bounds for types I, II and IV estimation errors will tend to be tighter for a sample of a given size $N + M$.
	
	Second, if the MDE of $\mathbb{L}(\mathcal{H})$ under $P$ is great, then we need less precision when estimating $L(\mathcal{M})$ for $L(\hat{\mathcal{M}})$ to be equal to $L(\mathcal{M}^{\star})$, and for types III and IV estimation errors to be lesser than a $\epsilon \ll \epsilon^{\star}$ with high probability, so fewer samples are needed to learn a model as good as $\mathcal{M}^{\star}$ and to have lesser types III and IV estimation errors. Moreover, the sample complexity to learn this model is that of the most complex model in $\mathbb{L}(\mathcal{H})$, what implies that this is at most the complexity of a model with VC dimension $d_{VC}(\mathbb{L}(\mathcal{H}))$, which may be lesser than that of $\mathcal{H}$. 
	
	Therefore, by embedding into $\mathbb{L}(\mathcal{H})$ all prior information about $h^{\star}$ and $P$, seeking to increase $\epsilon^{\star}$ and decrease $d_{VC}(\mathcal{M}^{\star})$, one may, with a given sample of size $N + M$, better learn on $\mathcal{H}$, that is, better approximate $h^{\star}$ by a $\hat{h}_{\hat{\mathcal{M}}}(\tilde{\mathcal{D}}_{M})$ (small type IV estimation error). Hence, the results of this section also support that by properly modeling the Learning Space one may better learn on $\mathcal{H}$, which means having small type IV estimation error.
	
	By the deductions above, under the framework detailed in Corollary \ref{corollary}, it follows that all estimation errors converge in probability to zero and that $\hat{\mathcal{M}}$ tends to $\mathcal{M}^{\star}$ with probability one, when $N$ and $M$ tend to infinity. However, we are not able, by making use of the bounds provided by VC theory and extended to $\hat{\mathcal{M}}$ in this case, to find bounds for these estimation errors which do not depend on $\epsilon^{\star}$, and thus on $P$. In other words, we have established a distribution free consistency of the framework, but not a distribution free rate to the considered convergences. 
	
	Although not distribution-free, the convergences proved above reflect an important property of the approach, which may have been overlooked by other methods: the sample complexity does depend on the target hypotheses, in this case through the target model. If one can isolate a target hypothesis inside a \textit{simple} model (see Figure \ref{paradigms}) which also contains $\{h \in \mathcal{H}: L(h) - L(h^\star) \leq \epsilon^{\star}\}$ for a \textit{great} $\epsilon^{\star} >0$, then \textit{few} samples are needed to \textit{properly} estimate this target, independently of its ``complexity'', as $\mathcal{\hat{M}}$ would be equal to $\mathcal{M}^{\star}$ with high probability for a relatively small sample, as $\epsilon^{\star}$ is large, and types I and II estimation errors on $\mathcal{\hat{M}}$ would probably be \textit{small}, as it is with high probability equal to $\mathcal{M}^{\star}$ which is \textit{simple}. 
	
	Hence, without constraining beforehand the hypotheses space $\mathcal{H}$, which contains all hypotheses one is willing to consider, and without gathering more samples to increase a sample of size $N + M$, one could, theoretically, still estimate $h^{\star}$ by a hypothesis which well generalizes, by properly modeling $\mathbb{L}(\mathcal{H})$. Such a modeling should be done by embedding into $\mathbb{L}(\mathcal{H})$ all prior information about $h^{\star}, P$ and the practical problem at hand.
	
	\subsubsection{Learning by reusing}
	
	When learning by reusing, one is employing the same sample points to both estimate $\hat{\mathcal{M}}$ and learn a hypothesis $\hat{h}(\mathcal{D}_{N}) \in \hat{\mathcal{M}}$ from it, so there is a dependence between type I and II estimation errors and the events $\{\hat{\mathcal{M}} = \mathcal{M}\}, \mathcal{M} \in \mathbb{L}(\mathcal{H})$. Indeed, an equality like \eqref{cond_independence} may not be true in this case, that is, we may have
	\begin{align*}
	\mathbb{P}\Big(\sup\limits_{h \in \hat{\mathcal{M}}} \big|L_{\mathcal{D}_{N}}(h) - L(h) \big| > \epsilon \Big|\mathcal{\hat{M}} = \mathcal{M}\Big) \neq \mathbb{P}\Big(\sup\limits_{h \in \mathcal{M}} \big|L_{\mathcal{D}_{N}}(h) - L(h) \big| > \epsilon\Big),
	\end{align*}
	since, conditioned on $\{\hat{\mathcal{M}} = \mathcal{M}\}$, not only the distribution of each sample point $(X_{l},Y_{l}), l = 1,\dots,N$, changes, but also these points are now dependent: they must be such that $\hat{\mathcal{M}} = \mathcal{M}$, hence, cannot be independent. Therefore, the argument of the proof of Theorem \ref{bound_constant} does not hold in this instance.
	
	Nevertheless, since $\hat{\mathcal{M}}$ converges with probability one to $\mathcal{M}^{\star}$ by Theorem \ref{theorem_principal_convergence}, we may obtain a bound for types I and II estimation errors when learning by reusing which depends on such bounds in $\mathcal{M}^{\star}$ and on the rate of convergence of $\hat{\mathcal{M}}$ to $\mathcal{M}^{\star}$.
	
	\begin{theorem}
		\label{bound_constant_reusing}	
		Assume we are learning by reusing and that for each $\epsilon > 0$ there exist sequences $\{B^{I}_{N,\epsilon}: N \geq 1\}$ and $\{B^{II}_{N,\epsilon}: N \geq 1\}$ of positive real-valued increasing functions with domain $\mathbb{Z}_{+}$ satisfying $\lim\limits_{N \to \infty} B^{I}_{N,\epsilon}(k) = \lim\limits_{N \to \infty} B^{II}_{N,\epsilon}(k) = 0$, for all $\epsilon > 0$ and $k \in \mathbb{Z}_{+}$ fixed, such that
		\begin{linenomath}
			\begin{align*}
			\label{bound_theoremBC2}
			\mathbb{P}\Big(\sup\limits_{h \in \mathcal{M}} \big|L_{\mathcal{D}_{N}}(h) - L(h) \big| > \epsilon \Big) \leq B^{I}_{N,\epsilon}(d_{VC}(\mathcal{M})) & \text{ and } &\\ \nonumber
			\mathbb{P}\Big(L(\hat{h}_{\mathcal{M}}(\mathcal{D}_{N})) - L(h^{\star}_{\mathcal{M}}) > \epsilon \Big) \leq B^{II}_{N,\epsilon}(d_{VC}(\mathcal{M})), & & 
			\end{align*}
		\end{linenomath}
		for any $\mathcal{M} \in \mathbb{L}(\mathcal{H})$. Let $\mathcal{\hat{M}} \in \mathbb{L}(\mathcal{H})$ be a random model learned by $\mathbb{M}_{\mathbb{L}(\mathcal{H})}$. Then, for any $\epsilon > 0$,
		\begin{linenomath}
			\begin{align*}
			\textbf{(I)} \ \mathbb{P}\Big(&\sup\limits_{h \in \mathcal{\hat{M}}} \big|L_{\mathcal{D}_{N}}(h) - L(h) \big| > \epsilon \Big) \leq B^{I}_{N,\epsilon}(d_{VC}(\mathcal{M}^{\star})) + \mathbb{P}\left(\hat{\mathcal{M}} \neq \mathcal{M}^{\star}\right)
			\end{align*}
		\end{linenomath}
		and
		\begin{linenomath}
			\begin{align*}
			\textbf{(II)} \ \mathbb{P}\Big(L(\hat{h}_{\mathcal{\hat{M}}}(\mathcal{D}_{N})) - L(h^{\star}_{\mathcal{\hat{M}}}) > \epsilon \Big) \leq B^{II}_{N,\epsilon}(d_{VC}(\mathcal{M}^{\star})) + \mathbb{P}\left(\hat{\mathcal{M}} \neq \mathcal{M}^{\star}\right).
			\end{align*}
		\end{linenomath}
		If conditions \eqref{as_conv} of Theorem \ref{theorem_principal_convergence} are satisfied, both probabilities above converge to zero when $N \to \infty$.
	\end{theorem}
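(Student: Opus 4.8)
The plan is to split the probability according to whether the random model $\hat{\mathcal{M}}$ has already coincided with the target model $\mathcal{M}^{\star}$. The issue emphasized in the text preceding the statement is that one cannot condition on $\{\hat{\mathcal{M}} = \mathcal{M}\}$ and then invoke the fixed-model VC bound, because under reusing the sample $\mathcal{D}_{N}$ and the selected model $\hat{\mathcal{M}}$ are dependent. The way around this is that on the event $\{\hat{\mathcal{M}} = \mathcal{M}^{\star}\}$ one may replace $\hat{\mathcal{M}}$ by the \emph{deterministic} set $\mathcal{M}^{\star}$ inside the event and then drop the conjunction; this replacement requires no independence, so the fixed-model hypothesis applies directly, at the price of a crude term $\mathbb{P}(\hat{\mathcal{M}}\neq\mathcal{M}^{\star})$ on the complementary event.

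Concretely, for type I I would write
\begin{align*}
\mathbb{P}\Big(\sup_{h \in \hat{\mathcal{M}}} \big|L_{\mathcal{D}_{N}}(h) - L(h)\big| > \epsilon\Big)
&\leq \mathbb{P}\Big(\sup_{h \in \hat{\mathcal{M}}} \big|L_{\mathcal{D}_{N}}(h) - L(h)\big| > \epsilon,\ \hat{\mathcal{M}} = \mathcal{M}^{\star}\Big) + \mathbb{P}\big(\hat{\mathcal{M}} \neq \mathcal{M}^{\star}\big)\\
&\leq \mathbb{P}\Big(\sup_{h \in \mathcal{M}^{\star}} \big|L_{\mathcal{D}_{N}}(h) - L(h)\big| > \epsilon\Big) + \mathbb{P}\big(\hat{\mathcal{M}} \neq \mathcal{M}^{\star}\big),
\end{align*}
using $\mathbb{P}(A) = \mathbb{P}(A,\hat{\mathcal{M}}=\mathcal{M}^{\star}) + \mathbb{P}(A,\hat{\mathcal{M}}\neq\mathcal{M}^{\star})$ on the first line and the replacement argument above on the second, then applying the hypothesis $\mathbb{P}(\sup_{h\in\mathcal{M}}|L_{\mathcal{D}_{N}}(h)-L(h)|>\epsilon)\leq B^{I}_{N,\epsilon}(d_{VC}(\mathcal{M}))$ with $\mathcal{M}=\mathcal{M}^{\star}$. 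For type II the argument is identical: on $\{\hat{\mathcal{M}}=\mathcal{M}^{\star}\}$ one has $\hat{h}_{\hat{\mathcal{M}}}(\mathcal{D}_{N}) = \hat{h}_{\mathcal{M}^{\star}}(\mathcal{D}_{N})$ and $h^{\star}_{\hat{\mathcal{M}}} = h^{\star}_{\mathcal{M}^{\star}}$, so the type II error on $\hat{\mathcal{M}}$ coincides with that on $\mathcal{M}^{\star}$ on this event, and the same split followed by the hypothesis on $B^{II}_{N,\epsilon}$ gives the claimed bound. (Here $\mathcal{M}^{\star}$ is read as a fixed representative of its equivalence class; all members share the VC dimension and error that enter the bounds, so no ambiguity affects the estimates.)

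For the final assertion I would invoke Theorem~\ref{theorem_principal_convergence}: under \eqref{as_conv}, $\mathbb{P}(\hat{\mathcal{M}} = \mathcal{M}^{\star}) \to 1$, hence $\mathbb{P}(\hat{\mathcal{M}} \neq \mathcal{M}^{\star}) \to 0$; and since $d_{VC}(\mathcal{M}^{\star})$ is a fixed finite integer, the assumption on the sequences gives $B^{I}_{N,\epsilon}(d_{VC}(\mathcal{M}^{\star})) \to 0$ and $B^{II}_{N,\epsilon}(d_{VC}(\mathcal{M}^{\star})) \to 0$ as $N \to \infty$, so both right-hand sides vanish. I do not expect a genuine obstacle here: the whole content is the conceptual move of refusing to condition on $\{\hat{\mathcal{M}}=\mathcal{M}\}$ (where dependence destroys the uniform bound) and instead trading the random model for $\mathcal{M}^{\star}$ on the good event while absorbing everything else into $\mathbb{P}(\hat{\mathcal{M}}\neq\mathcal{M}^{\star})$, whose decay is exactly what Theorem~\ref{theorem_principal_convergence} supplies; the only mild care needed is the equivalence-class bookkeeping noted above.
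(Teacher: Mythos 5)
Your proposal is correct and follows essentially the same route as the paper's proof: the same split of the probability over $\{\hat{\mathcal{M}} = \mathcal{M}^{\star}\}$ and its complement, replacing the random model by the deterministic $\mathcal{M}^{\star}$ on the good event so the fixed-model bound applies, and absorbing the bad event into $\mathbb{P}(\hat{\mathcal{M}} \neq \mathcal{M}^{\star})$, with the convergence claim then following from Theorem \ref{theorem_principal_convergence}. Your remark on the equivalence-class bookkeeping is a small extra precision the paper leaves implicit.
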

	
	By triangle inequality \eqref{triangle} and the bound for type III estimation error established in Theorem \ref{theorem_tipeIII}, that is also true when learning by reusing (cf. Remark \ref{remReuse}), we have that the tail probability of type IV estimation error converges to zero as $N$ tends to infinity, a result analogous to Corollary \ref{cor_typeIV}. Hence, it is consistent to learn by reusing.
	
	\begin{corollary}	
		\label{corollary_reuse}
		The Model Selection framework given by
		\begin{enumerate}
			\item[(a)] estimating $L(\mathcal{M})$ by k-fold cross validation with a fixed $k$ or by an independent validation sample,
			\item[(b)] selecting $\hat{\mathcal{M}}$ via $\mathbb{M}_{\mathbb{L}(\mathcal{H})}$,
			\item[(c)] and learning by reusing on $\hat{\mathcal{M}}$,
		\end{enumerate}
		is consistent.	  
	\end{corollary}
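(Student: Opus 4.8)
The plan is to obtain consistency by assembling, for the learning-by-reusing framework, the estimation-error bounds already proved in the previous theorems, instantiated with $\hat{L}$ given by $k$-fold cross validation (with $N = kn$) or by an independent validation sample. The definition of consistency has five components — convergence in probability to zero of types I, II, III and IV estimation errors, and convergence of $\hat{\mathcal{M}}$ to $\mathcal{M}^{\star}$ with probability one — and the argument reduces each of them to an already-established result, in four steps.

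First I would verify that the hypotheses of Theorem \ref{theorem_principal_convergence}, Theorem \ref{theorem_tipeIII} and Theorem \ref{bound_constant_reusing} hold for these estimators. For $k$-fold cross validation, each training sample has size $(k-1)n$ and each validation sample size $n$; applying the classical VC uniform deviation inequality \cite[Theorem~4.4]{vapnik1998} to each fixed $\mathcal{M} \in \mathbb{L}(\mathcal{H})$ yields bounds of the required form $B_{N,\epsilon}(d_{VC}(\mathcal{M}))$ and $\hat{B}_{N,\epsilon}(d_{VC}(\mathcal{M}))$, exponentially small in $n$; the same family of bounds serves as $B^{I}_{N,\epsilon}$, and, combined with the standard ERM argument (see \cite[Lemma~8.2]{devroye1996}), as $B^{II}_{N,\epsilon}$. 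The independent-validation-sample case is identical with $V_N$ replacing $n$. Since these bounds are summable in $N$, the Borel--Cantelli lemma \cite[Theorem~4.3]{billingsley2008} gives the almost-sure convergences \eqref{as_conv}, using that $|\mathbb{L}(\mathcal{H})| < \infty$ so the maximum over $\mathbb{L}(\mathcal{H})$ of finitely many sequences converging a.s.\ to zero converges a.s.\ to zero. With \eqref{as_conv} in hand, Theorem \ref{theorem_principal_convergence} gives $\hat{\mathcal{M}} \to \mathcal{M}^{\star}$ with probability one, which immediately yields convergence to zero of type III estimation error (with explicit rate from Theorem \ref{theorem_tipeIII}, valid when learning by reusing by Remark \ref{remReuse}). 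Next, Theorem \ref{bound_constant_reusing} bounds the tails of types I and II estimation errors on $\hat{\mathcal{M}}$ by $B^{I}_{N,\epsilon}(d_{VC}(\mathcal{M}^{\star})) + \mathbb{P}(\hat{\mathcal{M}} \neq \mathcal{M}^{\star})$ and $B^{II}_{N,\epsilon}(d_{VC}(\mathcal{M}^{\star})) + \mathbb{P}(\hat{\mathcal{M}} \neq \mathcal{M}^{\star})$ respectively, and both terms vanish as $N \to \infty$ (the first by the bounds just verified, the second by the a.s.\ convergence of $\hat{\mathcal{M}}$). Finally, type IV estimation error converges to zero via the triangle-type inequality \eqref{triangle} with $\tilde{\mathcal{D}}_M$ replaced by $\mathcal{D}_N$, using the convergence of types II and III. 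Collecting these facts is precisely the definition of consistency.

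The routine-but-load-bearing obstacle is the very first step together with the reliance on Theorem \ref{bound_constant_reusing}: one must be careful that the classical VC tail bounds apply uniformly over $\mathbb{L}(\mathcal{H})$ with the correct dependence on $d_{VC}(\mathcal{M})$ and the correct per-fold sample sizes, and — the genuinely non-trivial point specific to reusing — that the conditioning identity available for an independent sample fails here, because the data used to select $\hat{\mathcal{M}}$ is the same data used to learn inside it, so $\{\hat{\mathcal{M}} = \mathcal{M}\}$ is not independent of the empirical process on $\mathcal{M}$. This is exactly why Theorem \ref{bound_constant_reusing}, rather than the cleaner Theorem \ref{bound_constant}, must be invoked, and why the extra term $\mathbb{P}(\hat{\mathcal{M}} \neq \mathcal{M}^{\star})$ appears. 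Once Theorem \ref{bound_constant_reusing} and the almost-sure convergence of $\hat{\mathcal{M}}$ are accepted, no new probabilistic idea is needed beyond those already used for Corollary \ref{corollary}; the remainder is bookkeeping.
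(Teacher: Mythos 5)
Your proposal is correct and follows essentially the same route as the paper's own proof, which simply invokes Theorems \ref{theorem_principal_convergence}, \ref{theorem_tipeIII} and \ref{bound_constant_reusing} together with inequality \eqref{triangle}, noting that the bounds $B_{N,\epsilon}, \hat{B}_{N,\epsilon}, B^{I}_{N,\epsilon}, B^{II}_{N,\epsilon}$ come from classical VC theory applied to the training, validation and full samples. Your additional detail (Borel--Cantelli for \eqref{as_conv}, Remark \ref{remReuse} for type III under reusing, and the reason Theorem \ref{bound_constant_reusing} must replace Theorem \ref{bound_constant}) just makes explicit what the paper leaves implicit.
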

	
	The bounds for types I and II estimation errors in Theorem \ref{bound_constant_reusing} outline that, if $\mathcal{M}^{\star}$ has a \textit{small} VC dimension and $\mathbb{M}_{\mathbb{L}(\mathcal{H})}$ is such that $\hat{\mathcal{M}} = \mathcal{M}^{\star}$ with \textit{high} probability, then one can learn by reusing and still \textit{properly} estimate on $\hat{\mathcal{M}}$. This result also supports the paradigm of, by properly modeling $\mathbb{L}(\mathcal{H})$ seeking to have the target $\mathcal{M}^{\star}$ with small VC dimension, one can better estimate with a fixed sample of size $N$. As was also the case for learning with an independent sample, we have established the distribution-free consistency of learning by reusing, but have not obtained a distribution-free rate for the convergence of the estimation errors. 
	
	A drawback of learning by reusing is the selection bias of $\hat{h}(\mathcal{D}_{N})$ which increases the risk of overfitting. This has been very well empirically studied in \cite{cawley2010} for some specific cases. This may or may not be an issue, and further empirical studies are needed to better understand when it will be. Nevertheless, this approach could in theory be better when the sample size available is not \textit{great enough}, since the drawback of learning with an independent sample is the need for large sample sizes $N$ and $M$ (cf. Corollary \ref{cor_typeIV}). 
	
	\begin{remark}
		\normalfont
		We believe that bounds for the expectation of types I, and specially II, estimation errors when learning by reusing may be established for specific Learning Spaces via the so-called oracle inequalities, discussed in \cite{lecue2012,van2006,mitchell2009}. We leave this interesting topic for future researches.
	\end{remark}
	
	\section{U-curve property}
	\label{SecUproperty}
	
	As important as the consistency of a Model Selection framework via Learning Spaces, is the possibility of computing $\hat{\mathcal{M}}$ for problems of interest. As can be noted in the examples of Section \ref{SecExamples}, the cardinality of a Learning Space might be (more than) exponential on the number of parameters representing the hypotheses in $\mathcal{H}$, making an exhaustive search of it unfeasible. In this section, we discuss a property which, when satisfied by a Learning Space, may be used to develop non-exhaustive algorithms (discussed in Section \ref{U-curve}) to compute $\hat{\mathcal{M}}$. We first define the U-curve property, then show that it is satisfied by the Partition Lattice Learning Space, and establish a sufficient condition for it that shed light on what it actually means, by drawing a parallel to convexity.
	
	The Model Selection approach based on Learning Spaces is a solution of optimization problem
	\begin{linenomath}
		\begin{equation}
		\label{ucurve_return}
		\mathcal{\hat{M}} \coloneqq \mathbb{M}_{\mathbb{L}(\mathcal{H})}(\mathcal{D}_{N},\hat{L}) \in \hat{\mathcal{L}} = \argminA\limits_{\mathcal{M} \in \mathbb{L}(\mathcal{M})} \hat{L}(\mathcal{M}),
		\end{equation}
	\end{linenomath}
	that is the solution with the least VC dimension.
	
	The main issue with problem (\ref{ucurve_return}) is that, in principle, it demands a combinatorial algorithm which exhaustively search $\mathbb{L}(\mathcal{H})$ to compute $\mathcal{\hat{L}}$ and then select $\mathcal{\hat{M}}$ as the simplest model in it, making it computationally unfeasible to compute $\mathcal{\hat{M}}$. However, due to properties of $\mathbb{L}(\mathcal{H})$ under a loss function $\ell$, and the fact that we consider an estimator $\hat{L}(\mathcal{M})$ apart from the resubstitution error $L_{\mathcal{D}_{N}}(\hat{h}_{\mathcal{M}}(\mathcal{D}_{N}))$, one may take advantage of a U-curve property to solve the problem without exhaustively searching $\mathbb{L}(\mathcal{H})$. Indeed, to find $\mathcal{\hat{M}}$ one needs only to find all strong (or all weak) local minimums of $\mathbb{L}(\mathcal{H})$ (cf. Definition \ref{local_min}), as each global minimum is one of them, a search which may be performed more efficiently than an exhaustive one if the loss function satisfies either the strong or the weak U-curve property. 
	
	\begin{definition}
		\label{u-curve_property} \normalfont
		A Learning Space $\mathbb{L}(\mathcal{H})$ under loss function $\ell$ and estimator $\hat{L}$ satisfies the:
		\begin{itemize}
			\setlength\itemsep{0.5em}
			\item \textbf{strong U-curve property} if every weak local minimum of a continuous chain of $\mathbb{L}(\mathcal{H})$ is a global minimum of such chain;
			\item \textbf{weak U-curve property} if every strong local minimum is a global minimum of all continuous chains of $\mathbb{L}(\mathcal{H})$ which contain it. 
		\end{itemize} 
		The conditions characterizing the U-curve properties should be true with probability one, holding for all possible samples $\mathcal{D}_{N}$, for any value of $N$.
	\end{definition}
	
	We call the properties above U-curve for the plot of $(i_{j},\hat{L}(\mathcal{M}_{i_{j}})), j = 1, \dots, k$, is \textit{U}-shaped when calculated for any continuous chain $\mathcal{M}_{i_{1}} \subset \cdots \subset \mathcal{M}_{i_{k}}$ if the strong U-curve property is in force. It is straightforward that the strong implies the weak U-curve property. Since the concept of local minimums (cf. Definition \ref{local_min}) depends on $\hat{L}$, so does the U-curve properties, whose conditions, once $\hat{L}$ is fixed, should hold for any possible sample $\mathcal{D}_{N}$. In Figure \ref{ucurvelattice}, we present an example of a lattice which satisfies the weak U-curve property. 
	
	In Figure \ref{minimums} is depicted a strong and a weak local minimum. On the one hand, we see in (a) that the intersection of all curves is a strong local minimum, for it is a local minimum of all continuous chains which contain it. Furthermore, it is also the global minimum of all continuous chains which contain it, presenting the behavior which characterizes the weak U-curve property. On the other hand, in (b) we see that the intersection of all curves is not a strong local minimum, but rather a weak local minimum of some continuous chains which contain it, and no U-curve property is satisfied.
	
	The U-curve phenomenon allows a non-exhaustive search for $\mathcal{\hat{M}}$. If the strong U-curve property is satisfied, to find $\mathcal{\hat{M}}$ we do not need to exhaustively search $\mathbb{L}(\mathcal{H})$: we go through every continuous chain of $\mathbb{L}(\mathcal{H})$ until we find the (only) weak local minimum of it so that we find every weak local minimum, and, therefore, the global minimum. Similarly, if the weak U-curve property is satisfied, to find $\hat{\mathcal{M}}$ we go through every continuous chain of $\mathbb{L}(\mathcal{H})$ until we find the (only) strong local minimum of it so that we find every strong local minimum, and, therefore, the global minimum. 
	
	Either way, $\mathbb{L}(\mathcal{H})$ is not exhaustively searched, as when we find a weak or strong local minimum we do not need to estimate the error of the remaining models of a continuous chain, as the strong or weak U-curve property, respectively, ensure that the found local minimum is a global minimum of the continuous chain. This was first done for feature selection lattices in \cite{u-curve3,featsel,reis2018,u-curve1,u-curve2,ucurveParallel}.
	
	\begin{figure}[ht]
		\begin{center}
			\begin{tikzpicture}[scale=1, transform shape]
			\tikzstyle{hs} = [circle,draw=black, rounded corners,minimum width=2em, node distance=2.5cm, line width=1pt]
			\tikzstyle{branch}=[fill,shape=circle,minimum size=4pt,inner sep=0pt]
			\tikzstyle{simple}=[rectangle,draw=black,minimum height=2em,minimum width=3.5em, node distance=2cm,line width=1pt]
			\tikzstyle{dot}=[node distance=2.5cm, line width=1pt, minimum width=1em]
			\node[hs] at (5, 2) (empty) {\tiny 0,07};
			\node[hs,fill=orange] at (0, 0.5) (a1) {\tiny 0,05};
			\node[hs] at (3.3, 0.5) (a2) {\tiny 0,062};
			\node[hs] at (6.6, 0.5) (a3) {\tiny 0,057};
			\node[hs,fill=orange] at (10, 0.5) (a4) {\tiny 0,051};
			\node[hs] at (0, -1) (a12) {\tiny 0,060};
			\node[hs,fill=green] at (2, -1) (a13) {\tiny 0,042};
			\node[hs] at (4, -1) (a14) {\tiny 0,053};
			\node[hs,fill=green] at (6, -1) (a23) {\tiny 0,041};
			\node[hs,fill=orange] at (8, -1) (a24) {\tiny 0,048};
			\node[hs] at (10, -1) (a34) {\tiny 0,054};
			\node[hs,fill=orange] at (0, -2.5) (a123) {\tiny 0,045};
			\node[hs,fill=green] at (3.3, -2.5) (a124) {\tiny 0,047};
			\node[hs,fill=orange] at (6.6, -2.5) (a134) {\tiny 0,048};
			\node[hs,fill=orange] at (10, -2.5) (a234) {\tiny 0,053};
			\node[hs] at (5, -4) (a1234) {\tiny 0,055};

			\begin{scope}[line width=1pt]
			\draw[-] (empty) -- (a1);
			\draw[-] (empty) -- (a2);
			\draw[-] (empty) -- (a3);
			\draw[-] (empty) -- (a4);
			\draw[-] (a1) -- (a12);
			\draw[-] (a1) -- (a13);
			\draw[-] (a1) -- (a14);
			\draw[-] (a2) -- (a12);
			\draw[-] (a2) -- (a23);
			\draw[-] (a2) -- (a24);
			\draw[-] (a3) -- (a13);
			\draw[-] (a3) -- (a23);
			\draw[-] (a3) -- (a34);
			\draw[-] (a4) -- (a14);
			\draw[-] (a4) -- (a24);
			\draw[-] (a4) -- (a34);
			\draw[-] (a12) -- (a123);
			\draw[-] (a12) -- (a124);
			\draw[-] (a13) -- (a123);
			\draw[-] (a13) -- (a134);
			\draw[-] (a14) -- (a124);
			\draw[-] (a14) -- (a134);
			\draw[-] (a23) -- (a123);
			\draw[-] (a23) -- (a234);
			\draw[-] (a24) -- (a124);
			\draw[-] (a24) -- (a234);
			\draw[-] (a34) -- (a134);
			\draw[-] (a34) -- (a234);
			\draw[-] (a123) -- (a1234);
			\draw[-] (a124) -- (a1234);
			\draw[-] (a134) -- (a1234);
			\draw[-] (a234) -- (a1234);
			\end{scope}
			
			\end{tikzpicture}
		\end{center}
		\caption{\footnotesize Example of a lattice satisfying the weak U-curve property. The number inside each node $\mathcal{M}$ is $\hat{L}(\mathcal{M})$. The strong local minimums are in green and the weak local minimums are in orange. All strong local minimums are global minimums of all continuous chains which contain them, so this is an example of a weak U-curve property configuration.}
		\label{ucurvelattice}
	\end{figure}
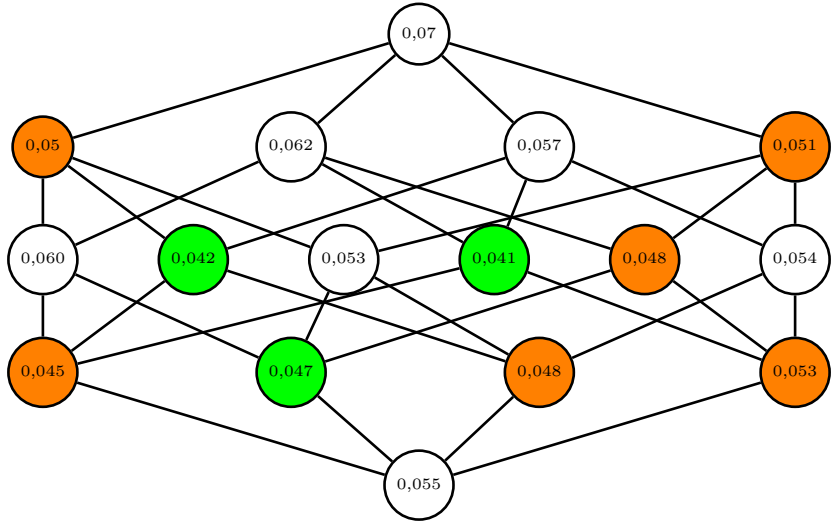
	
	\begin{figure}[ht]
		\centering
		\includegraphics[width=\linewidth]{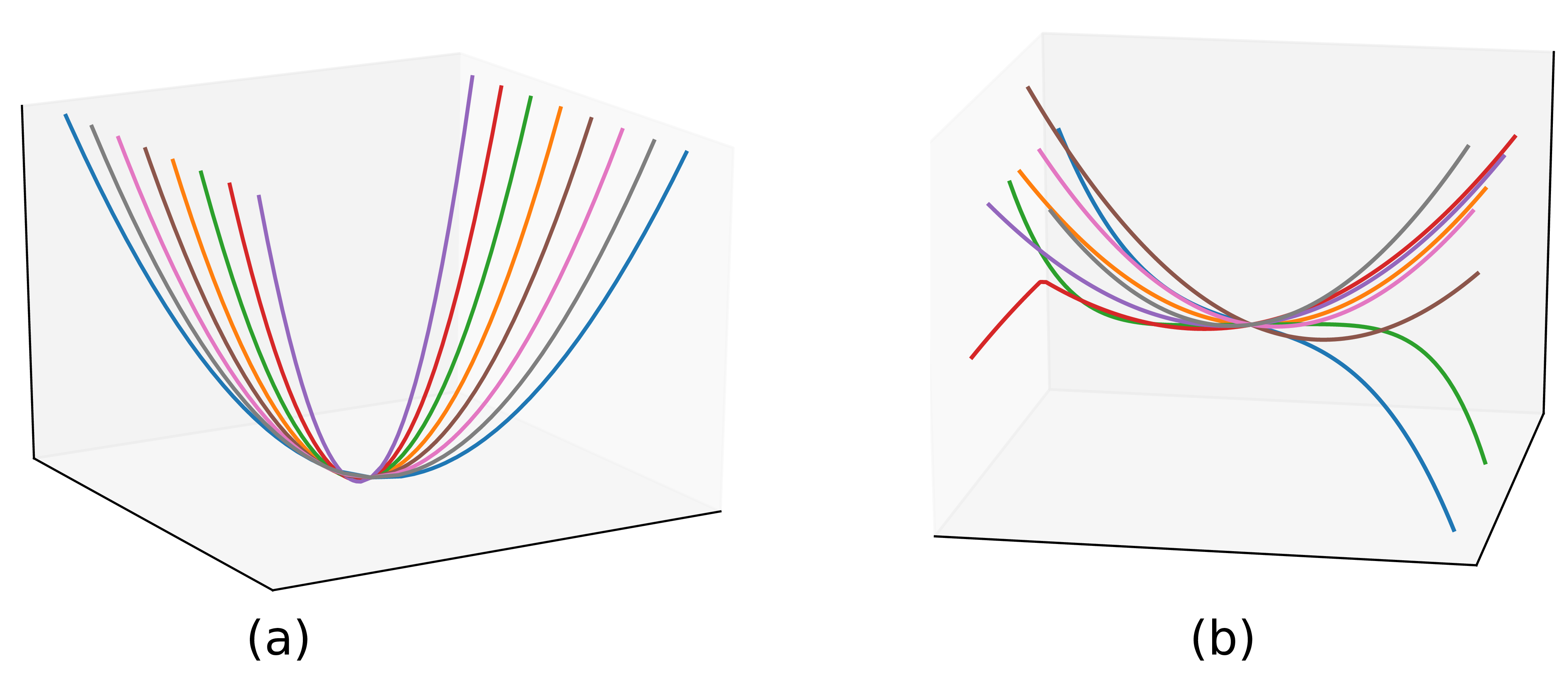}
		\caption{\footnotesize Example of (a) strong and (b) weak local minimums. In (a), we have the typical behavior of the weak U-curve property.} \label{minimums}
	\end{figure}
	
	The U-curve properties are characterized by local features of the estimated error $\hat{L}$, when calculated for chains of a Leaning Space, which imply global properties of such chains. This implication is only possible due to the existence of a structure, given by relations between the models in $\mathbb{L}(\mathcal{H})$. Therefore, not only the structure of the Learning Space is highly related to the rate of convergence to the target hypotheses space, evidenced by the MDE $\epsilon^{\star}$, but it is also what enables the estimation of $\hat{\mathcal{M}}$, or a suboptimal model, via the solution of a U-curve optimization problem, evidencing once more the importance of carefully modeling $\mathbb{L}(\mathcal{H})$ according to the application at hand.
	
	The next step, after defining the U-curve properties, is to point Learning Spaces which satisfy them. A natural way is to establish sufficient conditions for a  U-curve property which can be verified on a given $\mathbb{L}(\mathcal{H})$ or employed to construct Learning Spaces. In Section \ref{Sec1Chap2}, we show that the Partition Lattice Learning Space satisfies the weak U-curve property and in Section \ref{sufcond} we establish a sufficient condition for the weak U-curve property. The condition presented there is not necessary, as we show that the Partition Lattice Learning Space satisfies the weak U-curve property, but not the condition.
	
	\subsection{U-curve on the Partition Lattice Learning Space}
	\label{Sec1Chap2}
	
	When $\mathcal{X}$ is finite, one could theoretically explicitly search the Partition Lattice Learning Space to find a target partition. However, since the cardinality of this space is the $|\mathcal{X}|$-Bell number \cite{becker1948arithmetic,bell1934exponential,bell1938iterated}, which increases more than exponentially with $|\mathcal{X}|$ (see Table \ref{bell} for the 30 first Bell numbers), an exhaustive search of this lattice is not feasible. However, depending on how one defines the loss function $\ell$ and the loss $\hat{L}$ of each partition, a non-exhaustive search may be performed in this space, returning a suitable partition on which to learn a good hypothesis, since the weak U-curve property is satisfied. This is the result of the next proposition.
	
	\begin{proposition}
		\label{partition_has_ucurve} 
		The Partition Lattice Learning Space under the simple loss function and $\hat{L}$ of the form \eqref{form_Lhat} satisfies the weak U-curve property.
	\end{proposition}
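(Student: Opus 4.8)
The plan is to exploit the fact that, for the Partition Lattice Learning Space, $\mathcal{H}$ is the set of \emph{all} functions $\mathcal{X}\to\{0,1\}$, so the ERM hypothesis $\hat h^{(j)}_{\mathcal{R}(\pi)}$ is computed \emph{blockwise}: on each block $B\in\pi$ it outputs a majority label among the $j$-th training points falling in $B$. Hence, under the simple loss and $\hat{L}$ of the form \eqref{form_Lhat}, the estimated error is additive over the blocks of $\pi$,
\begin{equation*}
\hat{L}(\mathcal{R}(\pi)) \;=\; \frac{1}{m}\sum_{j=1}^{m}\frac{1}{|V_{j}|}\sum_{B\in\pi} v_{j}(B),
\end{equation*}
where $V_{j}$ is the $j$-th validation sample and $v_{j}(B)$ is the number of points of $V_{j}$ in $B$ whose label disagrees with the $j$-th training majority on $B$. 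This block-additive representation is the engine of the argument, together with the following \emph{regrouping identity}: for a fixed $j$ and a block $B_{0}$ that is subdivided into sub-blocks $B'_{1},\dots,B'_{s}$, if $C_{1}$ denotes the union of the $B'_{\ell}$ whose $j$-th training majority is $0$ and $C_{2}$ the union of those whose majority is $1$, then $v_{j}(C_{1})+v_{j}(C_{2})=\sum_{\ell}v_{j}(B'_{\ell})$, because merging blocks that share a training majority label does not change the validation error (a union of blocks with common training majority inherits that majority, under a fixed tie-breaking convention).

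First I would invoke the characterization of strong local minima in the Remark following Definition~\ref{local_min}: $\mathcal{R}(\pi)$ is a strong local minimum iff $\hat{L}(\mathcal{R}(\pi))\le\hat{L}(\mathcal{R}(\pi'))$ for every partition $\pi'$ at Hasse distance one from $\pi$, that is, every $\pi'$ obtained from $\pi$ by splitting one block into two or by merging two blocks into one. Given such a $\pi$ and an arbitrary continuous chain (Definition~\ref{chain}) through $\mathcal{R}(\pi)$, say $\pi_{1}<\cdots<\pi_{p}=\pi<\cdots<\pi_{q}$, every $\pi_{t}$ is comparable to $\pi$, so it suffices to establish two claims: the \emph{refinement claim}, that $\hat{L}(\mathcal{R}(\pi))\le\hat{L}(\mathcal{R}(\sigma))$ whenever $\pi<\sigma$, and the dual \emph{coarsening claim}, that $\hat{L}(\mathcal{R}(\pi))\le\hat{L}(\mathcal{R}(\tau))$ whenever $\tau<\pi$; together with the neighbour characterization these give that $\hat{L}(\mathcal{R}(\pi))$ is the minimum over the whole chain, hence the weak U-curve property by Definition~\ref{u-curve_property}.

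For the refinement claim I would argue by contradiction. If $\hat{L}(\mathcal{R}(\sigma))<\hat{L}(\mathcal{R}(\pi))$ for some $\sigma>\pi$, the block-additive formula forces a block $B_{0}\in\pi$ on which the (weighted, $j$-summed) contribution strictly decreases when $B_{0}$ is subdivided as in $\sigma$. Applying the regrouping identity inside $B_{0}$ replaces $\sigma$'s subdivision of $B_{0}$ by a \emph{two-block} split, yielding a distance-one neighbour $\pi'$ of $\pi$ with $\hat{L}(\mathcal{R}(\pi'))\le\hat{L}(\mathcal{R}(\sigma))<\hat{L}(\mathcal{R}(\pi))$ — one checks that both parts of the split are non-empty, so $\pi'$ is a genuine neighbour — contradicting strong local minimality. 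For $m=1$ this is immediate; for $m>1$ the per-fold splits produced by the identity differ, so they must first be amalgamated into a single split serving all $m$ summands. The coarsening claim I would attack the same way: if $\hat{L}(\mathcal{R}(\tau))<\hat{L}(\mathcal{R}(\pi))$ with $\tau<\pi$, some block $D_{0}$ of $\tau$ is the union of $\pi$-blocks $B_{1},\dots,B_{r}$ with $v(D_{0})<\sum_i v(B_i)$, and one seeks a pair $B_{a},B_{b}$ whose merge already lowers the error, again contradicting strong local minimality.

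The step I expect to be the main obstacle is this coarsening direction when $r\ge 3$: the regrouping identity painlessly collapses the $r$ blocks into \emph{two} groups without changing the error, but those two groups are not obtained by merging a single adjacent pair, and passing from ``merging all of $D_{0}$ lowers the error'' to ``merging some pair lowers the error'' requires controlling how the training majority of a union shifts as the blocks are combined in a carefully chosen order. A secondary difficulty of the same nature is the averaging over the $m$ pairs in \eqref{form_Lhat}, where the advantageous split or merge supplied per fold must be amalgamated into one distance-one move of $\pi$ that does not increase the averaged error; this is exactly where the proof must use that $\hat L$ is an \emph{average of blockwise-majority validation errors} and not merely that each summand is well behaved along chains. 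Keeping the ERM tie-breaking consistent across blocks is part of this bookkeeping, and $m=1$ together with the refinement direction is the easy case against which the general argument should be calibrated.
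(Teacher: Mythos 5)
Your refinement half is, in substance, the paper's own argument recast as a contradiction: blockwise additivity of the validation error of the per-block training-majority ERM, plus the fact that a union of blocks sharing a training majority keeps that majority (your ``regrouping identity'' is the paper's weighted-average argument for $y^{\star}$ in \eqref{ystar}--\eqref{ystar_1}), reduces any advantageous refinement of $\pi$ to an advantageous two-block split of a single block of $\pi$, contradicting strong local minimality. (One slip: $\hat L(\mathcal{R}(\pi'))\le\hat L(\mathcal{R}(\sigma))$ is not the right comparison, since $\sigma$ may also improve other blocks of $\pi$; instead pick a block of $\pi$ whose contribution strictly drops under $\sigma$ and conclude $\hat L(\mathcal{R}(\pi'))<\hat L(\mathcal{R}(\pi))$ directly. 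This is cosmetic.)

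The genuine gap is exactly where you predicted it: the coarsening direction and the $m>1$ averaging, which you leave open, and the pair-merge route you sketch for the former cannot be completed. Take $\mathcal{X}=\{x_1,x_2,x_3\}$, training sample with three copies of $(x_1,1)$, three of $(x_2,1)$, four of $(x_3,0)$, validation sample with one $(x_1,1)$, one $(x_2,1)$, five $(x_3,1)$, and $\hat L$ the validation estimator ($m=1$). The finest partition has $\hat L=5/7$, while its three distance-one neighbours (the pairwise merges) have $\hat L=5/7,\,6/7,\,6/7$, so it is a strong local minimum under the non-strict inequalities of Definition~\ref{local_min}; yet the coarsest partition, two steps below it on a continuous chain, has $\hat L=0$, because the aggregate training majority is $1$ whereas the majority on $\{x_3\}$ alone is $0$. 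Thus at a strong local minimum no single merge need lower the error even though a deeper coarsening does: the distance-one merge constraints only control blocks that lose a pairwise majority vote and say nothing about a block (here $\{x_3\}$) that wins every pairwise vote but loses the aggregate one, so the ``find a pair $B_a,B_b$ whose merge already lowers the error'' step fails. Note that this is precisely the direction the paper's own proof dispatches with the single sentence that it ``can be done analogously'' to the refinement case; your instinct that the $r\ge 3$ coarsening is the crux is correct, but as written your proposal (like that sentence) does not establish it, and the example shows any repair must use more than distance-one information or revisit the definitions (ties, strictness), not merely amalgamate pairwise merges. The $m>1$ amalgamation you flag has the same character: strong local minimality of the averaged $\hat L$ does not deliver the per-fold inequalities that either your reduction or the paper's ``repeat the proof for each term'' step presupposes.
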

	
	Learning hypotheses via the Partition Lattice Learning Space, although demands a lot of computation, has some advantages. First, if one has some prior knowledge about the partition generated by $h^{\star}$ he may search only the partitions which satisfy a given property, or within a partition consider only hypotheses that respect it and satisfy a given condition. Second, once a partition is selected, one may qualitatively analyze it, and the path of the U-curve algorithm until it (cf. Algorithm \ref{A1}), to obtain insights about the learned hypothesis and better understand why it classifies certain inputs in an output.

	\begin{table}[ht]
		\centering
		\resizebox*{\linewidth}{!}{\begin{tabular}{cl|cl|cl}
				\hline
				$|\mathcal{X}|$ & Bell number & $|\mathcal{X}|$ & Bell number & $|\mathcal{X}|$ & Bell number \\ 
				\hline
				1 & 1 & 11 & 678,570 & 21 & 474,869,816,156,751 \\ 
				2 & 2 & 12 & 4,213,597 & 22 & 4,506,715,738,447,323 \\ 
				3 & 5 & 13 & 27,644,437 & 23 & 44,152,005,855,084,344 \\ 
				4 & 15 & 14 & 190,899,322 & 24 & 445,958,869,294,805,312 \\ 
				5 & 52 & 15 & 1,382,958,545 & 25 & 4,638,590,332,229,998,592 \\ 
				6 & 203 & 16 & 10,480,142,147 & 26 & 49,631,246,523,618,762,752 \\ 
				7 & 877 & 17 & 82,864,869,804 & 27 & 545,717,047,936,060,030,976 \\ 
				8 & 4,140 & 18 & 682,076,806,159 & 28 & 6,160,539,404,599,936,679,936 \\ 
				9 & 21,147 & 19 & 5,832,742,205,057 & 29 & 71,339,801,938,860,290,605,056 \\ 
				10 & 115,975 & 20 & 51,724,158,235,372 & 30 & 846,749,014,511,809,254,653,952 \\ 
				\hline
		\end{tabular}}
		\caption{\footnotesize First to 30th Bell number.} \label{bell}
	\end{table}
	
	\subsection{Sufficient condition for U-curve property}
	\label{sufcond}
	
	We start out by establishing notation. For each $\mathcal{M} \in \mathbb{L}(\mathcal{H})$, a Lattice Learning Space, define
	\begin{linenomath}
		\begin{align*}
		\mathcal{C}^{+}(\mathcal{M}) = \Big\{\mathcal{M}_{i} \in \mathbb{L}(\mathcal{H}): \mathcal{M} \subset \mathcal{M}_{i}\Big\} & & \mathcal{C}^{-}(\mathcal{M}) = \Big\{\mathcal{M}_{i} \in \mathbb{L}(\mathcal{H}):  \mathcal{M}_{i} \subset \mathcal{M}\Big\}
		\end{align*} 
	\end{linenomath}
	as the models which contain or are contained in $\mathcal{M}$, respectively. Both $\mathcal{C}^{+}(\mathcal{M})$ and $\mathcal{C}^{-}(\mathcal{M})$ are complete lattices, on which $\mathcal{M}$ is the least and greatest model, respectively. We define for each $\mathcal{M}_{i} \in \mathcal{C}^{+}(\mathcal{M})\setminus\{\mathcal{M}\}$ the lower immediate neighborhood of $\mathcal{M}_{i}$ relative to $\mathcal{M}$ as
	\begin{linenomath}
		\begin{align*}
		N^{+}(\mathcal{M}_{i}) = \Big\{\mathcal{M}_{j} \in \mathcal{C}^{+}(\mathcal{M}): \mathcal{M}_{j} \subset \mathcal{M}_{i}, d(\mathcal{M}_{j},\mathcal{M}_{i}) = 1\Big\}
		\end{align*}
	\end{linenomath}
	and for each $\mathcal{M}_{i} \in \mathcal{C}^{-}(\mathcal{M})\setminus\{\mathcal{M}\}$ the upper immediate neighborhood relative to $\mathcal{M}$ as
	\begin{linenomath}
		\begin{align*}
		N^{-}(\mathcal{M}_{i}) = \Big\{\mathcal{M}_{j} \in \mathcal{C}^{-}(\mathcal{M}): \mathcal{M}_{i} \subset \mathcal{M}_{j}, d(\mathcal{M}_{j},\mathcal{M}_{i}) = 1\Big\}.
		\end{align*}
	\end{linenomath}
	What differs these two sets, both composed by the models in the sub-lattice which has $\mathcal{M}$ as the greatest or least element, which are at a distance one from $\mathcal{M}_{i}$, is if these models contain or are contained in $\mathcal{M}_{i}$.
	
	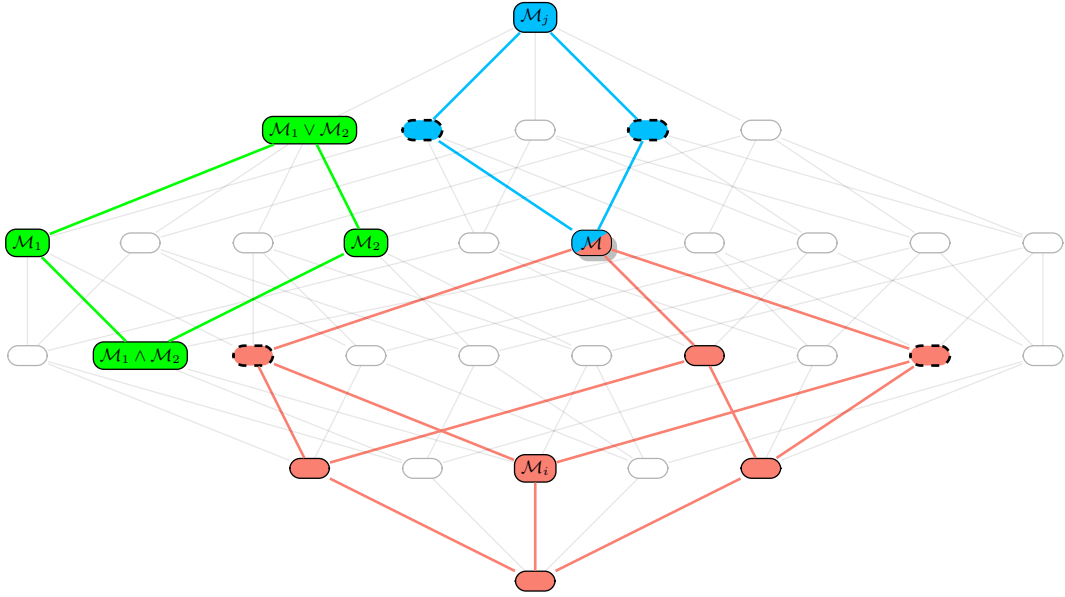
\begin{figure}[ht]
		\begin{center}
			\begin{tikzpicture}[scale=0.75, transform shape]
			\tikzstyle{hs} = [rectangle,draw=black, rounded corners,minimum width=2em,minimum height=1em, node distance=2.5cm, line width=0.5pt,opacity = 0.3]
			\tikzstyle{hs2} = [rectangle,draw=black, rounded corners,minimum width=2em,minimum height=1em, node distance=2.5cm, line width=0.5pt]
			\tikzstyle{hs3} = [rectangle,draw=black, rounded corners,minimum width=2em,minimum height=1em, node distance=2.5cm, line width=0.5pt,fill = bblue]
			\tikzstyle{hs31} = [rectangle,draw=black, rounded corners,minimum width=2em,minimum height=1em, node distance=2.5cm, line width=1pt,fill = bblue,dashed]
			\tikzstyle{hs4} = [rectangle,draw=black, rounded corners,minimum width=2em,minimum height=1em, node distance=2.5cm, line width=0.5pt,fill = color1bg]
			\tikzstyle{hs41} = [rectangle,draw=black, rounded corners,minimum width=2em,minimum height=1em, node distance=2.5cm, line width=1pt,fill = color1bg,dashed]
			\tikzstyle{hs5} = [rectangle,draw=black, rounded corners,minimum width=2em,minimum height=1em, node distance=2.5cm, line width=0.5pt,fill = green]
			
			\node[hs3] at (0,0) (11111) {\small $\mathcal{M}_{j}$};
			
			\node[hs5] at (-4,-2) (01111) {\small $\mathcal{M}_{1} \vee \mathcal{M}_{2}$};
			\node[hs31] at (-2,-2) (10111) {};
			\node[hs] at (0,-2) (11011) {};
			\node[hs31] at (2,-2) (11101) {};
			\node[hs] at (4,-2) (11110) {};
			
			\node[hs5] at (-9,-4) (00111) {\small $\mathcal{M}_{1}$};
			\node[hs] at (-7,-4) (01011) {};
			\node[hs] at (-5,-4) (01101) {};
			\node[hs5] at (-3,-4) (01110) {\small $\mathcal{M}_{2}$};
			\node[hs] at (-1,-4) (10011) {};
			\node[diagonal fill={color1bg}{bblue},
			minimum height=1em,minimum width=2em,
			text centered, rounded corners, draw, drop shadow] at (1,-4) (10101) {\small $\mathcal{M}$};
			\node[hs] at (3,-4) (10110) {};
			\node[hs] at (5,-4) (11001) {};
			\node[hs] at (7,-4) (11010) {};
			\node[hs] at (9,-4) (11100) {};
			
			\node[hs] at (-9,-6) (00011) {};
			\node[hs41] at (-5,-6) (00101) {};
			\node[hs5] at (-7,-6) (00110) {\small $\mathcal{M}_{1} \wedge \mathcal{M}_{2}$};
			\node[hs] at (-3,-6) (01001) {};
			\node[hs] at (-1,-6) (01010) {};
			\node[hs] at (1,-6) (01100) {};
			\node[hs4] at (3,-6) (10001) {};
			\node[hs] at (5,-6) (10010) {};
			\node[hs41] at (7,-6) (10100) {};
			\node[hs] at (9,-6) (11000) {};
			
			\node[hs4] at (4,-8) (10000) {};
			\node[hs] at (2,-8) (01000) {};
			\node[hs4] at (0,-8) (00100) {\small $\mathcal{M}_{i}$};
			\node[hs] at (-2,-8) (00010) {};
			\node[hs4] at (-4,-8) (00001) {};
			
			\node[hs4] at (0,-10) (00000) {};
			

			\begin{scope}[line width=1pt]
			\draw[-,color = bblue] (11111) -- (10111);
			\draw[-,color = bblue] (11111) -- (11101);
			\draw[-,color = bblue] (10111) -- (10101);
			\draw[-,color = bblue] (11101) -- (10101);
			\draw[-,color = color1bg] (10101) -- (00101);
			\draw[-,color = color1bg] (10101) -- (10001);
			\draw[-,color = color1bg] (10101) -- (10100);
			\draw[-,color = color1bg] (10000) -- (10100);
			\draw[-,color = color1bg] (10000) -- (10001);
			\draw[-,color = color1bg] (00100) -- (10100);
			\draw[-,color = color1bg] (00001) -- (10001);
			\draw[-,color = color1bg] (00001) -- (00101);
			\draw[-,color = color1bg] (00100) -- (00101);
			\draw[-,color = color1bg] (00001) -- (00000);
			\draw[-,color = color1bg] (00100) -- (00000);
			\draw[-,color = color1bg] (10000) -- (00000);
			\draw[-,color = green] (01111) -- (00111);
			\draw[-,color = green] (01111) -- (01110);
			\draw[-,color = green] (00111) -- (00110);
			\draw[-,color = green] (01110) -- (00110);
			\end{scope}
			
			\begin{scope}[line width=0.5pt,opacity=0.1]
			\draw[-] (11111) -- (01111);
			\draw[-] (11111) -- (11011);
			\draw[-] (11111) -- (11110);		
			\draw[-] (01111) -- (01011);
			\draw[-] (01111) -- (01101);
			\draw[-] (10111) -- (00111);
			\draw[-] (10111) -- (10011);
			\draw[-] (10111) -- (10110);
			\draw[-] (11011) -- (01011);
			\draw[-] (11011) -- (10011);
			\draw[-] (11011) -- (11001);
			\draw[-] (11011) -- (11010);
			\draw[-] (11101) -- (01101);
			\draw[-] (11101) -- (11001);
			\draw[-] (11101) -- (11100);
			\draw[-] (11110) -- (01110);
			\draw[-] (11110) -- (10110);
			\draw[-] (11110) -- (11010);
			\draw[-] (11110) -- (11100);
			\draw[-] (00111) -- (00011);
			\draw[-] (00111) -- (00101);
			\draw[-] (01011) -- (00011);
			\draw[-] (01011) -- (01001);
			\draw[-] (01011) -- (01010);
			\draw[-] (01101) -- (00101);
			\draw[-] (01101) -- (01001);
			\draw[-] (01101) -- (01100);
			\draw[-] (01110) -- (01010);
			\draw[-] (01110) -- (01100);
			\draw[-] (10011) -- (00011);
			\draw[-] (10011) -- (10001);
			\draw[-] (10011) -- (10010);
			\draw[-] (10110) -- (00110);
			\draw[-] (10110) -- (10010);
			\draw[-] (10110) -- (10100);
			\draw[-] (11001) -- (01001);
			\draw[-] (11001) -- (10001);
			\draw[-] (11001) -- (11000);
			\draw[-] (11010) -- (01010);
			\draw[-] (11010) -- (10010);
			\draw[-] (11010) -- (11000);
			\draw[-] (11100) -- (01100);
			\draw[-] (11100) -- (10100);
			\draw[-] (11100) -- (11000);		
			\draw[-] (10000) -- (11000);
			\draw[-] (10000) -- (10010);
			\draw[-] (01000) -- (11000);
			\draw[-] (01000) -- (01100);
			\draw[-] (01000) -- (01010);
			\draw[-] (01000) -- (01001);
			\draw[-] (00100) -- (01100);
			\draw[-] (00100) -- (00110);
			\draw[-] (00010) -- (10010);
			\draw[-] (00010) -- (01010);
			\draw[-] (00010) -- (00110);
			\draw[-] (00010) -- (00011);
			\draw[-] (00001) -- (01001);
			\draw[-] (00001) -- (00011);
			\draw[-] (00010) -- (00000);
			\draw[-] (01000) -- (00000);				
			\end{scope}
			
			\end{tikzpicture}
		\end{center}
		\caption{\footnotesize A Learning Space isomorphic to a Boolean Lattice, so it is U-curve compatible. The red nodes represent the lattice $C^{-}(\mathcal{M})$ and the blue nodes the lattice $C^{+}(\mathcal{M})$, for a given $\mathcal{M}$. The red dashed nodes are in $N^{-}(\mathcal{M}_{i})$ and the blue dashed nodes are in $N^{+}(\mathcal{M}_{j})$. The green nodes are an example of a pair $\mathcal{M}_{1}, \mathcal{M}_{2}$ for which the condition \eqref{condS} of Theorem \ref{general_theorem} should be satisfied.}
		\label{Blattice}
	\end{figure}
	
	If $\mathcal{M}_{j} \in N^{+}(\mathcal{M}_{i})$ then $\mathcal{M} \subset \mathcal{M}_{j} \subset \mathcal{M}_{i}$ and if $\mathcal{M}_{j} \in N^{-}(\mathcal{M}_{i})$ then $\mathcal{M}_{i} \subset \mathcal{M}_{j} \subset \mathcal{M}$. We say that $\mathbb{L}(\mathcal{H})$ is U-curve compatible if for every $\mathcal{M} \in \mathbb{L}(\mathcal{H})$
	\begin{linenomath}
		\begin{align*}
		\nonumber
		N^{+}(\mathcal{M}_{i}) &= \{\mathcal{M}\} \text{ or } |N^{+}(\mathcal{M}_{i})| \geq 2, & & \forall \ \mathcal{M}_{i} \in \mathcal{C}^{+}(\mathcal{M})\setminus\{\mathcal{M}\} \\
		N^{-}(\mathcal{M}_{i}) &= \{\mathcal{M}\} \text{ or } |N^{-}(\mathcal{M}_{i})| \geq 2, & & \forall \ \mathcal{M}_{i} \in \mathcal{C}^{-}(\mathcal{M})\setminus\{\mathcal{M}\}
		\end{align*}
	\end{linenomath}
	i.e., $\mathbb{L}(\mathcal{H})$ is \textit{U-curve compatible} if for every $\mathcal{M} \in \mathbb{L}(\mathcal{H})$  the lower (upper) immediate neighborhood of all models in $\mathcal{C}^{+}(\mathcal{M})\setminus\{\mathcal{M}\}$ ($\mathcal{C}^{-}(\mathcal{M})\setminus\{\mathcal{M}\}$) is equal to $\mathcal{M}$ or contains at least two distinct models. The sets defined above are illustrated in Figure \ref{Blattice} which presents a U-curve compatible Learning Space. If $\mathbb{L}(\mathcal{H})$ is U-curve compatible, then a simple property of $\hat{L}(\mathcal{M}_{i})$ is sufficient for the weak U-curve property.

	\begin{theorem}
		\label{general_theorem}
		Let $\mathbb{L}(\mathcal{H})$ be a U-curve compatible Lattice Learning Space. If all $\mathcal{M}_{1}, \mathcal{M}_{2} \in \mathbb{L}(\mathcal{H})$ such that $d(\mathcal{M}_{i},\mathcal{M}_{1} \wedge \mathcal{M}_{2}) = d(\mathcal{M}_{i},\mathcal{M}_{1} \vee \mathcal{M}_{2}) = 1, i = 1,2,$ satisfies
		\begin{linenomath}
			\begin{align}
			\label{condS}
			\hat{L}(\mathcal{M}_{1} \vee \mathcal{M}_{2}) \geq \hat{L}(\mathcal{M}_{1}) + \hat{L}(\mathcal{M}_{2}) - \hat{L}(\mathcal{M}_{1} \wedge \mathcal{M}_{2}),
			\end{align}
		\end{linenomath}
		with probability $1$, then the weak U-curve property is in force for $\mathbb{L}(\mathcal{H})$ under $\ell$ and estimator $\hat{L}$.
	\end{theorem}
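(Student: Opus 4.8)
The plan is to reduce the statement to a monotonicity property of $\hat L$ around a strong local minimum and then prove that property by induction along the lattice. Recall (Definition \ref{local_min}) that $\mathcal{M}$ is a strong local minimum iff $\hat L(\mathcal M)\le\hat L(\mathcal M_i)$ for every $\mathcal M_i$ with $d(\mathcal M_i,\mathcal M)=1$. I would first observe that it suffices to prove the following \textbf{Claim}: if $\mathcal M$ is a strong local minimum, then $\hat L(\mathcal A)\le\hat L(\mathcal B)$ for every pair $\mathcal M\subseteq\mathcal A\subsetneq\mathcal B$ with $d(\mathcal A,\mathcal B)=1$; that is, $\hat L$ is non-decreasing along continuous chains issuing upward from $\mathcal M$. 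Indeed, the order-dual poset satisfies the dual of U-curve compatibility (the roles of $N^{+}$ and $N^{-}$, and of $\vee$ and $\wedge$, are exchanged), while inequality \eqref{condS} is self-dual since it is equivalent to $\hat L(\mathcal M_1\vee\mathcal M_2)+\hat L(\mathcal M_1\wedge\mathcal M_2)\ge\hat L(\mathcal M_1)+\hat L(\mathcal M_2)$, and ``strong local minimum'' is a self-dual notion; hence the Claim applied in the dual also shows that $\hat L$ is non-decreasing along continuous chains issuing downward from $\mathcal M$. A continuous chain containing $\mathcal M$ is the concatenation of one such upward chain and one such downward chain, so $\mathcal M$ attains the minimum of $\hat L$ on it, which is exactly the weak U-curve property.

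The crucial lattice-structural input I would isolate as a separate lemma: \emph{in a U-curve compatible Lattice Learning Space, if $\mathcal A\ne\mathcal A'$ are both lower covers of $\mathcal B$, then $\mathcal A\wedge\mathcal A'$ is a lower cover of $\mathcal A$ and of $\mathcal A'$}; in other words, such ``diamonds'' are genuine, so condition \eqref{condS} is applicable to them. This is precisely where U-curve compatibility is used: applied with a range of base points it forbids an element from having a unique lower cover unless that cover is minimal, and, more generally, forces enough ``width'' inside intervals; combined with the finiteness of $\mathbb{L}(\mathcal H)$ (intervals have bounded height) and with property (ii) of a Learning Space (so $d_{VC}$ strictly increases along $\subset$), this yields the lemma by a minimal-interval / induction-on-$d_{VC}$ argument. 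I expect this combinatorial lemma to be the main obstacle, since the analogous statement is false for general lattices — e.g.\ the five-element non-modular lattice, which fails to be U-curve compatible.

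With the lemma available I would prove the Claim by induction on $k:=d_{VC}(\mathcal B)-d_{VC}(\mathcal M)\ge 1$. If $\mathcal A=\mathcal M$, then $\mathcal M\lessdot\mathcal B$ and $\hat L(\mathcal B)\ge\hat L(\mathcal M)$ is just the strong-local-minimum inequality; by property (ii) this case already covers $k=1$. If $\mathcal A\supsetneq\mathcal M$, then $\mathcal M\subsetneq\mathcal A\subsetneq\mathcal B$ forces $d(\mathcal M,\mathcal B)\ge 2$, so U-curve compatibility with base point $\mathcal M$ gives $|N^{+}(\mathcal B)|\ge 2$ and hence a lower cover $\mathcal A'$ of $\mathcal B$ with $\mathcal M\subseteq\mathcal A'$ and $\mathcal A'\ne\mathcal A$. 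Then $\mathcal A\vee\mathcal A'=\mathcal B$, and $\mathcal P:=\mathcal A\wedge\mathcal A'$ satisfies $\mathcal M\subseteq\mathcal P\subsetneq\mathcal A$; by the lemma $\mathcal P\lessdot\mathcal A$ and $\mathcal P\lessdot\mathcal A'$, so the diamond $(\mathcal A,\mathcal A')$ satisfies the hypothesis of \eqref{condS} and
\begin{equation*}
\hat L(\mathcal B)\ge\hat L(\mathcal A)+\hat L(\mathcal A')-\hat L(\mathcal P).
\end{equation*}
Since $\mathcal A'\subsetneq\mathcal B$ we have $d_{VC}(\mathcal A')-d_{VC}(\mathcal M)\le k-1$, so the induction hypothesis applied to the pair $\mathcal M\subseteq\mathcal P\lessdot\mathcal A'$ gives $\hat L(\mathcal P)\le\hat L(\mathcal A')$; substituting this above yields $\hat L(\mathcal B)\ge\hat L(\mathcal A)$, which closes the induction and proves the Claim, hence the theorem. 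All inequalities hold with probability one because \eqref{condS} is assumed to, and the weak U-curve property (Definition \ref{u-curve_property}) is itself a probability-one statement; a secondary technical point to handle carefully is the bookkeeping relating Hasse distance, the covering relation, and $d_{VC}$, which is what makes property (ii) enter the argument.
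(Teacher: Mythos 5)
Your overall strategy is the paper's: reduce the weak U-curve property to the statement that $\hat{L}$ is non-decreasing along continuous chains issuing upward (and, dually, downward) from a strong local minimum $\mathcal{M}$, and prove that by induction, using U-curve compatibility to produce a second lower cover and \eqref{condS} to close the inductive step. The genuine problem is the auxiliary ``diamond'' lemma you single out as the main obstacle and do not prove: that in a U-curve compatible Lattice Learning Space the meet of two distinct lower covers of $\mathcal{B}$ is a lower cover of each. This lemma is false under the stated hypotheses, so no ``minimal-interval / induction-on-$d_{VC}$'' argument can deliver it. Consider the nine-element lattice with least element $\mathcal{O}$, atoms $\mathcal{Q},\mathcal{R},\mathcal{S},\mathcal{T}$, coatoms $\mathcal{Q}\vee\mathcal{R},\ \mathcal{Q}\vee\mathcal{S},\ \mathcal{R}\vee\mathcal{T},\ \mathcal{S}\vee\mathcal{T}$, and greatest element $\mathcal{I}$ (the vertex--edge lattice of a four-cycle with bottom and top adjoined). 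Checking every base point $\mathcal{M}$, each element of $\mathcal{C}^{+}(\mathcal{M})\setminus\{\mathcal{M}\}$ and of $\mathcal{C}^{-}(\mathcal{M})\setminus\{\mathcal{M}\}$ has relative neighborhood equal to $\{\mathcal{M}\}$ or of cardinality at least two, so this lattice is U-curve compatible, and it admits a strictly increasing rank along $\subset$ (height), which is all your sketch uses of property (ii); yet the distinct lower covers $\mathcal{Q}\vee\mathcal{R}$ and $\mathcal{S}\vee\mathcal{T}$ of $\mathcal{I}$ have meet $\mathcal{O}$, which is at Hasse distance two from each of them. (Your pentagon observation is correct but only shows one obstruction is excluded; it does not make the lemma true.) Consequently your inductive step, which needs the chosen pair $(\mathcal{A},\mathcal{A}')$ to form a genuine diamond before \eqref{condS} may be invoked, is not justified, and it is not even clear that a diamond-forming partner $\mathcal{A}'$ can always be selected.

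For comparison, the paper's proof runs the same induction (indexed by the maximal length of a continuous chain from $\mathcal{M}$ rather than by $d_{VC}$) but never uses any covering property of the meet: for $\mathcal{M}_{j}$ with lower covers $\mathcal{M}_{j}^{(1)},\mathcal{M}_{j}^{(2)}$ in $\mathcal{C}^{+}(\mathcal{M})$ whose join is $\mathcal{M}_{j}$, it applies \eqref{condS} to that pair and kills the meet term through the induction hypothesis, $\hat{L}(\mathcal{M}_{j}^{(2)})\geq\hat{L}(\mathcal{M}_{j}^{(1)}\wedge\mathcal{M}_{j}^{(2)})$, which only requires that the meet contain $\mathcal{M}$ and be contained in $\mathcal{M}_{j}^{(2)}$. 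In effect the paper reads the hypothesis as available for any two lower covers of a common join, irrespective of where the meet sits; the remark following the proof, where the distance-one requirement on the meet is absent, confirms this reading. So you have correctly noticed a real imprecision in the literal statement of the hypothesis of Theorem \ref{general_theorem}, but your proposed repair does not work: either \eqref{condS} must be assumed for all pairs of covers with a common join (as the paper's argument effectively does), or a structural hypothesis strictly stronger than U-curve compatibility (e.g., the lower-semimodular-type diamond property itself) must be added. As written, your proof has a gap precisely at the step where \eqref{condS} is invoked.
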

	
	An example of pair $\mathcal{M}_{1}, \mathcal{M}_{2}$ which should satisfy \eqref{condS} is presented in Figure \ref{Blattice}. Assuming, without loss of generality, that $\hat{L}(\mathcal{M}_{1}) \geq \hat{L}(\mathcal{M}_{2})$ and rewriting \eqref{condS} as
	\begin{equation*}
	\hat{L}(\mathcal{M}_{1} \vee \mathcal{M}_{2}) - \hat{L}(\mathcal{M}_{1}) \geq \hat{L}(\mathcal{M}_{2}) - \hat{L}(\mathcal{M}_{1} \wedge \mathcal{M}_{2})\; ,
	\end{equation*}
	we see that the increase on $\hat{L}$ when we go from $\mathcal{M}_{1}$ to $\mathcal{M}_{1} \vee \mathcal{M}_{2}$ is greater than when we go from $\mathcal{M}_{1} \wedge \mathcal{M}_{2}$ to $\mathcal{M}_{2}$. This feature is analogous to that observed on convex functions, that is, the increment of the function increases when its inputs increase, in which \textit{increase in inputs} in this context is according to relation $\subset$ and the value of $\hat{L}$ when subsets are not related. Hence, we call property \eqref{condS} \textit{Lattice Convexity}. As is the case with convex functions, (strong) local minimums are global minimums, but, since $\mathbb{L}(\mathcal{H})$ is a poset, this is true only for the models which are related to the local minimum.
	
	Our definition of weak U-curve property is not as restrictive as lattice convexity, since the latter implies that the chains are monotone below and above the strong minimum, while in our definition there may be, for example, another weak local minimum in a continuous chain that passes through a strong minimum, although this weak minimum has a loss greater than the strong one.  
	
	In order to prove Proposition \ref{partition_has_ucurve} we have not relied on Theorem \ref{general_theorem} since, even though the Partition Lattice Learning Space is U-curve compatible, the condition \eqref{condS} is not satisfied for all pair of sets $\mathcal{M}_{1}, \mathcal{M}_{2}$ with $d(\mathcal{M}_{i},\mathcal{M}_{1} \wedge \mathcal{M}_{2}) = d(\mathcal{M}_{i},\mathcal{M}_{1} \vee \mathcal{M}_{2}) = 1, i = 1,2$. See Appendix \ref{counter_part} for examples of $\mathcal{M}_{1}, \mathcal{M}_{2}$ which satisfy and which may not satisfy the condition on the Partition Lattice Learning Space.
	
	\section{A generic U-curve algorithm}
	\label{U-curve}
	
	The U-curve algorithm was first proposed by \cite{u-curve1} in the context of feature selection and showed by \cite{u-curve2} to be NP-hard (see also \cite{ucurveParallel}). However, the algorithm avoids an exhaustive search of $\mathbb{L}(\mathcal{H})$ and permits solving problems which could not be solved by exhaustive search, due to limitations on computer power. Indeed, it has been applied to a variety of problems \cite{u-curve3,edu,featsel,reis2018,ucurveParallel}. In this section, we discuss a generic U-curve algorithm to solve \eqref{ucurve_return} when the weak U-curve property is satisfied.
	
	To apply the U-curve algorithm in this context, we assume that an optimizer which, given $\mathcal{M} \in \mathbb{L}(\mathcal{H})$, computes $\hat{L}(\mathcal{M})$, is implemented. Moreover, for each $\mathcal{M} \in \mathbb{L}(\mathcal{H})$ we define
	\begin{linenomath}
		\begin{align*}
		N(\mathcal{M}) = \Big\{\mathcal{M}_{i} \in \mathbb{L}(\mathcal{H}):\mathcal{M} \subset \mathcal{M}_{i}, d(\mathcal{M},\mathcal{M}_{i}) = 1\Big\}
		\end{align*}
	\end{linenomath}
	as the models in $\mathbb{L}(\mathcal{H})$ which immediately contains $\mathcal{M}$ and
	\begin{linenomath}
		\begin{align*}
		n(\mathcal{M}) = \Big\{\mathcal{M}_{i} \in \mathbb{L}(\mathcal{H}): \mathcal{M}_{i} \subset \mathcal{M}, d(\mathcal{M},\mathcal{M}_{i}) = 1\Big\}
		\end{align*}
	\end{linenomath}
	as the models in $\mathbb{L}(\mathcal{H})$ immediately contained in $\mathcal{M}$. We call $N(\mathcal{M})$ and $n(\mathcal{M})$ the upper and lower immediate neighborhood of $\mathcal{M}$, respectively.  
	
	Next, we need an auxiliary algorithm to compute if a given model $\mathcal{M}$ is a strong local minimum of $\mathbb{L}(\mathcal{H})$ by estimating the out-of-sample error of all models in the immediate neighborhoods of $\mathcal{M}$ and comparing them to the estimated error of  $\mathcal{M}$. The \textit{MinimumExhausted} algorithm is presented in Algorithm \ref{minexs} and returns TRUE if $\mathcal{M}$ is a strong local minimum, and FALSE otherwise. 
	
	\begin{algorithm}[ht]
		\caption{\textit{MinimumExhausted} auxiliary algorithm.}
		\label{minexs}
		\begin{spacing}{1.3}
			\hspace*{\algorithmicindent} Input: $\mathcal{M}, \hat{L}(\mathcal{M})$
			\begin{algorithmic}[1]
				\FOR{$\mathcal{M}^{\prime} \in N(\mathcal{M})  \cup n(\mathcal{M})$}
				\IF{$\hat{L}(\mathcal{M}^{\prime}) < \hat{L}(\mathcal{M})$}
				\RETURN FALSE
				\ENDIF
				\ENDFOR
				\RETURN TRUE
			\end{algorithmic}
		\end{spacing}
	\end{algorithm}
	
	Taking advantage of the weak U-curve property, the U-curve algorithm presented in Algorithm \ref{A1} solves (\ref{ucurve_return}) and is as follows. We first select a model $\mathcal{M} \in \mathbb{L}(\mathcal{H})$ and calculate $\hat{L}(\mathcal{M})$. Then, we apply the \textit{MinimumExhausted} algorithm to $\mathcal{M}$ to establish if it is a strong local minimum. If it is a strong local minimum, we store $\mathcal{M}$ and exclude from $\mathbb{L}(\mathcal{H})$ all models which contain or are contained in $\mathcal{M}$, as we know they have a greater estimated out-of-sample error by the weak U-curve property, obtaining a $\mathbb{L} \subset \mathbb{L}(\mathcal{H})$. If $\mathbb{L} \neq \emptyset$, we start the process again by selecting a model $\mathcal{M} \in \mathbb{L}$.
	
	If $\mathcal{M}$ is not a strong local minimum and there exists a $\mathcal{M}^{\prime} \in (N(\mathcal{M}) \cup n(\mathcal{M})) \cap \mathbb{L}$ with $\hat{L}(\mathcal{M}^{\prime}) < \hat{L}(\mathcal{M})$ we exclude $\mathcal{M}$ from $\mathbb{L}$, make $\mathcal{M} = \mathcal{M}^{\prime}$ and start the algorithm again. Otherwise, if there is no such $\mathcal{M}^{\prime}$, we exclude $\mathcal{M}$ from $\mathbb{L}$ and start the algorithm from another $\mathcal{M} \in \mathbb{L}$. We proceed this way until $\mathbb{L} = \emptyset$. Finally, we select the global minimums among the stored strong local minimums. 
	
	The pseudocode presented in Algorithms \ref{minexs} and \ref{A1} do not treat the technical peculiarities of an implementation of the U-curve algorithm, and present only its main ideas, as it is out of the scope of this paper to further study U-curve algorithms, since it has been done elsewhere (see \cite{u-curve3,edu,featsel,u-curve1,u-curve2,ucurveParallel}).
	
	\begin{algorithm}[ht]
		\centering
		\caption{Generic U-curve algorithm when the weak U-curve property is satisfied.}
		\label{A1}
		\begin{algorithmic}[1]
			\ENSURE $\mathcal{M} \in \mathbb{L}(\mathcal{H}), \text{Cost} \gets \hat{L}(\mathcal{M}), \mathbb{L} \gets \mathbb{L}(\mathcal{H}), \text{LocalMinimuns} \gets \emptyset$			
			\WHILE{$\mathbb{L} \neq \emptyset$}
			\IF{$MinimumExhausted(\mathcal{M},Cost)$}
			\STATE $\text{LocalMinimuns} \gets \text{LocalMinimuns} \cup \{\mathcal{M}\}$ 
			\STATE $\mathbb{L} \gets \mathbb{L} \setminus \{\mathcal{M}_{i} \in \mathbb{L}: \mathcal{M}_{i} \subset \mathcal{M} \text{ or } \mathcal{M} \subset \mathcal{M}_{i}\}$
			\IF{$\mathbb{L} \neq \emptyset$}
			\STATE $\mathcal{M} \gets \mathcal{M}_{i}, \mathcal{M}_{i} \in \mathbb{L}$
			\STATE $\text{Cost} \gets \hat{L}(\mathcal{M})$
			\ENDIF
			\ELSE
			\IF{$\exists \mathcal{M}^{\prime} \in (N(\mathcal{M}) \cup n(\mathcal{M})) \cap \mathbb{L} \text{ s.t } \hat{L}(\mathcal{M}^{\prime}) < \hat{L}(\mathcal{M})$}
			\STATE $\mathbb{L} \gets \mathbb{L} \setminus \{\mathcal{M}\}$
			\STATE $\mathcal{M} \gets \mathcal{M}^{\prime}$
			\STATE $\text{Cost} \gets \hat{L}(\mathcal{M}^{\prime})$
			\ELSE
			\STATE $\mathbb{L} \gets \mathbb{L} \setminus \{\mathcal{M}\}$
			\IF{$\mathbb{L} \neq \emptyset$}
			\STATE $\mathcal{M} \gets \mathcal{M}_{i}, \mathcal{M}_{i} \in \mathbb{L}$
			\STATE $\text{Cost} \gets \hat{L}(\mathcal{M})$
			\ENDIF
			\ENDIF 
			\ENDIF
			\ENDWHILE
			\RETURN{$\text{LocalMinimuns}$}
		\end{algorithmic}
	\end{algorithm}
	
	\begin{remark}
		\normalfont
		Due to ties of error $\hat{L}$, the global minimum with the least VC dimension may not be in the set returned by Algorithm \ref{A1}, since another global minimum which contains it may have been returned instead. To force the return of $\hat{\mathcal{M}}$ one has to check if the neighbors of a strong local minimum, with lesser VC dimension and same error, are also strong local minimums. We did not add this feature to Algorithm \ref{A1} to better present its main idea, but it must be regarded when implementing the algorithm for a specific case.
	\end{remark}
	
	\subsection{Improving the U-curve algorithm}
	
	There are a handful of features which could be implemented to the generic Algorithm \ref{A1} to improve its performance. The generic U-curve algorithm will always return a result since every time we exhaust the neighborhood of a model we delete it from $\mathbb{L}$ so, in the worst case, the algorithm would perform an exhaustive search of\footnote{Actually it would exhaust the neighborhood of all models, what is redundant and more complex than an exhaustive search of $\mathbb{L}(\mathcal{H})$.} $\mathbb{L}(\mathcal{H})$. Nevertheless, we can stop the algorithm early and apply an exhaustive search on the models remaining in $\mathbb{L}$, when the cardinality of $\mathbb{L}$ is within the reach of an exhaustive search. This may improve the algorithm since, as the cardinality of $\mathbb{L}$ decreases, it will contain less strong local minimums, hence the prunes of $\mathbb{L}$ due to strong local minimums will get rarer and the resources employed to exhaust the neighborhood of each model in $\mathbb{L}$ could be better employed to search it exhaustively. 
	
	For example, if a model in $\mathbb{L}$ has more immediate neighbors than there are points in $\mathbb{L}$, an exhaustive search of $\mathbb{L}$ could be more efficient than continuing with the U-curve algorithm. Therefore, one manner of improving the efficiency of the U-curve algorithm is to stop it when $|\mathbb{L}| < c$, for a given constant $c$, and then exhaustively search $\mathbb{L}$ for its global minimums, which could then be compared with the strong local minimums found by the U-curve algorithm to find the global minimums of $\mathbb{L}(\mathcal{H})$ and the solution of \eqref{ucurve_return}. 
	
	Another manner of improving the efficiency of the algorithm is to delete from $\mathbb{L}$ all neighbors with greater loss, visited during the \textit{MinimumExhausted} calls. Since these models do not have lesser loss than the exhausted model, they cannot be strong local minimums, hence may be disregarded. In some instances, it could be a good option to not stop the \textit{MinimumExhausted} when a model with less error is found, but rather calculate the error for all neighbors to exclude from $\mathbb{L}$ all of them with greater error and restart the algorithm in the neighbor with the least error when the exhausted model is not a strong local minimum. This algorithm can be performed in parallel by taking advantage of a specific feature of the lattice structure of $\mathbb{L}(\mathcal{H})$, improving the speed of the algorithm (see the results in \cite{ucurveParallel} for feature selection).
	
	Moreover, besides being a sufficient condition to the weak U-curve property, (\ref{condS}) is also a tool for increasing the efficiency of a U-curve algorithm. Suppose that $\mathcal{M}_{1}$ and $\mathcal{M}_{2}$ satisfy (\ref{condS}) and that $\min\big(\hat{L}(\mathcal{M}_{1}),\hat{L}(\mathcal{M}_{2})\big) > \hat{L}(\mathcal{M}_{1} \wedge \mathcal{M}_{2})$. Then
	\begin{linenomath}
		\begin{align*}
		\hat{L}(\mathcal{M}_{1} \vee \mathcal{M}_{2}) \geq \hat{L}(\mathcal{M}_{1}) + \hat{L}(\mathcal{M}_{2}) - \hat{L}(\mathcal{M}_{1} \wedge \mathcal{M}_{2}) > \hat{L}(\mathcal{M}_{1} \wedge \mathcal{M}_{2})
		\end{align*}
	\end{linenomath}
	so after visiting the models $\mathcal{M}_{1} \wedge \mathcal{M}_{2},\mathcal{M}_{1}$ and $\mathcal{M}_{2}$, and noting the increase in the estimated error, one does not need to visit $\mathcal{M}_{1} \vee \mathcal{M}_{2}$ as the estimated error will increase. This fact may be employed to establish search strategies for U-curve algorithms.
	
	The efficiency of the algorithm may be improved if one employs it to find suboptimal solutions.  For example, one may apply a stochastic U-curve algorithm, in which we sample a model in $\mathbb{L}(\mathcal{H})$ every time we restart the algorithm and one does not perform prunes on $\mathbb{L}(\mathcal{H})$, but rather stop the algorithm when a ``sufficient'' number of strong/weak local minimums, or a ``good enough'' local minimum, is found. This algorithm is implemented for the Partition Lattice Learning Space in \cite{edu} where more details can be found. Furthermore, in such a stochastic algorithm, one could sample a certain number of neighbors in Algorithm \ref{minexs} and consider as a strong local minimum any model which has a cost lesser than the sampled ones, what would speed up the algorithm at the cost of considering as strong minimums models which are not.
	
	Apart from features such as the ones discussed above, there are two main theoretical considerations when developing a U-curve algorithm which can impact its efficiency. The first one is how to prune the Learning Space when one finds a local minimum, i.e., find all models which contain or are contained in a local minimum and store this information in some way to not visit these models. This is an important question since $\mathbb{L}(\mathcal{H})$ may be too great to be stored. For example, the Partition Lattice Learning Space with $100$ points in the domain $\mathcal{X}$ has a cardinality of the order $10^{115}$, so it is unfeasible to store it. Hence, one needs a computationally cheap manner of determining if a $\mathcal{M} \in \mathbb{L}(\mathcal{H})$ is in $\mathbb{L}$ or if $\mathbb{L} = \emptyset$ in a given step of the algorithm, apart from storing $\mathbb{L}$ and testing the inclusion $\mathcal{M} \in \mathbb{L}$ or if $\mathbb{L} = \emptyset$. When the Learning Space is a Boolean Lattice, the inclusion and emptiness test may be explicitly computed via complement operations and stored efficiently (see \cite{featsel,u-curve1,u-curve2}). However, it is not always immediate how one can efficiently perform these tasks.
	
	The second consideration is which neighbor of a model to visit first in the \textit{MinimumExhausted} algorithm: can one search $\mathbb{L}(\mathcal{H})$ more efficiently by going down certain paths? We leave both these considerations as open problems, since technical details of the U-curve algorithm are out of the scope of this paper.
	
	\section{Discussion}
	
	In this section, we discuss the consequences of the proposed framework and some perspectives for future researches.
	
	\subsection{Main results and contributions to the field}
	
	For a long time, the communities of Pattern Recognition and Machine Learning have been doing optimization on, mainly Boolean, lattices, to design classifiers or to learn hypotheses. However, we have no knowledge about results establishing lattice and loss function properties which guarantee that the applied procedures really return optimal solutions. In this context, the main contribution of this paper is, to the best of our knowledge, the first rigorous definition of properties which ensure that a non-exhaustive search of a lattice with the purpose of Model Selection can return an optimal and statistically consistent result. We established such properties and showed that they do exist by proving the weak U-curve property is satisfied by the Partition Lattice Learning Space. 
	
	In general lines, the method presented in this paper proposes a data-driven systematic, consistent and non-exhaustive approach to Model Selection. Indeed, given a loss function, a sample, and a Learning Space generator, we proposed a systematic procedure, based on data, to select a model,  i.e., a subset in the Learning Space, and then learn a hypothesis of it seeking to approximate a target hypothesis of $\mathcal{H}$. Moreover, we showed that this method can be more efficient than an exhaustive search on the Learning Space for obtaining optimal solutions, when a U-curve property is satisfied, or suboptimal ones, when such a property is not satisfied. 
	
	Moreover, from a theoretical point of view, we established the statistical consistency of our approach, developing bounds for estimation errors and showing the convergence of the estimated random hypotheses space $\mathcal{\hat{M}}$ to the target hypotheses space $\mathcal{M}^{\star}$ with probability one. We introduced the maximum discrimination error $\epsilon^{\star}$ and evidenced the importance of properly embedding all prior information into the Learning Space under the paradigm, supported by the results of this paper, of better learning with a fixed sample size by properly modeling the Learning Space seeking to (a) have $\mathcal{M}^{\star}$ with small VC dimension and (b) have a great MDE $\epsilon^{\star}$.
	
	The practical application of the abstract theory requires, first, the concrete choice of a parametric representation of the hypotheses. Then, within a suitable algebraic structure of such a representation, one needs to define a Learning Space generator, that is a rule of how to obtain, via constraints on the parameters, a family of subsets of $\mathcal{H}$, partially ordered by inclusion, with the same algebraic structure considered for the parameters. A canonical practical example of this abstract system is the Boolean lattice for feature selection, in which the hypotheses are represented by the features they depend on; the Boolean lattice algebra of the subsets of features is considered; and the Learning Space generator associates each subset of features with the hypotheses which depend solely on features in the set.
	
	Although abstract and general, and even if, at first sight, it may not be clear how to construct Learning Spaces, they emerge naturally on applications on which a \textit{meaningful} parametric representation is employed. By meaningful, we mean that each parameter is identifiable with some concrete concept, that is, the parameters are interpretable. This is the case on Examples \ref{feature_lattice} to \ref{parametric_lattice}, where the parameters represent variables or points of the classifier domain. Besides facilitating the development of Learning Spaces, the interpretability of the parameters has at least another two properties. 
	
	First, since one knows what each parameter means, it is possible to translate prior information into constraints on these parameters in order to (a) define a Learning Space generator, (b) replace $\mathcal{H}$ by a subset of it and (c) drop sets from the Learning Space (see discussion on Example \ref{partition_lattice} for an example). Second, one may re-parametrize the hypotheses, or consider another constraint on the same parameters, to add other kinds of prior information, thus, generating another Learning Space for the same hypotheses space $\mathcal{H}$ (see Example \ref{parametric_lattice}). Hence, on the one hand, the Learning Space is not unique and there is no general abstract formulae to build just one that would work for all applications, what would be a “canonical Learning Space”. However, on the other hand, this flexibility when defining Learning Spaces makes it a customizable method, which can be instantiated for a large family of learning problems. 
	
	\subsection{Implicit complexity regularization}
	
	From the viewpoint of Machine Learning Theory, the approach presented here can also be seen as a regularization method. More specifically, it could be considered an \textit{implicit complexity regularizer}, which implies regularization via constraints on the complexity (VC dimension) of the candidate models. However, we believe that the U-curve algorithm applied to a Learning Space differs from established regularization methods because the optimization occurs in plain sight and is the consequence of a formal model designed to represent an application problem. Some ordinary regularization procedures are heuristics in which the loss function is penalized by the complexity of the model, and the estimated hypothesis is obtained by a constrained optimization problem which gives few or no reasons to why such model has been selected in detriment of others, although we note that some regularization procedures have rigorous mathematical justifications, with results such as those in \cite{massart2007}, and may work very well in practice. 
	
	In the extended learning model proposed here, regularization is an abstract process that results from the optimization in the Learning Space under a finite sample. In this learning approach, the search on the Learning Space itself implies the regularization effect. This oversight of the regularization process comes at the cost of computing power, since the search on a Learning Space is a combinatorial algorithm, even under a U-curve property, which will tend to have higher computational complexity than other regularization procedures. Nevertheless, with the ever-growing computing capabilities, in the near future such an increase in computational complexity could be worth it to have sharper models for real problems. 
	
	\subsection{Learning Spaces and penalized loss functions}
	\label{disPen}
	
	Penalized Model Selection in Statistical Learning, as described in \cite[Chapter~8]{massart2007}, is a special case of the proposed framework given by considering the model error estimator
	\begin{align*}
	\hat{L}(\mathcal{M}) \coloneqq L_{\mathcal{D}_{N}}(\hat{h}_{\mathcal{M}}) + \text{ pen}(\mathcal{M}) & & \mathcal{M} \in \mathbb{L}(\mathcal{H}),
	\end{align*}
	in which $\text{pen}: \mathbb{L}(\mathcal{H}) \to \mathbb{R}_{+}$ is a penalty function and $\hat{h}_{\mathcal{M}}$ minimizes the empirical error $L_{\mathcal{D}_{N}}$ in $\mathcal{M}$. In this case, the complexity regularization is explicit and due to the penalty function, as opposed to an implicit regularization given by a resampling method. The study of penalized Model Selection under the Learning Space framework would consist of proving results such as those in Sections \ref{SecConsistency} and \ref{SecUproperty}, as follows.
	
	First, one would have to establish the statistical consistency of the method, by showing the convergence of $\hat{\mathcal{M}}$ to $\mathcal{M}^{\star}$ with probability one, and the convergence in probability to zero of the estimation errors. This could be performed by applying very well-studied oracle and concentration inequalities for Model Selection with penalized loss functions (see \cite{massart2007,koltchinskii2001,koltchinskii2011,agarwal2011,arlot2011,bartlett2008} and the references therein).
	
	Second, one would have to establish conditions on the loss $\ell$, the Learning Space $\mathbb{L}(\mathcal{H})$ and the penalty function under which a U-curve property is satisfied, through the existence of lattice convexity, for example. Observe that in this case, the lattice convexity condition \eqref{condS} reduces to
	\begin{align*}
	\text{pen}(\mathcal{M}_{1} \vee \mathcal{M}_{1}) + \text{pen}(\mathcal{M}_{1} \wedge \mathcal{M}_{1}) &- \text{pen}(\mathcal{M}_{1}) - \text{pen}(\mathcal{M}_{2}) \geq\\
	&\hat{L}(\hat{h}_{\mathcal{M}_{1}}) + \hat{L}(\hat{h}_{\mathcal{M}_{2}}) - \hat{L}(\hat{h}_{\mathcal{M}_{1} \wedge \mathcal{M}_{2}}) - \hat{L}(\hat{h}_{\mathcal{M}_{1} \vee \mathcal{M}_{2}}), 
	\end{align*} 
	an inequality which associates the variation of the minimum empirical error within the considered models with the variation of the penalty function on these models. The existence of a U-curve property in this scheme could be a tool to enhance the computational efficiency of Model Selection with penalization, hence be a great contribution to this field.
	
	We leave this important case of penalized loss functions for future researches, since this paper aims to present the Learning Space framework not from the viewpoint of penalization, but rather as a general and consistent scheme for Model Selection based on resampling techniques such as cross-validation, so one does not have to \textit{choose} a penalty function.
	
	\subsection{Decreasing the approximation error}
	
	The proposed framework may also be applied to try to reduce the approximation error, which is as follows. Let $\mathcal{H}^{\star}$ be the set of all functions with domain $\mathcal{X}$, the support of $X$, and image $\{0,1\}$, which is possibly a hypotheses space with infinite VC dimension. Denote
	\begin{align*}
	h_{Bayes} = \arginfA_{h \in \mathcal{H}^{\star}} L(h)
	\end{align*}
	so that $L(h_{Bayes})$ is the Bayes error, the least error we commit when classifying instances with values $X$. Note that $h_{Bayes}$ may or may not be in $\mathcal{H}$, and when it is not, we commit the error
	\begin{equation*}
	L(h^{\star}) - L(h_{Bayes})
	\end{equation*} 
	which is called approximation error (see \cite[Chapter~12]{devroye1996}). This error is, in general, not controllable and, to decrease it, one must increase $\mathcal{H}$, which in turn increases the risk of \textit{overfitting} if the sample size is not great enough. This scenario is depicted in Figure \ref{fig_appro} (a).
	
	However, with the method presented in this paper, we may increase $\mathcal{H}$ mitigating the risk of \textit{overfitting}, so we expect to be able to reduce the approximation error. In a perfect scenario, one would expect the scheme presented in Figure \ref{fig_appro} (b): we choose a $\mathcal{H}$ highly complex, so it contains $h_{Bayes}$, but we actually learn on a relatively simple model $\hat{\mathcal{M}}$ which also contains $h_{Bayes}$. Even if $h_{Bayes}$ is not in our complex $\mathcal{H}$, which we actually do not know, we may expect the approximation error to be small, as, theoretically, we chose a $\mathcal{H}$ more complex than we would if we were unable to control \textit{overfitting} nor search $\mathbb{L}(\mathcal{H})$.
	
	\begin{figure}[H]
		\centering
		\includegraphics[scale = 0.35]{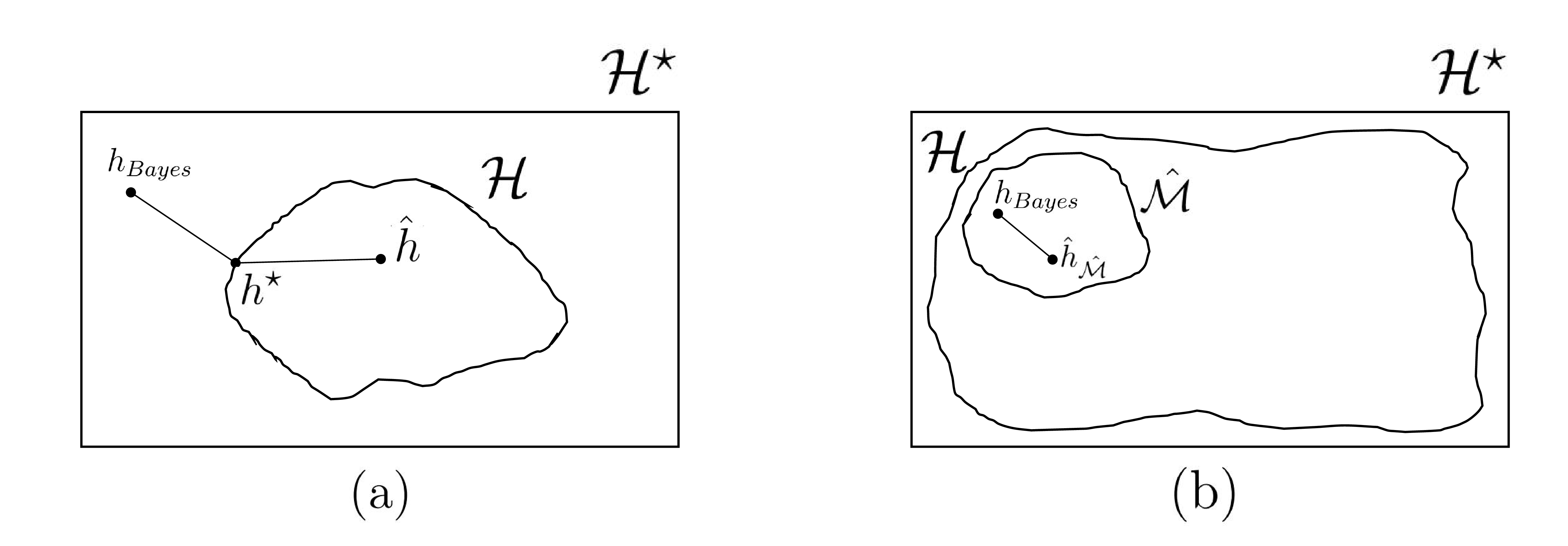}
		\caption{\footnotesize The approximation error of hypotheses spaces.}
		\label{fig_appro}
	\end{figure}
	
	\subsection{Perspectives for future researches}
	
	There are some interesting perspectives of applying the method to Neural Networks. As discussed in Example \ref{example_NN}, a given architecture of a Neural Network generates a hypotheses space $\mathcal{H}$, formed by the hypotheses obtained wandering all over its parameters' domain. Of course, such a $\mathcal{H}$ has subsets, and families of them satisfy the two axioms of Learning Spaces. Now, if one could develop a Learning Space generator which associates constraints on the architecture to subsets in a Learning Space, the abstract method proposed here could be applied to select architectures, since the constraints on the parameters would generate sub-architectures, so the Learning Space would represent a family of architectures, and the learning on it would be the learning of architectures. This instantiation has potential to become a relevant contribution to the field of Neural Architecture Search \cite{elsken2019} since it would constitute a systematic and general approach to architecture learning, which could have major implications to the future of Neural Networks research. However, since each parameter of a Neural Network is not exactly interpretable by itself, due to over-parametrization \cite{neyshabur2018,denil2013}, and the VC dimension of $\mathcal{H}$ is not exactly known in general for Neural Networks \cite{sontag1998}, the instantiation of the method in this case is not immediate and requires a further understanding of the concepts proposed here for Neural Networks. We leave this important study as a relevant topic for future research.
	
	The major perspectives for future researches concern empirical studies of the proposed approach. Although theoretically sound, the overall \textit{quality} of learning via Learning Spaces depends on the (low) complexity of $\mathcal{M}^{\star}$, on the (great) size of $\epsilon^{\star}$ and on the (low) computational complexity of $\hat{\mathcal{M}}$, which are all concepts dependent on the data distribution $P$. Hence, to attest the feasibility of this framework, it is necessary to perform empirical studies to assess how it behaves on specific problems of interest. Even though outside the scope of this paper, that aimed at presenting the theoretical foundations of learning via Learning Spaces, empirical studies are important to better outline what applications can benefit the most from this framework.
	
	Besides empirical studies, there are further perspectives for future researches. On the one hand, from a theoretical standpoint, one could try to find distribution-free bounds for types III and IV estimation errors in some cases of interest, which depend solely on $\mathbb{L}(\mathcal{H})$; study other frameworks for learning on $\hat{\mathcal{M}}$ besides those in Figure \ref{learn_hyp}; and develop sufficient conditions for Learning Spaces to satisfy a U-curve property. On the other hand, regarding U-curve algorithms, there is a lack of a multipurpose general implementation of it, as it is implemented only for specific cases (see \cite{featsel}). Such implementation relies on developing efficient ways of storing, pruning and searching Learning Spaces, which poses many open problems for non-Boolean lattices.
	
	Moreover, there is a lot of ground to break in the direction of developing families of Learning Spaces which solve a class of problems, showing the existence of the U-curve property for other Learning Spaces, and developing optimal and suboptimal U-curve algorithms. In no way, we exhausted this subject, but only scratched its surface, since the extension to the learning model proposed here may be a tool to understand and enhance the always increasing niche of high performance and computing demanding learning applications.
	
	The paradigm presented in this paper also lead to an important practical property of Machine Learning: the lack of experimental data may be mitigated by a high computational capacity, as one can apply computer power to look for the target model compatible with a given small sample under the paradigm in Figure \ref{paradigms}. In a context of continuous increasing and popularization of high computational power, this property may be the key to understanding why Machine Learning has become so important in all branches of Science, even in the ones where data is expensive and hard to get. 
	
	
	\section*{Acknowledgments}
	
	We thank Cl\'audia Peixoto for her careful reading of, and insightful remarks about, earlier versions of the manuscript. Diego Marcondes received financial support from CNPq during this research. Junior Barrera is supported by FAPESP: proc. 14/50937-1 and 15/24485-9.
	
	
	\appendix
	\section{Proof of results}
	
	\begin{proof}[Proof of Proposition \ref{proposition_principal}]
		If
		\begin{equation*}
		\sup_{i \in \mathcal{J}} \left|L(\mathcal{M}_{i}) - \hat{L}(\mathcal{M}_{i})\right| < \epsilon^{\star}/2
		\end{equation*}
		then, for any $i \in \mathcal{J}$ such that $L(\mathcal{M}_{i}) > L(\mathcal{M}^{\star})$, we have
		\begin{linenomath}
			\begin{align}
			\label{ineq11}
			\hat{L}(\mathcal{M}_{i}) - \hat{L}(\mathcal{M}^{\star}) > L(\mathcal{M}_{i}) - L(\mathcal{M}^{\star}) - \epsilon^{\star} \geq 0,
			\end{align}
		\end{linenomath}	
		in which the last inequality follows from the definition of $\epsilon^{\star}$. From \eqref{ineq11} follows that the global minimum of $\nicefrac{\mathbb{L}(\mathcal{H})}{\hat{\sim}}$ with the least VC dimension, that is $\hat{\mathcal{M}}$, is such that $L(\hat{\mathcal{M}}) = L(\mathcal{M}^{\star})$. Indeed, from \eqref{ineq11} follows that $\hat{L}(\mathcal{M}) > \hat{L}(\mathcal{M}^{\star})$ for all $\mathcal{M} \in \mathbb{L}(\mathcal{H})$ such that $L(\mathcal{M}) > L(\mathcal{M}^{\star})$, hence, since $\hat{L}(\hat{\mathcal{M}}) \leq \hat{L}(\mathcal{M}^{\star})$, we must have $L(\hat{\mathcal{M}}) = L(\mathcal{M}^{\star})$. Therefore, we have the inclusion of events
		\begin{equation}
		\label{inclusion_star}
		\left\{\sup_{i \in \mathcal{J}} \left|L(\mathcal{M}_{i}) - \hat{L}(\mathcal{M}_{i})\right| < \epsilon^{\star}/2\right\} \subset \left\{L(\hat{\mathcal{M}}) = L(\mathcal{M}^{\star})\right\},
		\end{equation}
		which proves the result.
	\end{proof}
	
	\begin{proof}[Proof of Theorem \ref{theorem_principal_convergence}]
		We will apply Proposition \ref{proposition_principal}. Denoting $\hat{h}_{i}^{(j)} \coloneqq \hat{h}_{\mathcal{M}_{i}}^{(j)}$,
		\begin{align*}
		\mathbb{P}&\left(\max_{i \in \mathcal{J}} \left|L(\mathcal{M}_{i}) - \hat{L}(\mathcal{M}_{i})\right| \geq \epsilon^{\star}/2\right) \leq \mathbb{P}\left(\max_{i \in \mathcal{J}} \sum_{j=1}^{m} \frac{1}{m} \left|L(\mathcal{M}_{i}) - \hat{L}^{(j)}(\hat{h}^{(j)}_{i})\right| > \epsilon^{\star}/2\right)\\
		&\leq \mathbb{P}\left(\max_{j} \max_{i \in \mathcal{J}} \left|L(\mathcal{M}_{i}) - \hat{L}^{(j)}(\hat{h}^{(j)}_{i})\right| > \epsilon^{\star}/2\right)\\
		&\leq \mathbb{P}\left(\bigcup_{j=1}^{m} \left\{\max_{i \in \mathcal{J}} \left|L(\mathcal{M}_{i}) - \hat{L}^{(j)}(\hat{h}^{(j)}_{i})\right| > \epsilon^{\star}/2\right\}\right)\\
		&\leq \sum_{j=1}^{m} \mathbb{P}\left(\max_{i \in \mathcal{J}} \left|L(\mathcal{M}_{i}) - \hat{L}^{(j)}(\hat{h}^{(j)}_{i})\right| > \epsilon^{\star}/2\right)\\
		&= \sum_{j=1}^{m} \mathbb{P}\left(\max_{i \in \mathcal{J}} \left|L(\mathcal{M}_{i}) - L(\hat{h}^{(j)}_{i}) + L(\hat{h}^{(j)}_{i}) - \hat{L}^{(j)}(\hat{h}^{(j)}_{i})\right| > \epsilon^{\star}/2\right)\\
		&\leq \sum_{j=1}^{m} \mathbb{P}\left(\max_{i \in \mathcal{J}} L(\hat{h}^{(j)}_{i}) - L(\mathcal{M}_{i}) + \max_{i \in \mathcal{J}} \left|L(\hat{h}^{(j)}_{i}) - \hat{L}^{(j)}(\hat{h}^{(j)}_{i})\right| > \epsilon^{\star}/2\right)\\
		&\leq \sum_{j=1}^{m} \mathbb{P}\left(\max_{i \in \mathcal{J}} L(\hat{h}^{(j)}_{i}) - L(\mathcal{M}_{i}) > \epsilon^{\star}/4\right) + \mathbb{P}\left(\max_{i \in \mathcal{J}} \left|L(\hat{h}^{(j)}_{i}) - \hat{L}^{(j)}(\hat{h}^{(j)}_{i})\right| > \epsilon^{\star}/4\right)\\
		&\leq \sum_{j=1}^{m} \mathbb{P}\left(\max_{i \in \mathcal{J}} L(\hat{h}^{(j)}_{i}) - L(\mathcal{M}_{i}) > \epsilon^{\star}/4\right) + \mathbb{P}\left(\max_{i \in \mathcal{J}} \sup_{h \in \mathcal{M}_{i}} \left|\hat{L}^{(j)}(h) - L(h)\right| > \epsilon^{\star}/4\right)
		\end{align*}
		in which in the first inequality we applied the definition of $\hat{L}(\mathcal{M})$. For each $j$, the first probability is equal to
		\begin{align*}
		\mathbb{P}&\left(\max_{i \in \mathcal{J}} L(\hat{h}^{(j)}_{i}) - L_{\mathcal{D}_{N}^{(j)}}(\hat{h}^{(j)}_{i}) + L_{\mathcal{D}_{N}^{(j)}}(\hat{h}^{(j)}_{i}) - L(\mathcal{M}_{i}) > \epsilon^{\star}/4\right)\\
		&\leq \mathbb{P}\left(\max_{i \in \mathcal{J}} L(\hat{h}^{(j)}_{i}) - L_{\mathcal{D}_{N}^{(j)}}(\hat{h}^{(j)}_{i}) + L_{\mathcal{D}_{N}^{(j)}}(h^{\star}_{i}) - L(\mathcal{M}_{i}) > \epsilon^{\star}/4\right)\\
		&\leq \mathbb{P}\left(\left\{\max_{i \in \mathcal{J}} \left|L(\hat{h}^{(j)}_{i}) - L_{\mathcal{D}_{N}^{(j)}}(\hat{h}^{(j)}_{i})\right| > \epsilon^{\star}/8\right\} \bigcup \left\{\max_{i \in \mathcal{J}} \left| L_{\mathcal{D}_{N}^{(j)}}(h^{\star}_{i}) - L(\mathcal{M}_{i}) \right| > \epsilon^{\star}/8\right\}\right)\\
		&\leq \mathbb{P}\left(\max_{i \in \mathcal{J}} \sup_{h \in \mathcal{M}_{i}} \left|L_{\mathcal{D}_{N}^{(j)}}(h) - L(h)\right| > \epsilon^{\star}/8\right)
		\end{align*}
		in which the first inequality follows from the fact that $L_{\mathcal{D}_{N}^{(j)}}(\hat{h}^{(j)}_{i}) \leq L_{\mathcal{D}_{N}^{(j)}}(h^{\star}_{i})$ and the last follows since $L(\mathcal{M}_{i}) = L(h^{\star}_{i})$. We conclude that
		\begin{align*}
		\mathbb{P}&\left(\max_{i \in \mathcal{J}} \left|L(\mathcal{M}_{i}) - \hat{L}(\mathcal{M}_{i})\right| \geq \epsilon^{\star}/2\right) \\
		&\leq \sum_{j=1}^{m} \mathbb{P}\left(\max_{i \in \mathcal{J}} \sup_{h \in \mathcal{M}_{i}} \left|\hat{L}^{(j)}(h) - L(h)\right| > \epsilon^{\star}/4\right) + \mathbb{P}\left(\max_{i \in \mathcal{J}} \sup_{h \in \mathcal{M}_{i}} \left|L_{\mathcal{D}_{N}^{(j)}}(h) - L(h)\right| > \epsilon^{\star}/8\right).
		\end{align*}
		
		If $\mathcal{M}_{1} \subset \mathcal{M}_{2}$ then, for any $\epsilon > 0$ and $j = 1, \dots, m$, we have the following inclusion of events
		\begin{align*}
		&\left\{\sup_{h \in \mathcal{M}_{1}} \left|\hat{L}^{(j)}(h) - L(h)\right| > \epsilon\right\} \subset \left\{\sup_{h \in \mathcal{M}_{2}} \left|\hat{L}^{(j)}(h) - L(h)\right| > \epsilon\right\}\\
		&\left\{\sup_{h \in \mathcal{M}_{1}} \left|L_{\mathcal{D}_{N}^{(j)}}(h) - L(h)\right| > \epsilon\right\} \subset \left\{\sup_{h \in \mathcal{M}_{2}} \left|L_{\mathcal{D}_{N}^{(j)}}(h) - L(h)\right| > \epsilon\right\},
		\end{align*}
		hence it is true that
		\begin{align*}
		&\left\{\max_{i \in \mathcal{J}} \sup_{h \in \mathcal{M}_{i}} \left|\hat{L}^{(j)}(h) - L(h)\right| > \epsilon^{\star}/4\right\} \subset \left\{\max_{\mathcal{M} \in \text{ Max } \mathbb{L}(\mathcal{H})} \sup_{h \in \mathcal{M}} \left|\hat{L}^{(j)}(h) - L(h)\right| > \epsilon^{\star}/4\right\}\\
		&\left\{\max_{i \in \mathcal{J}} \sup_{h \in \mathcal{M}_{i}} \left|L_{\mathcal{D}_{N}^{(j)}}(h) - L(h)\right| > \epsilon^{\star}/8\right\} \subset \left\{\max_{\mathcal{M} \in \text{ Max } \mathbb{L}(\mathcal{H})} \sup_{h \in \mathcal{M}} \left|L_{\mathcal{D}_{N}^{(j)}}(h) - L(h)\right| > \epsilon^{\star}/8\right\},
		\end{align*}
		from what follows
		\begin{align}
		\label{conlusion_cond} \nonumber
		\mathbb{P}&\left(\max_{i \in \mathcal{J}} \left|L(\mathcal{M}_{i}) - \hat{L}(\mathcal{M}_{i})\right| \geq \epsilon^{\star}/2\right) \\  \nonumber
		&\leq \sum_{j=1}^{m} \sum_{\mathcal{M} \in \text{ Max } \mathbb{L}(\mathcal{H})} \mathbb{P}\Big(\sup_{h \in \mathcal{M}} \left|\hat{L}^{(j)}(h) - L(h)\right| > \epsilon^{\star}/4\Big) + \mathbb{P}\Big(\sup_{h \in \mathcal{M}} \left|L_{\mathcal{D}_{N}^{(j)}}(h) - L(h)\right| > \epsilon^{\star}/8\Big)\\ \nonumber
		&\leq m \sum_{\mathcal{M} \in \text{ Max } \mathbb{L}(\mathcal{H})} \left[\hat{B}_{N,\epsilon^{\star}/4}(d_{VC}(\mathcal{M})) + B_{N,\epsilon^{\star}/8}(d_{VC}(\mathcal{M}))\right]\\
		&\leq m \left|\text{Max } \mathbb{L}(\mathcal{H})\right| \left[\hat{B}_{N,\epsilon^{\star}/4}(d_{VC}(\mathbb{L}(\mathcal{H}))) + B_{N,\epsilon^{\star}/8}(d_{VC}(\mathbb{L}(\mathcal{H})))\right],
		\end{align}
		in which the last inequality follows from the fact that both $\hat{B}_{N,\epsilon^{\star}/4}$ and $B_{N,\epsilon^{\star}/8}$ are increasing functions, and $d_{VC}(\mathbb{L}(\mathcal{H})) = \max_{\mathcal{M} \in \mathbb{L}(\mathcal{H})} d_{VC}(\mathcal{M})$. The result follows from Proposition \ref{proposition_principal} since
		\begin{equation*}
		\{L(\hat{\mathcal{M}}) \neq L(\mathcal{M}^{\star})\} \subset \left\{\max_{i \in \mathcal{J}} \left|L(\mathcal{M}_{i}) - \hat{L}(\mathcal{M}_{i})\right| \geq \epsilon^{\star}/2\right\}.
		\end{equation*}	
		
		If the almost sure convergences \eqref{as_conv} holds, then
		\begin{equation}
		\label{as_proof}
		\hat{L}(\mathcal{M}) \xrightarrow{\text{a.s.}} L(\mathcal{M})
		\end{equation}
		for all $\mathcal{M} \in \mathbb{L}(\mathcal{H})$. Observe that
		\begin{align}
		\label{incl}
		\left\{\max_{\mathcal{M} \in \mathbb{L}(\mathcal{H})} \left|L(\mathcal{M}) - \hat{L}(\mathcal{M})\right| = 0\right\} \subset \left\{\hat{\mathcal{M}} = \mathcal{M}^{\star}\right\},
		\end{align}
		since, if the estimated error $\hat{L}$ is equal to the real error $L$, then the definitions of $\hat{\mathcal{M}}$ and $\mathcal{M}^{\star}$ coincide. As the probability of the event on left hand-side of \eqref{incl} converges to one if \eqref{as_proof} is true, we conclude that, if \eqref{as_conv} holds, then $\hat{\mathcal{M}}$ converges to $\mathcal{M}^{\star}$ with probability one.
	\end{proof}
	
	\begin{proof}[Proof of Theorem \ref{bound_constant}]
		We first note that
		\begin{linenomath}
			\begin{align}
			\label{Sum1} \nonumber
			\mathbb{P}\Big(\sup\limits_{h \in \mathcal{\hat{M}}} \big|L_{\tilde{\mathcal{D}}_{M}}(h) - &L(h) \big| > \epsilon \Big)  = \mathbb{E} \Bigg(\mathbb{P}\Big(\sup\limits_{h \in \mathcal{\hat{M}}} \big|L_{\tilde{\mathcal{D}}_{M}}(h) - L(h) \big| > \epsilon \Big|\mathcal{\hat{M}}\Big)\Bigg)\\ \nonumber
			& = \sum_{i \in \mathcal{J}} \mathbb{P}\Big(\sup\limits_{h \in \mathcal{\hat{M}}} \big|L_{\tilde{\mathcal{D}}_{M}}(h) - L(h) \big| > \epsilon \Big|\mathcal{\hat{M}} = \mathcal{M}_{i}\Big) \mathbb{P}(\mathcal{\hat{M}} = \mathcal{M}_{i})\\
			& = \sum_{i \in \mathcal{J}} \mathbb{P}\Big(\sup\limits_{h \in \mathcal{M}_{i}} \big|L_{\tilde{\mathcal{D}}_{M}}(h) - L(h) \big| > \epsilon \Big|\mathcal{\hat{M}} = \mathcal{M}_{i}\Big) \mathbb{P}(\mathcal{\hat{M}} = \mathcal{M}_{i}).
			\end{align}
		\end{linenomath}
		Fix $\mathcal{M} \in \mathbb{L}(\mathcal{H})$ with $\mathbb{P}(\mathcal{\hat{M}} = \mathcal{M}) > 0$. We claim that
		\begin{linenomath}
			\begin{align}
			\label{cond_independence}
			\mathbb{P}\Big(\sup\limits_{h \in \mathcal{M}} \big|L_{\tilde{\mathcal{D}}_{M}}(h) - L(h) \big| > \epsilon \Big|\mathcal{\hat{M}} = \mathcal{M}\Big) = \mathbb{P}\Big(\sup\limits_{h \in \mathcal{M}} \big|L_{\tilde{\mathcal{D}}_{M}}(h) - L(h) \big| > \epsilon\Big).
			\end{align}
		\end{linenomath}
		Indeed, since $\tilde{\mathcal{D}}_{M}$ is independent of $\mathcal{D}_{N}$, the event $\{\sup_{h \in \mathcal{M}} \big|L_{\tilde{\mathcal{D}}_{M}}(h) - L(h) \big| > \epsilon\}$ is independent of $\{\mathcal{\hat{M}} = \mathcal{M}\}$, as the former depends solely on $\tilde{\mathcal{D}}_{M}$, and the latter solely on $\mathcal{D}_{N}$. Hence, by applying the bound $\eqref{bound_theoremBC}$ for each positive probability in the sum \eqref{Sum1} we obtain that
		\begin{linenomath}
			\begin{align*}
			\mathbb{P}\Big(\sup\limits_{h \in \mathcal{\hat{M}}} \big|L_{\tilde{\mathcal{D}}_{M}}(h) - L(h) \big| > \epsilon \Big) & \leq \sum_{i \in \mathcal{J}} B_{M,\epsilon}^{I}(d_{VC}(\mathcal{M}_{i}))  \mathbb{P}(\mathcal{\hat{M}} = \mathcal{M}_{i})\\
			& = \mathbb{E} \Big(B_{M,\epsilon}^{I}(d_{VC}(\mathcal{\hat{M}}))\Big) \leq B_{N,\epsilon}^{I}(d_{VC}(\mathbb{L}(\mathcal{H}))),
			\end{align*}
		\end{linenomath}
		as desired, in which the last inequality follows from the fact that $B_{M,\epsilon}^{I}$ is an increasing function and $d_{VC}(\mathbb{L}(\mathcal{H})) = \max_{\mathcal{M} \in \mathbb{L}(\mathcal{H})} d_{VC}(\mathcal{M})$.
		
		The bound for type II estimation error may be obtained similarly, since
		\begin{linenomath}
			\begin{align*}
			\mathbb{P}\Big(L(\hat{h}_{\mathcal{\hat{M}}}(\tilde{\mathcal{D}}_{M})) - L(h^{\star}_{\mathcal{\hat{M}}}) > \epsilon \Big) &= \mathbb{E} \Bigg(\mathbb{P}\Big(L(\hat{h}_{\mathcal{\hat{M}}}(\tilde{\mathcal{D}}_{M})) - L(h^{\star}_{\mathcal{\hat{M}}}) > \epsilon \Big| \mathcal{\hat{M}} \Big)\Bigg)\\
			& = \sum_{i \in \mathcal{J}} \mathbb{P}\Big(L(\hat{h}_{\mathcal{\hat{M}}}(\tilde{\mathcal{D}}_{M})) - L(h^{\star}_{\mathcal{\hat{M}}}) > \epsilon \Big| \mathcal{\hat{M}} = \mathcal{M}_{i} \Big) \mathbb{P}(\mathcal{\hat{M}} = \mathcal{M}_{i})\\
			& = \sum_{i \in \mathcal{J}} \mathbb{P}\Big(L(\hat{h}_{\mathcal{M}_{i}}(\tilde{\mathcal{D}}_{M})) - L(h^{\star}_{\mathcal{M}_{i}}) > \epsilon \Big| \mathcal{\hat{M}} = \mathcal{M}_{i} \Big) \mathbb{P}(\mathcal{\hat{M}} = \mathcal{M}_{i})\\
			& = \sum_{i \in \mathcal{J}} \mathbb{P}\Big(L(\hat{h}_{\mathcal{M}_{i}}(\tilde{\mathcal{D}}_{M})) - L(h^{\star}_{\mathcal{M}_{i}}) > \epsilon \Big) \mathbb{P}(\mathcal{\hat{M}} = \mathcal{M}_{i})
			\end{align*}
		\end{linenomath}
		and $B^{II}_{M,\epsilon}(d_{VC}(\mathcal{M}_{i}))$ is a bound for the probabilities inside the sum by \eqref{bound_theoremBC}. The assertion that types I and II estimation errors are asymptotically zero when $d_{VC}(\mathbb{L}(\mathcal{H})) < \infty$ is immediate from the established bounds.
	\end{proof}
	
	\begin{proof}[Proof of Theorem \ref{theorem_tipeIII}]
		We first show that
		\begin{align}
		\label{lemma_inside}
		\mathbb{P}\Big(\sup\limits_{i \in \mathcal{J}} \left|\hat{L}(\mathcal{M}_{i}) - L(\mathcal{M}_{i})\right| > (\epsilon \vee \epsilon^{\star})/2 \Big) \geq \mathbb{P}\left(L(h_{\hat{\mathcal{M}}}^{\star}) - L(h^{\star}) > \epsilon\right).
		\end{align}
		If $\epsilon \leq \epsilon^{\star}$ then, by the inclusion of events \eqref{inclusion_star} in the proof of Proposition \ref{proposition_principal} we have that
		\begin{align}
		\label{incl1}
		\left\{\sup\limits_{i \in \mathcal{J}} \left|\hat{L}(\mathcal{M}_{i}) - L(\mathcal{M}_{i})\right| < (\epsilon \vee \epsilon^{\star})/2\right\} \subset \left\{L(\hat{\mathcal{M}}) = L(\mathcal{M}^{\star})\right\} \subset \left\{L(h_{\hat{\mathcal{M}}}^{\star}) - L(h^{\star}) < \epsilon\right\},
		\end{align}
		since $L(h^{\star}_{\hat{\mathcal{M}}}) = L(\hat{\mathcal{M}})$ and $L(h^{\star}_{\mathcal{M}^{\star}}) = L(\mathcal{M}^{\star})$, so \eqref{lemma_inside} follows in this case.
		
		Now, if $\epsilon > \epsilon^{\star}$ and $\sup_{i \in \mathcal{J}} \left|\hat{L}(\mathcal{M}_{i}) - L(\mathcal{M}_{i})\right| < \epsilon/2$, then
		\begin{align*}
		L(\hat{\mathcal{M}}) - L(\mathcal{M}^{\star}) &= [L(\hat{\mathcal{M}}) - \hat{L}(\mathcal{M}^{\star})] - [L(\mathcal{M}^{\star}) - \hat{L}(\mathcal{M}^{\star})]\\
		&\leq [L(\hat{\mathcal{M}}) - \hat{L}(\hat{\mathcal{M}})] - [L(\mathcal{M}^{\star}) - \hat{L}(\mathcal{M}^{\star})]\\
		&\leq \epsilon/2 + \epsilon/2 = \epsilon,
		\end{align*}
		in which the first inequality follows from the fact that the minimum of $\hat{L}$ is attained at $\hat{\mathcal{M}}$, and the last inequality follows from $\sup_{i \in \mathcal{J}} \left|\hat{L}(\mathcal{M}_{i}) - L(\mathcal{M}_{i})\right| < \epsilon/2$. Since $L(\hat{\mathcal{M}}) - L(\mathcal{M}^{\star}) = L(h_{\hat{\mathcal{M}}}^{\star}) - L(h^{\star})$ we also have the inclusion of events
		\begin{align}
		\label{incl2}
		\left\{\sup\limits_{i \in \mathcal{J}} \left|\hat{L}(\mathcal{M}_{i}) - L(\mathcal{M}_{i})\right| < (\epsilon \vee \epsilon^{\star})/2\right\} \subset \left\{L(h_{\hat{\mathcal{M}}}^{\star}) - L(h^{\star}) < \epsilon\right\},
		\end{align}
		when $\epsilon > \epsilon^{\star}$. From \eqref{incl1} and \eqref{incl2} follows \eqref{lemma_inside} as desired.
		
		Substituting $\epsilon^{\star}$ by $\epsilon \vee \epsilon^{\star}$ in \eqref{conlusion_cond} we obtain
		\begin{align}
		\label{conclusion2} \nonumber
		\mathbb{P}\Big(\max_{i \in \mathcal{J}} &\left|L(\mathcal{M}_{i}) - \hat{L}(\mathcal{M}_{i})\right| \geq (\epsilon \vee \epsilon^{\star})/2\Big) \leq\\
		& m \left|\text{Max } \mathbb{L}(\mathcal{H})\right| \left[\hat{B}_{N,(\epsilon \vee \epsilon^{\star})/4}(d_{VC}(\mathbb{L}(\mathcal{H}))) + B_{N,(\epsilon \vee \epsilon^{\star})/8}(d_{VC}(\mathbb{L}(\mathcal{H})))\right].
		\end{align}
		The result follows combining \eqref{lemma_inside} and \eqref{conclusion2}.	
	\end{proof}
	
	\begin{proof}[Proof of Corollary \ref{corollary}]
		If $\hat{L}(\mathcal{M})$ is estimated by a validation sample or via k-fold cross-validation, the result follows from Theorems \ref{theorem_principal_convergence}, \ref{bound_constant} and \ref{theorem_tipeIII}, and Corollary \ref{cor_typeIV}, since the bounds $B_{N,\epsilon}, \hat{B}_{N,\epsilon}, B^{I}_{M,\epsilon}$ and $B^{II}_{M,\epsilon}$ follow from classical VC Theory applied to the independent training and validation samples, and independent sample $\tilde{\mathcal{D}}_{M}$ \cite{devroye1996,vapnik1998,vapnik2000}.
	\end{proof}
	
	\begin{proof}[Proof of Theorem \ref{bound_constant_reusing}]
		The bound for type I estimation error follows from the inequality
		\begin{align*}
		&\mathbb{P}\Big(\sup\limits_{h \in \hat{\mathcal{M}}} \left|L_{\mathcal{D}_{N}}(h) - L(h)\right| > \epsilon\Big) \\
		& = \mathbb{P}\Big(\sup\limits_{h \in \mathcal{M}^{\star}} \left|L_{\mathcal{D}_{N}}(h) - L(h)\right| > \epsilon,\hat{\mathcal{M}} = \mathcal{M}^{\star}\Big) + \mathbb{P}\Big(\sup\limits_{h \in \hat{\mathcal{M}}} \left|L_{\mathcal{D}_{N}}(h) - L(h)\right| > \epsilon,\hat{\mathcal{M}} \neq \mathcal{M}^{\star}\Big)\\
		&\leq \mathbb{P}\Big(\sup\limits_{h \in \mathcal{M}^{\star}} \left|L_{\mathcal{D}_{N}}(h) - L(h)\right| > \epsilon\Big) + \mathbb{P}\left(\hat{\mathcal{M}} \neq \mathcal{M}^{\star}\right)
		\end{align*}
		by noting that $B_{N,\epsilon}^{I}(d_{VC}(\mathcal{M}^{\star}))$ is a bound for the first probability. With a similar argument, we have the bound for type II estimation error.
	\end{proof}
	
	\begin{proof}[Proof of Corollary \ref{corollary_reuse}]
		If $\hat{L}(\mathcal{M})$ is estimated by a validation sample or via k-fold cross-validation, the result follows from Theorems \ref{theorem_principal_convergence}, \ref{theorem_tipeIII} and \ref{bound_constant_reusing}, and inequality \eqref{triangle}, since the bounds $B_{N,\epsilon}, \hat{B}_{N,\epsilon}, B^{I}_{N,\epsilon}$ and $B^{II}_{N,\epsilon}$ follow from classical VC Theory applied to the independent training and validation samples, and the whole sample $\mathcal{D}_{N}$ \cite{devroye1996,vapnik1998,vapnik2000}.
	\end{proof}

	\begin{proof}[Proof of Proposition \ref{partition_has_ucurve}]
		We first consider the case $m = 1$, that is when the sample is split into a training and validation sample. We start by establishing that if $\mathcal{H}|_{\pi}$ is a strong local minimum of $\mathbb{L}(\mathcal{H})$ and if $\pi \leq \pi_{i}$ then $\hat{L}(\hat{h}_{\pi}) \leq \hat{L}(\hat{h}_{i})$, in which $\hat{h}_{\pi}$ and $\hat{h}_{i}$ are the ERM hypothesis of $\mathcal{H}|_{\pi}$ and $\mathcal{H}|_{\pi_{i}}$ under the training sample, and $\hat{L}$ is the empirical error under the validation sample. Note that for all $\pi_{j} \leq \pi_{i}$ we have 
		\begin{linenomath}
			\begin{equation*}
			\hat{h}_{j}(x) = \hat{h}_{i}(x) \text{ for all } x \in \bigcup\limits_{a \in \pi_{j} \cap \pi_{i}} a.
			\end{equation*}
		\end{linenomath}
		This is the case because, if $a \in \pi_{j}$ and $b \in \pi_{i}$, then for all $x_{1} \in a$ and $x_{2} \in b$
		\begin{linenomath}
			\begin{align}
			\label{def_h_hat}
			\hat{h}_{j}(x_{1}) = \argmaxA\limits_{y \in \{0,1\}} \sum_{k=1}^{N - V_{N}} \frac{\mathds{1}\{Y_{k} = y,X_{k} \in a\}}{\sum_{k=1}^{N - V_{N}} \mathds{1}\{X_{k} \in a\}} & & \hat{h}_{i}(x_{2}) = \argmaxA\limits_{y \in \{0,1\}} \sum_{k=1}^{N - V_{N}} \frac{\mathds{1}\{Y_{k} = y,X_{k} \in b\}}{\sum_{k=1}^{N - V_{N}} \mathds{1}\{X_{k} \in b\}},
			\end{align}
		\end{linenomath}
		so that if $a = b$, then $\hat{h}_{j}(x) = \hat{h}_{i}(x)$ for all $x \in a = b$. Furthermore, if $\pi_{j} \leq \pi_{i}$ and $|\pi_{j}| = |\pi_{i}| - 1$ then
		\begin{linenomath}
			\begin{align*}
			\pi_{i} \setminus \big(\pi_{j} \cap \pi_{i}\big) = \{a_{1},a_{2}\} & & \pi_{j} = \big(\pi_{j} \cap \pi_{i}\big) \cup \{a_{1} \cup a_{2}\},
			\end{align*}
		\end{linenomath}	
		as $\pi_{i}$ is obtained from $\pi_{j}$ by partitioning a block of it into two blocks $a_{1}, a_{2}$. From (\ref{def_h_hat}) we can establish that
		\begin{linenomath}
			\begin{align*}
			\hat{h}_{i}(x) = \hat{h}_{j}(x), & & \text{ for all } x \in \bigcup\limits_{a \in \pi_{j} \cap \pi_{i}} a \cup a_{1} \text{ or for all } x \in \bigcup\limits_{a \in \pi_{j} \cap \pi_{i}} a \cup a_{2}.
			\end{align*}
		\end{linenomath}	
		Indeed, if 
		\begin{linenomath}
			\begin{align}
			\label{ystar}
			y^{\star} \coloneqq \argmaxA\limits_{y \in \{0,1\}} \sum_{k=1}^{N - V_{N}} \frac{\mathds{1}\{Y_{k} = y,X_{k} \in a_{1} \cup a_{2}\}}{\sum_{k=1}^{N - V_{N}} \mathds{1}\{X_{k} \in a_{1} \cup a_{2}\}}	
			\end{align}
		\end{linenomath}	
		then either
		\begin{linenomath}
			\begin{align}
			\label{ystar_1} \nonumber
			y^{\star}_{1} &\coloneqq \argmaxA\limits_{y \in \{0,1\}} \sum_{k=1}^{N - V_{N}} \frac{\mathds{1}\{Y_{k} = y,X_{k} \in a_{1}\}}{\sum_{k=1}^{N - V_{N}} \mathds{1}\{X_{k} \in a_{1}\}} = y^{\star} \text{ or} \\
			y^{\star}_{2} &\coloneqq \argmaxA\limits_{y \in \{0,1\}} \sum_{k=1}^{N - V_{N}} \frac{\mathds{1}\{Y_{k} = y,X_{k} \in a_{2}\}}{\sum_{k=1}^{N - V_{N}} \mathds{1}\{X_{k} \in a_{2}\}} = y^{\star}
			\end{align}
		\end{linenomath}
		as the ratio in (\ref{ystar}) with $y = y^{\star}$ is a weighted mean of the ratios in (\ref{ystar_1}) with $y = y^{\star}$, so that if it is greater than $1/2$, as is the case when $y = y^{\star}$, the maximum of the ratios in (\ref{ystar_1}) is greater than $1/2$ when $y = y^{\star}$, as the maximum is not lesser than the weighted mean. This establishes that at least one of the $y^{\star}_{1}, y^{\star}_{2}$ is equal to $y^{\star}$.
		
		Suppose that $\mathcal{H}|_{\pi}, \pi \in \mathcal{F}$, is a strong local minimum of $\mathbb{L}(\mathcal{H})$. Then, if $\pi \leq \pi_{i}$ and $|\pi| = |\pi_{i}| - 1$, denoting $A = \bigcup\limits_{a \in \pi \cap \pi_{i}} a$, we have that
		\begin{linenomath}
			\begin{align}
			\label{minimum_integral}
			\nonumber
			\hat{L}(\hat{h}_{\pi}) \leq \hat{L}(\hat{h}_{i}) & = \int_{A \times \{0,1\}} \ell(\hat{h}_{i}(x),y) \ d\hat{P}(x,y) + \int_{A^{c} \times \{0,1\}} \ell(\hat{h}_{i}(x),y) \ d\hat{P}(x,y) \\
			& = \int_{A \times \{0,1\}} \ell(\hat{h}_{\pi}(x),y) \ d\hat{P}(x,y) + \int_{A^{c} \times \{0,1\}} \ell(\hat{h}_{i}(x),y) \ d\hat{P}(x,y)
			\end{align}
		\end{linenomath}
		as $\hat{h}_{i}(x) = \hat{h}_{\pi}(x)$ for $x \in A$, in which $\hat{P}$ is the empirical measure of the validation sample. We have that $A^{c} = a_{1} \cup a_{2}$, with $a_{1} \cap a_{2} = \emptyset$, $a_{1}, a_{2} \in \pi_{i}$, and 
		\begin{linenomath}
			\begin{align}
			\label{partition_integral}
			\int_{A^{c} \times \{0,1\}} &\ell(\hat{h}_{i}(x),y) \ d\hat{P}(x,y) = \int_{a_{1} \times \{0,1\}} \ell(\hat{h}_{i}(x),y) \ d\hat{P}(x,y) + \int_{a_{2} \times \{0,1\}} \ell(\hat{h}_{i}(x),y) \ d\hat{P}(x,y) 
			\end{align}
		\end{linenomath}
		so that, as $\mathcal{H}|_{\pi}$ is a strong local minimum, by substituting (\ref{partition_integral}) in (\ref{minimum_integral}), we have 
		\begin{linenomath}
			\begin{align} 
			\nonumber
			\label{condition_ucurve_partition}
			\int_{a_{1} \times \{0,1\}} \ell(\hat{h}_{i}(x),y) \ d\hat{P}(x,y) & \geq \int_{a_{1} \times \{0,1\}} \ell(\hat{h}_{\pi}(x),y) \ d\hat{P}(x,y) \\
			\int_{a_{2} \times \{0,1\}} \ell(\hat{h}_{i}(x),y) \ d\hat{P}(x,y) & \geq \int_{a_{2} \times \{0,1\}} \ell(\hat{h}_{\pi}(x),y) \ d\hat{P}(x,y)
			\end{align}	
		\end{linenomath}
		with equality holding for at least one of the two inequalities by (\ref{ystar_1}). 
		
		As $\mathcal{H}|_{\pi}$ is a strong local minimum, condition (\ref{condition_ucurve_partition}) holds for any $a_{1} \subset b \in \pi$ by taking $a_{2} = b \setminus a_{1}$. Indeed, let $a_{1} \subset b \in \pi$ be arbitrary and consider $\pi^{\star} = (\pi \setminus \{b\}) \cup \{a_{1},b \setminus a_{1}\}$. Then clearly $\pi \leq \pi^{\star}$ and $|\pi| = |\pi^{\star}| - 1$, so (\ref{condition_ucurve_partition}) follows.
		To end the proof, we note that if $\pi \leq \pi_{i}$ then
		\begin{linenomath}
			\begin{align*}
			&\hat{L}(\hat{h}_{i})  = \int_{A \times \{0,1\}} \ell(\hat{h}_{\pi}(x),y) \ d\hat{P}(x,y) + \sum_{i=1}^{p} \int_{a_{i} \times \{0,1\}} \ell(\hat{h}_{i}(x),y) \ d\hat{P}(x,y) \\
			& \geq \int_{A \times \{0,1\}} \ell(\hat{h}_{\pi}(x),y) \ d\hat{P}(x,y) + \sum_{i=1}^{p} \int_{a_{i} \times \{0,1\}} \ell(\hat{h}_{\pi}(x),y) \ d\hat{P}(x,y) = \hat{L}(\hat{h}_{\pi})
			\end{align*}
		\end{linenomath}
		in which the $a_{1}, \dots, a_{p}$ is a partition of $A^{c}$, as for all $a_{i}$ there exists a $b_{i} \in \pi$ such that $a_{i} \subset b_{i}$, which follows from the fact that $\pi \leq \pi_{i}$, so we may apply inequality (\ref{condition_ucurve_partition}) to each parcel of the sum.
		
		To establish that $\mathcal{H}|_{\pi}$ is a global minimum of every continuous chain that contains it, it is enough to show that $\pi_{i} \leq \pi$ implies $\hat{L}(\hat{h}_{i}) \geq \hat{L}(\hat{h}_{\pi})$. This can be done analogously by supposing that there exists $a_{1}, a_{2} \in \pi$ and $a_{1} \cup a_{2} \in \pi_{i}$, when $|\pi| = |\pi_{i}| + 1$, and proceeding similarly.
		
		To show the result for $m > 1$ we may repeat the proof above to each term in sum \eqref{form_Lhat}, since they each represent a pair of independent training and validation samples, to establish that if $H|_{\pi}$ is a strong local minimum then
		\begin{align}
		\label{faj}
		\hat{L}^{(j)}(\hat{h}^{(j)}_{\pi}) \leq \hat{L}^{(j)}(\hat{h}^{(j)}_{i}), & & \forall j = 1, \dots, m
		\end{align}
		for any $\pi_{i}$ such that $\pi \leq \pi_{i}$ or $\pi_{i} \leq \pi$. Summing \eqref{faj} for $j$ from $1$ to $m$ and dividing by $m$ we have the result.
	\end{proof}

	\begin{proof}[Proof of Theorem \ref{general_theorem}]
		Let $\mathcal{M} \in \mathbb{L}(\mathcal{H})$ be a strong local minimum. We first show that if $\mathcal{M}_{i}, \mathcal{M}_{j} \in \mathcal{C}^{+}(\mathcal{M}), \mathcal{M}_{i} \subset \mathcal{M}_{j},$ then $\hat{L}(\mathcal{M}_{i}) \leq \hat{L}(\mathcal{M}_{j})$, which implies that $\mathcal{M}$ is a global minimum of $\mathcal{C}^{+}(\mathcal{M})$, as it is its least element.	Let $k^{\star}$ be the size of the greatest continuous chain in $\mathcal{C}^{+}(\mathcal{M})$ which contains $\mathcal{M}$ and define for $\mathcal{M}_{1} \subset \mathcal{M}_{2} \in \mathbb{L}(\mathcal{H})$
		\begin{linenomath}
			\begin{equation*}
			D(\mathcal{M}_{1},\mathcal{M}_{2}) = \max\Big\{\text{Length of continuous chain starting in } \mathcal{M}_{1} \text{ and ending in } \mathcal{M}_{2}\Big\}.
			\end{equation*}
		\end{linenomath}
		We may partition $\mathcal{C}^{+}(\mathcal{M})$ by the greatest position each subset occupies in a continuous chain starting in $\mathcal{M}$:
		\begin{linenomath}
			\begin{align*}
			\mathcal{C}^{+}(\mathcal{M},k) \nonumber = \Big\{&\mathcal{M}_{i} \in \mathcal{C}^{+}(\mathcal{M}): D(\mathcal{M}_{i},\mathcal{M}) = k\Big\}
			\end{align*}
		\end{linenomath}
		for $2 \leq k \leq k^{\star}$ and $\mathcal{C}^{+}(\mathcal{M},1) = \{\mathcal{M}\}$. 
		
		By hypothesis, $\hat{L}(\mathcal{M}) \leq \hat{L}(\mathcal{M}_{i})$ for all $\mathcal{M}_{i} \in \mathcal{C}^{+}(\mathcal{M},2)$ as $\mathcal{M}$ is a strong local minimum. We proceed by induction. Assume, for a $k \geq 3$, that $\hat{L}(\mathcal{M}_{j}) \leq \hat{L}(\mathcal{M}_{i})$ for all $\mathcal{M}_{i}, \mathcal{M}_{j} \in \cup_{l=1}^{k-1} \mathcal{C}^{+}(\mathcal{M},l)$ when $\mathcal{M}_{j} \subset \mathcal{M}_{i}$. Fix $\mathcal{M}_{j} \in \mathcal{C}^{+}(\mathcal{M},k)$ and $\mathcal{M}_{j}^{(1)} \in \mathcal{C}^{+}(\mathcal{M},k-1)$ with $\mathcal{M}_{j}^{(1)} \subset \mathcal{M}_{j}$. Note that $\mathcal{M}_{j}^{(1)} \in N^{+}(\mathcal{M}_{j}) \subset \cup_{l=1}^{k-1} \mathcal{C}^{+}(\mathcal{M},l)$ and, as $|N^{+}(\mathcal{M}_{j})| \geq 2$, since $\mathbb{L}(\mathcal{H})$ is U-curve compatible, there exists another $\mathcal{M}_{j}^{(2)} \in N^{+}(\mathcal{M}_{j})$ such that $\mathcal{M}_{j} = \mathcal{M}_{j}^{(1)} \vee \mathcal{M}_{j}^{(2)}$. Therefore,
		\begin{linenomath}
			\begin{align}
			\label{inequality_union}
			\hat{L}(\mathcal{M}_{j}) &\geq \hat{L}(\mathcal{M}_{j}^{(1)}) + \hat{L}(\mathcal{M}_{j}^{(2)}) - \hat{L}(\mathcal{M}_{j}^{(1)} \wedge \mathcal{M}_{j}^{(2)}) \geq \hat{L}(\mathcal{M}_{j}^{(1)})
			\end{align}
		\end{linenomath}
		by hypothesis, as $\hat{L}(\mathcal{M}_{j}^{(2)}) - \hat{L}(\mathcal{M}_{j}^{(1)} \wedge \mathcal{M}_{j}^{(2)}) \geq 0$ since $\mathcal{M}_{j}^{(1)} \wedge \mathcal{M}_{j}^{(2)} \subset \mathcal{M}_{j}^{(2)}$ and both are in $\cup_{l=1}^{k-1}
		\mathcal{C}^{+}(\mathcal{M},l)$. From (\ref{inequality_union}) and the induction hypothesis it follows that $\hat{L}(\mathcal{M}_{i}) \leq \hat{L}(\mathcal{M}_{j})$ for all $\mathcal{M}_{i}, \mathcal{M}_{j} \in \mathcal{C}^{+}(\mathcal{M}), \mathcal{M}_{i} \subset \mathcal{M}_{j},$ as there is a $\mathcal{M}_{j}^{(1)} \in N^{+}(\mathcal{M}_{j})$ such that $\mathcal{M}_{i} \subset \mathcal{M}_{j}^{(1)} \subset \mathcal{M}_{j}$ which implies $\hat{L}(\mathcal{M}_{i}) \leq \hat{L}(\mathcal{M}_{j}^{(1)}) \leq \hat{L}(\mathcal{M}_{j})$.
		
		With an analogous deduction and the inequality
		\begin{linenomath}
			\begin{align*}
			\hat{L}(\mathcal{M}_{1} \wedge \mathcal{M}_{2}) \geq \hat{L}(\mathcal{M}_{1}) + \hat{L}(\mathcal{M}_{2}) - \hat{L}(\mathcal{M}_{1} \vee \mathcal{M}_{2})
			\end{align*}
		\end{linenomath}
		we can show that if $\mathcal{M}_{i}, \mathcal{M}_{j} \in \mathcal{C}^{-}(\mathcal{M}), \mathcal{M}_{i} \subset \mathcal{M}_{j},$ then $\hat{L}(\mathcal{M}_{i}) \geq \hat{L}(\mathcal{M}_{j})$, which implies that $\mathcal{M}$ is also the global minimum of $\mathcal{C}^{-}(\mathcal{M})$, as it is its greatest element.
	\end{proof}
	
	\begin{remark}
		\normalfont
		From the proof of Theorem \ref{general_theorem} one can deduce that its premises may be loosened. Condition (\ref{condS}) need not be satisfied by \textit{all} $\mathcal{M}_{1}, \mathcal{M}_{2} \in \mathbb{L}(\mathcal{H})$ such that $d(\mathcal{M}_{i},\mathcal{M}_{1} \wedge \mathcal{M}_{2}) = d(\mathcal{M}_{i},\mathcal{M}_{1} \vee \mathcal{M}_{2}) = 1, i = 1,2$. It is necessary only that for every $\mathcal{M} \in \mathbb{L}(\mathcal{H})$ and every $\mathcal{M}_{1} \in \mathcal{C}^{+}(\mathcal{M})\setminus\{\mathcal{M}\} \ (\mathcal{C}^{-}(\mathcal{M})\setminus\{\mathcal{M}\})$ with $d(\mathcal{M},\mathcal{M}_{1}) > 1$, there are $\mathcal{M}_{1}^{(i)} \in \mathcal{C}^{+}(\mathcal{M})\setminus\{\mathcal{M}\} \ (\mathcal{C}^{-}(\mathcal{M})\setminus\{\mathcal{M}\}), i = 1,2$, such that
		$\mathcal{M}_{1} = \mathcal{M}_{1}^{(1)} \vee \mathcal{M}_{1}^{(2)} (= \mathcal{M}_{1}^{(1)} \wedge \mathcal{M}_{1}^{(2)})$, and condition (\ref{condS}) is satisfied by $\mathcal{M}_{1}^{(1)}, \mathcal{M}_{1}^{(2)}$. Moreover, Theorem \ref{general_theorem} may be adapted to $\mathbb{L}(\mathcal{H})$ which is not a lattice by adding further constraints to it.
	\end{remark}
	
	\section{Partition Lattice and Lattice Convexity} 
	\label{counter_part}
	
	In this section, we discuss why condition \eqref{condS} is not satisfied by the Partition Lattice Learning Space. As an example of a pair which satisfies the condition, consider the one generated by $\pi_{1} = \{a_{1},a_{2},b,c\}$ and $\pi_{2} = \{a,b_{1},b_{2},c\}$ in which $a = a_{1} \cup a_{2}$ and $b = b_{1} \cup b_{2}$. Now, to get from $\pi_{1} \wedge \pi_{2} = \{a,b,c\}$ to $\pi_{1} \vee \pi_{2} = \{a_{1},a_{2},b_{1},b_{2},c\}$ in this case we perform two partition breaks, each one in distinct partitions and increasing the estimated out-of-sample error by $l_{1}, l_{2} \in \mathbb{R}$, independently of the order of such breaks, by the argument at inequality (\ref{condition_ucurve_partition}). From this, we may conclude that, say,
	\begin{linenomath}
		\begin{align*}
		\hat{L}(\mathcal{H}|_{\pi_{1}}) = \hat{L}(\mathcal{H}|_{\pi_{1} \wedge \pi_{2}}) + l_{1} & & \hat{L}(\mathcal{H}|_{\pi_{2}}) = \hat{L}(\mathcal{H}|_{\pi_{1} \wedge \pi_{2}}) + l_{2} 
		\end{align*}
	\end{linenomath}
	and
	\begin{linenomath}
		\begin{align*}
		\hat{L}(\mathcal{H}|_{\pi_{1} \vee \pi_{2}}) = \hat{L}(\mathcal{H}|_{\pi_{1}}) + l_{2} = \hat{L}(\mathcal{H}|_{\pi_{2}}) + l_{1} = \hat{L}(\mathcal{H}|_{\pi_{1} \wedge \pi_{2}}) + l_{1} + l_{2}.
		\end{align*}
	\end{linenomath}
	Hence
	\begin{linenomath}
		\begin{align*}
		\hat{L}(\mathcal{H}|_{\pi_{1} \wedge \pi_{2}}) + \hat{L}(\mathcal{H}|_{\pi_{1} \vee \pi_{2}}) &= \hat{L}(\mathcal{H}|_{\pi_{1}}) + \hat{L}(\mathcal{H}|_{\pi_{2}})
		\end{align*}
	\end{linenomath}
	which implies the condition of Theorem \ref{condS}.
	
	On the other hand, consider partitions $\pi_{1} = \{a_{1} \cup a_{2},a_{3},b\}$ and $\pi_{2} = \{a_{1}\cup a_{3},a_{2},b\}$, and denote $a = a_{1} \cup a_{2} \cup a_{3}$. To get from $\pi_{1} \wedge \pi_{2} = \{a,b\}$ to $\pi_{1} \vee \pi_{2} = \{a_{1},a_{2},a_{3},b\}$ we perform two partition breaks on the same root partition block $a$. In this case, the order in which $a$ is broken may influence the increment of the estimated out-of-sample error and the condition of Theorem \ref{partition_has_ucurve} may be violated. Consider the empirical distributions obtained from a training and validation samples, presented in Table \ref{tabEmp}. In this example,
	\begin{align*}
	\begin{cases}
	\hat{L}(\mathcal{H}|_{\pi_{1} \wedge \pi_{2}}) = 0.48\\
	\hat{L}(\mathcal{H}|_{\pi_{1}}) = 0.44 \\
	\hat{L}(\mathcal{H}|_{\pi_{2}}) = 0.41 \\  
	\hat{L}(\mathcal{H}|_{\pi_{1} \vee \pi_{2}}) = 0.33\\
	\end{cases} & \implies & 
	\begin{cases}
	\hat{L}(\mathcal{H}|_{\pi_{1} \wedge \pi_{2}}) + \hat{L}(\mathcal{H}|_{\pi_{1} \vee \pi_{2}}) = 0.81  \\
	\hat{L}(\mathcal{H}|_{\pi_{1}}) +  \hat{L}(\mathcal{H}|_{\pi_{2}}) = 0.85
	\end{cases}
	\end{align*}
	so condition \eqref{condS} is not satisfied.
	
	When partition $\{a_{1},a_{2},a_{2}\}$ is broken into $\{a_{1},a_{3}\},a_{2}$ the error decreases $0.07$ while when we perform the break of partition $\{a_{1},a_{2}\}$ into $a_{1},a_{2}$ it decreases $0.11$ hence the variation of the error when we perform a break to free point $a_{2}$ depends on the order we perform the break, that is, if we free $a_{3}$ before or after freeing $a_{2}$. 
	
	\begin{table}[H]
		\centering
		\caption{\footnotesize Empirical distributions of a training and validation samples when $\mathcal{X} = \{a_{1},a_{2},a_{3},b\}$ is a set with four points, and estimated hypothesis for partitions $\pi_{1} \wedge \pi_{2} = \{\{a_{1},a_{2},a_{3}\},b\}, \pi_{1} = \{\{a_{1},a_{2}\},a_{3},b\}, \pi_{2} = \{\{a_{1},a_{3}\},a_{2},b\}$ and $\pi_{1} \vee \pi_{2} = \{a_{1},a_{2},a_{3},b\}$.} \label{tabEmp}
		\begin{tabular}{|c|cc|cc|cccc|}
			\hline
			\multirow{2}{*}{$\mathcal{X}$} & \multicolumn{2}{c|}{Training} & \multicolumn{2}{c|}{Validation} & \multirow{2}{*}{$\hat{h}_{\pi_{1} \wedge \pi_{2}}$} & \multirow{2}{*}{$\hat{h}_{\pi_{1}}$} & \multirow{2}{*}{$\hat{h}_{\pi_{2}}$} & \multirow{2}{*}{$\hat{h}_{\pi_{1} \vee \pi_{2}}$} \\ \cline{2-5}
			& 0 & 1 & 0 & 1 & & & & \\
			\hline
			$a_{1}$ & 0.18 & 0.07 & 0.2 & 0.05 & 1 & 0 & 0 & 0\\
			$a_{2}$ & 0.02 & 0.11 & 0.01 & 0.12 & 1 & 0 & 1 & 1\\
			$a_{3}$ & 0.01 & 0.11 & 0.02 & 0.10 & 1 & 1 & 0 & 1\\
			$b$ & 0.2 & 0.3 & 0.25 & 0.25 & 1 & 1 & 1 & 1 \\
			\hline
		\end{tabular}	
	\end{table}

	\FloatBarrier
	\vskip 0.2in
	\bibliographystyle{plain}
	\bibliography{ref}

\end{document}